\documentclass{article}

\usepackage[square]{natbib}

\usepackage{microtype}
\usepackage{graphicx}
\usepackage{subcaption}
\usepackage{booktabs}

\usepackage[preprint]{icml2026}
\usepackage{hyperref}
\usepackage{amsmath}
\PassOptionsToPackage{table}{xcolor}
\usepackage{lipsum,booktabs}
\usepackage{amsmath,mathrsfs,amssymb,amsfonts,bm,enumitem}
\usepackage{rotating}
\usepackage{pdflscape}
\usepackage{tcolorbox}
\usepackage{url,dsfont,nicefrac}
\usepackage{multirow}
\usepackage{makecell}
\usepackage{anyfontsize}
\usepackage{bbding}
\usepackage{xspace}
\usepackage{enumitem}
\hypersetup{
    colorlinks,
    breaklinks,
    linkcolor = blue,
    citecolor = blue,
    urlcolor  = blue,
}
\allowdisplaybreaks
\usepackage{appendix}
\usepackage{multirow,makecell,tabularx}
\usepackage{xfrac}
\usepackage{nicefrac}
\usepackage{threeparttable}
\usepackage{pifont}
\usepackage{tikz}
\usepackage{wrapfig}
\usepackage{nicefrac}
\usepackage{algorithmic,algorithm}
\usepackage{accents}

\setlist[itemize]{leftmargin=5.5mm}

\definecolor{MyWineRed}{RGB}{200, 20, 20}
\definecolor{MyDarkGreen}{RGB}{0, 160, 0}
\newcommand{\cmark}{{\color{black}\ding{51}}}%
\newcommand{\xmark}{{\color{black}\ding{55}}}%

\newcommand{\cxmark}{{\color{black}\ding{51}\kern-0.642em\ding{55}}}

\renewcommand{\tilde}{\widetilde}
\renewcommand{\hat}{\widehat}

\def \A {\mathcal{A}}

\def \E {\mathbb{E}}

\def \F {\mathcal{F}}

\def \K {\mathcal{K}}

\def \O {\mathcal{O}}

\def \Q {\mathcal{Q}}

\def \R {\mathbb{R}}
\def \S {\mathcal{S}}

\def \X {\mathcal{X}}

\def \g {\mathbf{g}}

\def \x {\mathbf{x}}
\def \y {\mathbf{y}}

\def \gh {\hat{\g}}

\def \xh {\hat{\x}}
\def \xb {\overline{\x}}
\def \xt {\tilde{\x}}

\def \Ot {\tilde{\O}}

\def \ellh {\widehat{\ell}}
\def \ellt {\widetilde{\ell}}

\def \is {i_{\star}}
\def \xs {\x^{\star}}

\def \sumT {\sum_{t=1}^T}
\def \sumt {\sum_{s=1}^t}

\def \define {\triangleq}

\let \eps \epsilon
\def \epsilon {\varepsilon}

\usepackage{mathtools}
\let\norm\undefined 
\DeclarePairedDelimiter\norm{\lVert}{\rVert}
\DeclarePairedDelimiter\abs{\lvert}{\rvert}
\newcommand\inner[2]{\langle #1, #2 \rangle}

\newcommand\sbr[1]{\left( #1 \right)}
\newcommand\mbr[1]{\left[ #1 \right]}
\newcommand\bbr[1]{\left\{ #1 \right\}}

\def \Reg {\textsc{Reg}}

\DeclareMathOperator*{\poly}{poly}
\DeclareMathOperator*{\argmax}{arg\,max}
\DeclareMathOperator*{\argmin}{arg\,min}

\newcommand\given[1][]{\:#1\vert\:}

\usepackage{amsthm}
\newtheorem{myThm}{Theorem}

\newtheorem{myLemma}{Lemma}

\theoremstyle{definition}
\newtheorem{myAssum}{Assumption}

\newtheorem{myDef}{Definition}

\newtheorem{myRemark}{Remark}

\newtheoremstyle{proofsketchstyle}{}{} {} {} {\itshape} {.}{ } {\thmname{#1}\thmnote{ #3}} 

\theoremstyle{proofsketchstyle}

\newcounter{romancounter}
\newcommand{\rom}[1]{\setcounter{romancounter}{#1}\textit{(\roman{romancounter})}}

\usepackage{graphicx,color}

\definecolor{myred}{RGB}{192,0,1}

\definecolor{myblue}{RGB}{68,114,196}

\definecolor{mygreen}{RGB}{0,128,0}

\definecolor{myorange}{rgb}{0.84, 0.40, 0.15}

\usepackage{prettyref}
\newcommand{\pref}[1]{\prettyref{#1}}

\newcommand{\savehyperref}[2]{\texorpdfstring{\hyperref[#1]{#2}}{#2}}
\newrefformat{eq}{\savehyperref{#1}{Eq.~\textup{(\ref*{#1})}}}
\newrefformat{eqn}{\savehyperref{#1}{Eq.~(\ref*{#1})}}
\newrefformat{lem}{\savehyperref{#1}{Lemma~\ref*{#1}}}
\newrefformat{lemma}{\savehyperref{#1}{Lemma~\ref*{#1}}}
\newrefformat{def}{\savehyperref{#1}{Definition~\ref*{#1}}}
\newrefformat{line}{\savehyperref{#1}{Line~\ref*{#1}}}
\newrefformat{thm}{\savehyperref{#1}{Theorem~\ref*{#1}}}
\newrefformat{table}{\savehyperref{#1}{Table~\ref*{#1}}}
\newrefformat{corr}{\savehyperref{#1}{Corollary~\ref*{#1}}}
\newrefformat{cor}{\savehyperref{#1}{Corollary~\ref*{#1}}}
\newrefformat{sec}{\savehyperref{#1}{Section~\ref*{#1}}}
\newrefformat{subsec}{\savehyperref{#1}{Section~\ref*{#1}}}
\newrefformat{app}{\savehyperref{#1}{Appendix~\ref*{#1}}}
\newrefformat{appendix}{\savehyperref{#1}{Appendix~\ref*{#1}}}
\newrefformat{assum}{\savehyperref{#1}{Assumption~\ref*{#1}}}
\newrefformat{ex}{\savehyperref{#1}{Example~\ref*{#1}}}
\newrefformat{fig}{\savehyperref{#1}{Figure~\ref*{#1}}}
\newrefformat{alg}{\savehyperref{#1}{Algorithm~\ref*{#1}}}
\newrefformat{remark}{\savehyperref{#1}{Remark~\ref*{#1}}}
\newrefformat{conj}{\savehyperref{#1}{Conjecture~\ref*{#1}}}
\newrefformat{prop}{\savehyperref{#1}{Proposition~\ref*{#1}}}
\newrefformat{proto}{\savehyperref{#1}{Protocol~\ref*{#1}}}
\newrefformat{prob}{\savehyperref{#1}{Problem~\ref*{#1}}}
\newrefformat{claim}{\savehyperref{#1}{Claim~\ref*{#1}}}
\newrefformat{que}{\savehyperref{#1}{Question~\ref*{#1}}}
\newrefformat{op}{\savehyperref{#1}{Open Problem~\ref*{#1}}}
\newrefformat{fn}{\savehyperref{#1}{Footnote~\ref*{#1}}}
\newrefformat{require}{\savehyperref{#1}{Requirement~\ref*{#1}}}

\usepackage{array} 
\usepackage{tabularx} 

\definecolor{colorMain}{HTML}{16A085}   
\definecolor{colorVar_1}{HTML}{2980B9}    
\definecolor{colorVar_2}{HTML}{16A085}   
\definecolor{colorSmall}{rgb}{0.45, 0.45, 0.45}
\definecolor{colorErr}{HTML}{5B2C6F}

\def \var {\textsc{Var}}
\def \UniXGrad {\textsc{UniXGrad}\xspace}
\def \Holder {Hölder\xspace}
\def \udog {\textsc{U-DoG}\xspace}
\def \out {\textnormal{out}}

\begin{document}

\twocolumn[
    \icmltitle{Towards Fully Parameter-Free Stochastic Optimization:\\ Grid Search with Self-Bounding Analysis}

    \icmlsetsymbol{equal}{*}

    \begin{icmlauthorlist}
        \icmlauthor{Yuheng Zhao}{NJU_keylab,NJU_AI}
        \icmlauthor{Yu-Hu Yan}{NJU_keylab,NJU_AI}
        \icmlauthor{Amit Attia}{Tel}
        \icmlauthor{Tomer Koren}{Tel}
        \icmlauthor{Lijun Zhang}{NJU_keylab,NJU_AI}
        \icmlauthor{Peng Zhao}{NJU_keylab,NJU_AI}
    \end{icmlauthorlist}

    \icmlaffiliation{NJU_keylab}{National Key Laboratory for Novel Software Technology, Nanjing University, China}
    \icmlaffiliation{NJU_AI}{School of Artificial Intelligence, Nanjing University, China}
    \icmlaffiliation{Tel}{Tel Aviv University}

    \icmlcorrespondingauthor{Peng Zhao}{zhaop@lamda.nju.edu.cn}

    \icmlkeywords{}

    \vskip 0.3in
]
\printAffiliationsAndNotice{}

\begin{abstract}
    Parameter-free stochastic optimization aims to design algorithms that are agnostic to the underlying problem parameters while still achieving convergence rates competitive with optimally tuned methods.
    While some parameter-free methods do not require the specific values of the problem parameters, they still rely on prior knowledge, such as the lower or upper bounds of them. 
    We refer to such methods as \mbox{``partially parameter-free''}.
    In this work, we target achieving ``\emph{fully} parameter-free'' methods, i.e., the algorithmic inputs do not need to satisfy any \emph{unverifiable} condition related to the true problem parameters.
    We propose a powerful and general \emph{grid search} framework, named \textsc{Grasp}, with a novel \mbox{\emph{self-bounding}} analysis technique that effectively determines the search ranges of parameters, in contrast to previous work.
    Our method demonstrates generality in:
    \rom{1} the non-convex case, where we propose a fully parameter-free method that achieves near-optimal convergence rate, up to logarithmic factors;
    \rom{2} the convex case, where our parameter-free methods are competitive with strong performance in terms of acceleration and universality.
    Finally, we contribute a sharper guarantee for the model ensemble, a final step of the grid search framework, under interpolated variance characterization.
\end{abstract}


\section{Introduction}
\label{sec:intro}
Stochastic optimization is a fundamental problem in machine learning and optimization~\citep{bottou2018optimization,Book:Lan-SCO}, with applications across various domains, such as the training of large-scale models~\citep{ICLR'14:Adam,ICLR'19:AdamW}, reinforcement learning~\citep{schulman2017DPO,shao2024GRPO}, and so on.

One of the most fundamental stochastic optimization methods is Stochastic Gradient Descent (SGD)~\citep{SGD}, which uses stochastic gradients estimated from mini-batch samples for the update.
The success of SGD depends largely on the choice of step size. In practice, a common approach is to define lower and upper bounds for the step size and perform a grid search within this range to find the optimal tuning. 
Typically, heuristic bounds such as $10^{-5}$ and $10^1$ are used, which lack theoretical guarantees.

Meanwhile, optimization theory suggests that the optimal step size depends on problem parameters, such as the gap between the initial point and the optimum, the properties of the objective function (e.g., smoothness coefficient, noise bound)~\citep{ghadimi2013stochastic}, etc.
Since these parameters are often difficult to access in practice, research has increasingly focused on \emph{parameter-free} stochastic optimization with notable recent developments~\citep{faw2022power,kavis2022high,attia2023sgd,liu2023high,ICML'24:Amit,khaled2024tuningfree,ICML'23:DoG,ICML'23:Defazio,ICML'23:DoG,COLT'24:Kreisler}, where convex and non-convex methods are developed without specific values of the problem parameters. However, despite aiming for parameter-freeness, most works still require inputs tied to certain problem parameters, e.g., the state-of-the-art method in stochastic convex optimization, \udog~\citep{COLT'24:Kreisler}, requires a lower bound for the initial distance to the optimum as an algorithmic input.

We introduce a more rigorous characterization of parameter-freeness to distinguish between ``\emph{partially}'' or ``\emph{fully}'' parameter-free methods, enabling clearer comparisons across existing approaches. Specifically, an algorithm is considered fully parameter-free with respect to a problem parameter $X$ if it is not only agnostic to the specific value of $X$ but also does not rely on any \emph{unverifiable} condition related to $X$ (e.g. upper or lower bounds). Otherwise, it is classified as  \emph{partially} parameter-free.

\begin{table*}[!t]
    \centering
    \caption{Comparison of parameter-freeness and convergence rates for stochastic optimization. 
    Notations: $\bar{X} \define \max\{X, X_\epsilon\}$ for $X \in \{L_\ell, F_\ell, L_\nu, d_0\}$ with \emph{any} $X_\epsilon>0$; 
    `\xmark', `\cxmark', and `\cmark' denote no, partial, and full parameter-freeness;
    $\delta$ is the confidence level.}
    \label{table:combined_comparison}

    \begin{subtable}{\textwidth}
        \centering
        \caption{\small{\textbf{Non-convex case.} $L_\ell$ is smoothness parameter, $F_\ell \define \ell(\x^0) - \ell(\xs)$, $\Delta_\ell$ is maximum gradient noise, $\hat{\mathbf{g}}^0$ is empirical value of $\nabla\ell(\x^0)$. The result of~\citet{ICML'24:Amit} is partially free because the lower and upper bounds of the learning rate, i.e., $\eta_{\min}$ and $\eta_{\max}$, require prior knowledge of the problem parameters.}}
        \renewcommand{\arraystretch}{1.3}
        \resizebox{0.9\textwidth}{!}{
        \begin{tabular}{c | *{3}{w{c}{0.6cm}} | c}
        \hline 
        
        \hline
        \multirow{2}{*}{\textbf{Reference}} & \multicolumn{3}{c|}{\textbf{Freeness}} & \multirow{2}{*}{\textbf{Convergence Rate of $\norm{\nabla \ell(\x^{\textnormal{out}})}$}} \\ \cline{2-4}
        & $L_\ell$ & $F_\ell$ & $\Delta_\ell$ & \\ \hline
        \rule{0pt}{5.5mm}
        Tuned SGD~\citep{ghadimi2013stochastic} & \xmark & \xmark & \xmark & $\sqrt{\frac{{L}_\ell{F}_\ell {\Delta}_\ell^2}{T} } + \frac{{L}_\ell{F}_\ell + \Delta_{\ell}^2(\log\frac{1}{\delta})}{T}$ \\[1mm] \hline
        \rule{0pt}{6mm}
        \citet[Theorem 1]{ICML'24:Amit} & \cxmark & \cxmark & \cxmark & $ \sqrt{\frac{L_{\ell} F_\ell \Delta_{\ell}^2}{T} \sbr{\log\frac{\eta_{\max }}{\eta_{\min }}} \sbr{\log\frac{1}{\delta}}} + \frac{L_{\ell} F_\ell+\Delta_{\ell}^2 (\log \frac{1}{\delta})}{T} \sbr{\log\frac{\eta_{\max }}{\eta_{\min }}} \sbr{\log\frac{1}{\delta}}$ \\[2mm] \hline
        \rule{0pt}{6mm}
        \textbf{Ours~[\pref{thm:non-convex}]} & \cmark & \cmark & \cmark & $\sqrt{\frac{\bar{L}_\ell\bar{F}_\ell {\Delta}_\ell^2}{T} \sbr{\log \frac{T \norm{\gh^0}}{L_\epsilon F_\epsilon}} } + \frac{\bar{L}_\ell\bar{F}_\ell + \Delta_{\ell}^2(\log\frac{1}{\delta})^2}{T}\sbr{\log \frac{T \norm{\gh^0}}{L_\epsilon F_\epsilon}}$ \\[2mm] \hline 
        
        \hline
        \end{tabular}
        }
        \label{table:non-convex}
    \end{subtable}

    \vspace{1em}

    \begin{subtable}{\textwidth}
        \centering
        \caption{\small{\textbf{Convex case.}
        $L_\ell$ and $(L_\nu, \nu)$ are (\Holder) smoothness parameters, $G$ is Lipschitz constant,
        $\smash{d_0\define\norm{\x^0 - \xs}}$, $\Delta(\cdot)$ is gradient noise function, $\Delta_D$ is maximum noise defined in~\pref{eq:variance-bound}. $\sigma_\star$ is value noise at $\xs$. \smash{$\gh^0$ and $\ellh^0$ are empirical values of $\nabla\ell(\x^0)$ and $\ell(\x^0)$}.
        The $\dag$ marks adaptivity to Hölder smoothness.
        Meanings of colors: {\color{myred}main convergence}, {\color{colorVar_1}variance}, {\color{colorSmall}optimum's noise}, and {\color{mygreen}ensemble error}. }}
        \renewcommand{\arraystretch}{1.4} 
        \resizebox{\textwidth}{!}{
        \begin{tabular}{c|cc|c|c} 
            \hline 
            
            \hline
            \multirow{2}{*}{\makecell{\textbf{Reference}}} & \multicolumn{2}{c|}{\textbf{Freeness}} & \multirow{2}{*}{\makecell{\textbf{Requirements}\\ \textbf{(Conditions)}}} & \multirow{2}{*}{\textbf{Convergence Rate of $\ell(\x^{\out}) - \ell(\xs)$}} \\[1mm] \cline{2-3} 
            & $L_\ell$ & $d_0$ & & \\ \hline
            \rule{0pt}{6mm}
            \makecell{~\citet{NeurIPS'19:UniXGrad}} & \cmark${}^\dag$ & \xmark & Domain bounded by $D$ & ${\color{myred}\frac{L_\nu D^{1+\nu}}{T^{\frac{1+3\nu}{2}}}} + {\color{colorVar_1}\frac{D\Delta_{D}}{\sqrt{T}}}\sbr{\log\frac{1}{\delta}}$ \\[1.5mm] \hline
            \rule{0pt}{6mm}
            \makecell{\citet{COLT'24:Kreisler}} & \cmark~ & \cxmark & \makecell{$\underset{(\ge\Delta(\cdot))}{\hat{\Delta}(\cdot)}$ ; $\underset{(\le d_0)}{r_\epsilon}$}  & $\sbr{{\color{myred}\min\bbr{ \frac{L_\ell d_0^2}{T^2}, \frac{Gd_0}{\sqrt{T}} }} + {\color{colorVar_1}\frac{d_0\Delta_{2d_0}}{\sqrt{T}} + \frac{d_0\hat{\Delta}_{2d_0}}{T}}  }\sbr{\log\frac{d_0}{r_\epsilon}}^2\sbr{\log\frac{1}{\delta}}^2 \sbr{ \log T \frac{\hat{\Delta}_{2d_0} + \min\{L_\ell d_0^2, G d_0\}}{\ell(\x_0) - \ell(\xs)} }^4$ \\[2mm] \hline
            \rule{0pt}{6mm}
            \makecell{\textbf{Ours [\pref{thm:con-search-gradient}]}} & \cmark~ & \cmark & \textemdash & ${\color{myred}\frac{\bar{L}_\ell \bar{d}_0^2}{T^2}}\sbr{\log\frac{\bar{d}_0}{d_\epsilon}}^2 + {\color{colorVar_1}\frac{\bar{d}_0\Delta_{2\bar{d}_0}}{\sqrt{T}}}\sbr{\log\frac{\bar{d}_0}{d_\epsilon}}^{\frac{1}{2}}\sbr{\log\frac{1}{\delta}} + {\color{mygreen}\textsc{Err}}\sbr{T\big/\sbr{\log \frac{T\norm{\gh^0}}{L_\epsilon d_\epsilon}}}$ \\[1mm] \hline
            \rule{0pt}{6mm}
            \makecell{\textbf{Ours [\pref{thm:con-search-value}]}} & \cmark${}^\dag$ & \cmark & \makecell{$\underset{(\le \ell(\xs))}{\ell_\epsilon^\star}$} & ${\color{myred}\tfrac{\bar{L}_\nu \bar{d}_0^{1+\nu}}{T^{\frac{1+3\nu}{2}}}}\sbr{\log\tfrac{\bar{d}_0}{d_\epsilon}}^{\frac{1+3\nu}{2}} + {\color{colorVar_1}\tfrac{\bar{d}_0\Delta_{2\bar{d}_0} }{\sqrt{T}}}\sbr{\log \tfrac{\bar{d}_0}{d_\epsilon}}^{\frac{1}{2}}\sbr{\log\tfrac{1}{\delta}} + {\color{colorSmall}\sigma_\star \sqrt{\tfrac{\log(1/\delta)}{T}}} + {\color{mygreen}\textsc{Err}}\sbr{T\big/\sbr{\log \frac{T(\ellh^0 -\ell_\epsilon^\star)}{L_\epsilon d_\epsilon}}}$ \\[1.5mm] 
            \hline 
            
            \hline
        \end{tabular}}
        \label{table:convex}
    \end{subtable}
\end{table*}

We aim to develop fully parameter-free methods by adopting the \emph{grid search} framework, a common approach in optimization practice. Existing grid search methods are partially parameter-free~\citep{ICML'24:Amit, khaled2024tuningfree}, as they rely on \emph{given} ranges of problem parameters to determine the search range.  However, in real-world scenarios, the ranges of problem parameters, especially their upper bounds, are often unavailable.

To this end, we propose a general and powerful grid search framework, along with a novel ``\emph{self-bounding}'' analysis technique to address the lack of theoretical guidance in determining the search ranges of problem parameters.
Our technique effectively derives computable search ranges for unknown problem parameters, while ensuring full parameter-freeness and maintaining competitive theoretical guarantees. 
Our methodology has been proven effective for both non-convex and convex optimization.

Our results significantly enrich the line of research on parameter-free stochastic optimization for both non-convex and convex cases, offering a series of new contributions:
\begin{itemize}[left=1pt, itemsep=0pt, topsep=1pt]
    \item For fully parameter-free stochastic optimization, we propose a methodology that enhances the general grid search framework with a novel \emph{self-bounding} analysis technique. This technique effectively derives the parameter search ranges without any prior knowledge of the problem parameters, as described in~\pref{subsec:self-bounding}.
    \item For {non-convex} and {smooth} stochastic optimization, we propose a \emph{fully} parameter-free method w.r.t. \emph{all} problem parameters, with convergence rate matching the optimal one up to logarithmic factors, as stated in~\pref{table:non-convex}.
    \item For {convex} and {smooth} stochastic optimization, we propose a \emph{fully} parameter-free method w.r.t. \emph{all} problem parameters, ensuring a near-optimal \mbox{\emph{accelerated}} convergence up to logarithmic factors. 
    For the more general \Holder smoothness, our method also demonstrates \emph{fully} parameter-freeness, ensuring \emph{universal} convergence, competitive with the state-of-the-art method of~\citet{COLT'24:Kreisler}. These results are summarized in~\pref{table:convex}.
    \item We provide a \emph{new} guarantee for model ensemble with an interpolated variance characterization, thereby enhancing grid search, as summarized in~\pref{table:ensemble}.
\end{itemize}
\textbf{Organization.}~~
The rest of the paper is organized as follows.
In \pref{sec:preliminary}, we introduce the preliminaries.
In \pref{sec:method}, we provide a general introduction to our grid search methodology.
In \pref{sec:non-convex} and \pref{sec:convex}, we provide the theoretical results for non-convex and convex optimization, respectively.
Then in~\pref{sec:exp}, we conduct experiments to verify the effectiveness of the proposed methods.
Finally, we conclude the paper in \pref{sec:conclusion}.
Due to page limits, most proofs are deferred to the appendix.


\section{Preliminaries}
\label{sec:preliminary}

In this section, we introduce optimization problem setups and definition of parameter-freeness used in this paper.

\textbf{Notations.}~~
We use $[N]$ to denote the index set $\{1,\dots,N\}$.
The $\ell_2$-norm is denoted by $\norm{\cdot}$.
Independently sampling from the oracle $\Q(\cdot)$ at $\x$ for $t$ times is denoted by $\Q_1(\x), \dots, \Q_t(\x)$.
We define $\log_+(x) \define 1 + \log(x)$.
We omit the $\log\log$ factor in the asymptotic notation $\O(\cdot)$. 

\subsection{Optimization Problem Setups}
\label{subsec:setups}
We focus on unconstrained stochastic optimization:
\begin{equation}
    \min_{\x\in\R^d}\ \ell(\x),
\end{equation}
where we denote the optimum by $\xs \in \argmin_{\x \in \R^d} \ell(\x)$. The algorithm starts from the initial point $\x^0$ and outputs a solution $\x^\textnormal{out}$. During the optimization process, we \mbox{assume} access to two stochastic oracles: one for gradients and one for function values, which is common in the machine learning and stochastic optimization literature.
\begin{myAssum}
    \label{assum:bounded-gradient-oracle}
    There exists a first-order oracle $\g(\cdot)$ such that: given any $\x \in \R^d$, it satisfies $\E[\g(\x)] = \nabla \ell(\x)$ and $\Pr[\norm{\g(\x) -\nabla \ell(\x)}\le \Delta(\x)\le \Delta_\ell]=1$, where $\Delta:\R^d\to\R_+$ is an \emph{unknown} function and $\Delta_\ell$ is an \emph{unknown} constant.
\end{myAssum}
\begin{myAssum}
    \label{assum:bounded-value-oracle}
    There exists a zeroth-order oracle $\ellt(\cdot)$ such that: given any $\x \in \R^d$, it satisfies $\E[\ellt(\x)] = \ell(\x)$ and $\Pr[|\ellt(\x)-\ell(\x)|\le \sigma_\ell]=1$, with an \emph{unknown} constant $\sigma_\ell$.
\end{myAssum}

Specifically, we interpret sampling from the oracle as drawing a differentiable function $\ell_{\xi}(\cdot)$ parameterized by the random variable $\xi$, where $\ellt(\cdot)=\ell_{\xi}(\cdot)$ and $\g(\cdot)=\nabla \ell_{\xi}(\cdot)$ represent the zeroth- and first-order oracles, respectively. This notation is omitted when the context makes it clear.

We then define the following maximum noise bound for a domain with center $\x^0$ and diameter $D$ as:
\begin{equation}
    \label{eq:variance-bound}
    \Delta_D \define \max\nolimits_{\norm{\x - \x^0}\le D} \Delta(\x).
\end{equation}
In the following, we provide the formal definitions of smoothness, Lipschitz continuity, and Hölder smoothness.
\begin{myDef}[Smoothness]
    \label{def:smoothness}
    The objective $\ell(\cdot)$ is $L_\ell$-smooth if $\norm{\nabla\ell(\x) - \nabla\ell(\y)} \le L_\ell \norm{\x - \y}$ for all $\x,\y \in \R^d$.
\end{myDef}
\begin{myDef}[Lipschitz Continuity]
    \label{def:lip-continuity}
    The objective $\ell(\cdot)$ is $G$-Lipschitz if $\abs{\ell(\x) - \ell(\y)} \le G \norm{\x - \y}$ for all $\x,\y \in \R^d$.
\end{myDef}
\begin{myDef}[Hölder Smoothness]
    \label{def:hoelder-smoothness}
    The objective $\ell(\cdot)$ is $(L_\nu, \nu)$-Hölder smooth with $L_\nu>0,\nu\in[0,1]$, if $\norm{\nabla\ell(\x) - \nabla\ell(\y)} \le L_\nu \norm{\x - \y}^\nu$ for all $\x,\y \in \R^d$.
\end{myDef}
We consider two setups: non-convex and convex objective.

For non-convex optimization, the objective $\ell(\cdot)$ is not necessarily convex but is $L_\ell$-smooth. 
We focus on the high-probability convergence rate of the squared gradient norm, $\norm{\nabla \ell(\x^\textnormal{out})}^2$, and the optimal rate of the tuned SGD relies on the functional gap between the initial point and the optimum, i.e. $F_\ell \define \ell(\x^0) - \ell(\xs)$, the maximum gradient noise  $\Delta_\ell$, and the smoothness parameter $L_\ell$.

For convex optimization, we focus on the high-probability convergence rate of the sub-optimality gap $\ell(\x^\textnormal{out}) - \ell(\xs)$.
For accelerated convergence, we assume the objective is $L_\ell$-smooth.
For the more challenging universal convergence, we assume that the objective function satisfies one of the following cases: \rom{1} either $L_\ell$-smooth or $G$-Lipschitz; \rom{2} $(L_\nu, \nu)$-Hölder smooth.
The optimal rate relies on the distance between the initial point and the optimum \smash{$d_0 \define \|\x^0 - \xs\|$}, the maximum gradient noise  $\Delta_\ell$, and the smoothness parameters $L_\ell$ or $(L_\nu, \nu)$.

\subsection{Definition of Parameter-Freeness}
\label{subsec:def-parameter-free}
In this part, we introduce partial and full parameter-freeness in order to better distinguish between different levels of parameter dependence and to facilitate a clear comparison with previous works.
{First, we assume the algorithm has access to the oracle query budget $T$ and the confidence level $\delta$.}
Regarding the requirement for prior knowledge of problem parameters, such as $\Delta_\ell, \sigma_\ell, L_\ell, F_\ell, d_0$ defined in~\pref{subsec:setups}, we propose the following definitions.

\begin{myDef}[Partial\ /\ Full Parameter-Freeness]
    \label{def:parameter-free}
    An optimization method is \emph{parameter-free} w.r.t. some problem parameter $X$ if it does not require $X$ as an input. Formally,
    \begin{itemize}[left=1pt, itemsep=0pt, topsep=0pt]
    \item \textbf{Partial parameter-freeness:} It is agnostic to $X$, but some algorithmic inputs must satisfy an \emph{unverifiable} \mbox{condition} related to $X$. For example, the input $X^\prime$ must satisfy that $X^\prime \leq X$, but there is no guarantee that this condition holds.
    \item \textbf{Full parameter-freeness:} Its algorithmic inputs do not need to satisfy any \emph{unverifiable} conditions related to $X$.
    \end{itemize}    
\end{myDef}

It is important to note that parameter-free does not imply freeness from \emph{all} parameters; rather, it specifically refers to the absence of key problem parameters related to the problem's properties, such as $\Delta_\ell, \sigma_\ell, L_\ell, F_\ell, d_0$ defined in \pref{subsec:setups}. 
In the rest of the paper, we abbreviate the above definitions as ``partially\ /\ fully free to $X$'' for convenience.

\section{Our Grid Search Framework}
\label{sec:method}
In this section, we introduce our grid search framework, \textsc{Grasp} (\underline{GR}id-se\underline{A}rch with \underline{S}elf-bounding for \underline{P}arameter-free Optimization), including three key components: self-bounding analysis, budget allocation, and model ensemble.

\textbf{Framework Overview.}~~
Given an algorithm $\A$ with parameters to be tuned, such as the step size, and a total oracle budget $T$, the grid search framework proceeds as follows:
\begin{itemize}[left=10pt, itemsep=-2pt, topsep=-3pt]
    \item[(1)] Identify a suitable range for each algorithmic parameter, where our self-bounding technique assists in doing this.
    \item[(2)] Discretize the search range and allocate the budget; for each value in the discretized set, run an instance of the algorithm $\A$ using the corresponding parameter values.
    \item[(3)] Select the best output from these algorithm runs.
\end{itemize}

Then, in~\pref{subsec:example-SCO}, we use deterministic convex and smooth optimization as an example to illustrate the entire pipeline of the grid search methodology.

\subsection{General Idea of Self-Bounding Analysis}
\label{subsec:self-bounding}
The grid search framework is commonly used in optimization practice; however, it typically relies on manually specified ranges for algorithmic inputs, lacking theoretical guidance for selecting these ranges.
To address this, we propose the ``self-bounding'' analysis technique as a simple yet effective solution. We use convex and smooth optimization as an illustrative example and then present the general idea.

\textbf{An Intuitive Example.}~~Consider the convex and smooth optimization with \emph{deterministic gradients}, where the optimal convergence is $\O(L_\ell d_0^2 / T^2)$, with $d_0$ being the initial distance to the optimum. 
Since $d_0$ could be infinitely large, it is difficult to determine a suitable search range for it.
Our key insight is that, \emph{this $d_0$ is naturally bounded to ensure the accelerated rate is non-vacuous}. Specifically, the target rate has a \emph{quadratic} dependence on $d_0$, so if $d_0$ is excessively large, the convergence rate will be worse than a trivial bound, e.g., $\ell(\x^0) - \ell(\xs) \le \norm{\nabla \ell(\x^0)} d_0$.
To this end, we use this trivial bound as a benchmark to derive a reasonable upper bound for $d_0$, that is, by solving the inequality:
\begin{equation}
    \label{eq:self-bounding-example}
    \norm{\nabla \ell(\x^0)} d_0 \ge \frac{\max\{L_\ell, L_\epsilon\} d_0^2}{T^2} \ge \frac{L_\ell d_0^2}{T^2} ,
\end{equation}
thereby leading to a computable and effective upper bound of $d_0 \le \norm{\nabla \ell(\x^0)}T^2/L_\epsilon$ with arbitrary $L_\epsilon>0$.

This example illustrates the main idea of our self-bounding analysis: we can determine an \emph{effective} search range for unknown problem parameters, given appropriate connections between the target and benchmark rates.
Below, we present a more rigorous formulation of this idea.

\textbf{Self-Bounding Analysis.}~~
Consider the target rate of:
\begin{equation}
    \label{eq:old-target}
    \eps^{\textsc{tar}}\sbr{T,p^1,\ldots,p^m},
\end{equation}
where \smash{$p^1,\ldots,p^m\in\R_+$} are unknown problem parameters, and $\eps^{\textsc{tar}}$ is a non-decreasing function of these parameters. 
For convex and smooth optimization, the target rate is $\eps^{\textsc{tar}}(T,L_\ell,d_0)={L_\ell d_0^2}/{T^2}$, which depends on the smoothness parameter $L_\ell$ and the initial distance $d_0$.

For each parameter $p^i$, we require a \emph{user-specified} input $p_\epsilon^i>0$, which can generally be very small.
Next, we define $\bar{p}^i \define \max\{p^i, p_\epsilon^i\}$, and shift our target to:
\begin{equation}
    \label{eq:self-bounding-new-target}
    \eps^{\textsc{tar}}\sbr{T,\bar{p}^1,\ldots,\bar{p}^m} \ge \text{\pref{eq:old-target}}.
\end{equation}
In our convex and smooth optimization example, our target rate becomes \smash{$\O(\bar{L}_\ell \bar{d}_0^2/T^2)$}, where $\bar{L}_\ell\define \max\{L_\ell, L_\epsilon\}$ and $\bar{d}_0 \define \max\{ d_0, d_\epsilon \}$.
Note that the user-specified $L_\epsilon$ and $d_\epsilon$ do \emph{not} necessarily satisfy any conditions w.r.t. $L_\ell$ or $d_0$, which does not conflict with the full parameter-freeness.

Subsequently, we find a point $\x^{\textsc{bm}}$ with a trivial benchmark convergence rate of $\eps^{\textsc{bm}}$.
In our example, simply choosing $\x^{\textsc{bm}}=\x^0$ yields $\eps^{\textsc{bm}} \define \norm{\nabla \ell(\x^0)} d_0$.

Next we can derive an upper bound for grid search. 
For an unknown parameter $p^i$, solving the following inequality:
\begin{equation*}
    \resizebox{\columnwidth}{!}{$
    p^i_{\max} = \argmax_{p^i} \bbr{\eps^{\textsc{bm}} \ge \eps^{\textsc{tar}}(T,\bar{p}^1,\ldots,p^i,\ldots,\bar{p}^m)}
    $}
\end{equation*}
and we set $p^i_{\max}$ as an effective \emph{upper bound} for $p^i$. 
This upper bound is not the theoretically largest, but if $p^i$ exceeded it, the target rate would become worse than that of the benchmark $\eps^{\textsc{bm}}$, making the final rate vacuous.
Therefore, to achieve the target rate in~\pref{eq:self-bounding-new-target}, it suffices to focus on the range of $[p_\epsilon^i, \max\{p_\epsilon^i, p_{\max}^i\}]$ for each $\bar{p}^i$.

Finally, we ensemble all candidates within the grids, including the benchmark $\x^{\textsc{bm}}$, as in~\citet{ICML'24:Amit,khaled2024tuningfree}, to ensure the following convergence:
\begin{equation}
    \label{eq:self-bounding-result}
\eps^{\textsc{tar}}\sbr{T,\bar{p}^1,\ldots,\bar{p}^m} \poly\log\sbr{\frac{p^1_{\max}}{p_\epsilon^1},\ldots,\frac{p^m_{\max}}{p_\epsilon^m}}.
\end{equation}
In general, our result~\eqref{eq:self-bounding-result} will eventually suffer a logarithmic factor on $1/p^i_\epsilon$ for each $i \in [m]$. 
Thus, setting a small user-specified $p^i_\epsilon$ will \emph{not} significantly affect the final rate.

\subsection{An Initialization for Deterministic Optimization}
\label{subsec:example-SCO}
In this part, we initialize our framework within the deterministic convex smooth optimization setting as a clearer demonstration. 
As shown above, for the target rate of $\O(\bar{L}_\ell \bar{d}_0^2 / T^2)$, the self-bounding analysis provides a search range of $[d_\epsilon, d_{\max}]$ for $\bar{d}_0$, where $d_{\max} \define \max\{d_\epsilon, \|\nabla \ell(\x^0)\|T^2/L_\epsilon\}$ with arbitrary $L_\epsilon,d_\epsilon>0$.

\textbf{Discretization and Allocation.}~~
We discretize the range $[d_\epsilon, d_{\max}]$ using a geometric sequence, i.e., let $D_i=d_\epsilon 2^i$ for $i\in[N]$, where $N=\lceil \log_2 ({d_{\max}}/{d_\epsilon})\rceil$. 
We allocate an oracle budget of $T_i$ for each \UniXGrad~\citep{NeurIPS'19:UniXGrad}, a base algorithm for convex smooth optimization, fed with $D_i$.
In the case where there exists $\is\in[N]$ such that $D_{\is}/2\le \bar{d}_0 \le D_{\is}$, the $\is$-th \UniXGrad ensures:
\begin{equation}
    \label{eq:allocation-target}
    \O\sbr{\frac{\bar{L}_\ell D_{\is}^2}{T_{\is}^2}} \le \O\sbr{ \frac{\bar{L}_\ell \bar{d}_0^2}{T_{\is}^2} }.
\end{equation}
Then a simple \emph{uniform} allocation of $T_i=\lfloor \frac{T}{N} \rfloor$ yields:
\begin{equation}
    \label{eq:allocation-target-uniform-result}
    \O\sbr{ \frac{\bar{L}_\ell \bar{d}_0^2}{T_{\is}^2} } = \O\sbr{ \frac{\bar{L}_\ell \bar{d}_0^2}{T^2}\sbr{\log \frac{d_{\max}}{d_\epsilon}}^2 }.
\end{equation}
Moreover, the above convergence rate can be further improved via a \mbox{\emph{non-uniform}} budget allocation. 
Specifically, with $T_i=\lfloor {T}/{(i(1+\ln N))} \rfloor$ for $i\in[N]$, we obtain:
\begin{equation}
    \label{eq:allocation-target-non-uniform-result}
    \O\sbr{ \frac{\bar{L}_\ell \bar{d}_0^2}{T_{\is}^2} } = \O\sbr{ \frac{\bar{L}_\ell \bar{d}_0^2}{T^2}\sbr{\log \frac{\bar{d}_0}{d_\epsilon}}^2 },
\end{equation}
which replaces the $d_{\max}$ in \pref{eq:allocation-target-uniform-result} by $\bar{d}_0$.
And the dependency on $\log (d_0/d_\epsilon)$ matches~\citet[Theorem 1]{COLT'24:Kreisler}, but without the additional logarithmic factors therein.

\textbf{Ensemble.}~~
In the deterministic case, we simply select the candidate with the minimum function value. 
While in the stochastic setting with access to a noisy function value oracle, we need to allocate more oracle budget to each candidate to obtain an accurate estimate of the function value.
We then select $\x^\textnormal{out}$ with the smallest estimated function value, ensuring that, for any candidate $\x^i$ where $i\in[N]\cup\{0\}$:
\begin{equation}
    \ell(\x^\textnormal{out}) - \ell(\xs) \le \O\sbr{\ell(\x^i) - \ell(\xs)} + \textsc{Err},
\end{equation}
where the ensemble error $\textsc{Err}$ denotes the maximum estimation error among all candidates.

Besides, we emphasize that providing a sharper ensemble error is important, as it depends on the zeroth-order noise $\sigma_\ell$ given in \pref{assum:bounded-value-oracle} and thus cannot be simply absorbed into gradient variance terms.
To this end, we provide a more refined depiction of the zeroth-order stochastic variance, thereby obtaining a sharper, problem-dependent ensemble error in favorable regimes such as interpolation.
More details will be provided in~\pref{subsec:selection}.

To conclude, we present~\pref{thm:con-search-gradient-deterministic} for deterministic convex and smooth optimization, with a proof sketch. Notably, it is \emph{fully parameter-free} without any knowledge of $L_\ell$ and $d_0$.

\begin{algorithm}[!t]
    \caption{Deterministic Convex and Smooth OPT}
    \label{alg:grid-search-deterministic}
    \begin{algorithmic}[1]
    \REQUIRE Oracle budget $T$, initial point $\x^0$, $d_\epsilon > 0$, $L_\epsilon>0$.
    \STATE \textbf{Initialization:} Set $N = \lceil \log_2 ({d_{\max}}/{d_\epsilon}) \rceil$ with search upper bound $d_{\max} \define \max\{ d_\epsilon, {\norm{\nabla \ell(\x^0)} T^2}/{L_\epsilon}\}$
    \FOR{$i=1,2,\dots,N$}
        \STATE Run \UniXGrad~(see~\pref{lemma:con-base-bounded}) with initial point $\x^0$, domain diameter $D_i=d_\epsilon 2^i$, oracle budget $T_i=\lfloor {T}/{(i(1+\ln N))} \rfloor$, then receive the output $\x^i$
    \ENDFOR
    \ENSURE $\x^\out = \x^{\is}$ with $\is = \argmin_{0\le i\le N} \ell(\x^i)$.
    \end{algorithmic}
\end{algorithm}
\begin{myThm}
    \label{thm:con-search-gradient-deterministic}
    For the deterministic convex optimization, and assume the objective $\ell(\x)$ is $L_\ell$-smooth. With any user-specified $L_\epsilon,d_\epsilon>0$, Algorithm~\ref{alg:grid-search-deterministic} enjoys:
    \begin{equation*}
        \ell(\x^\textnormal{out}) - \ell(\xs) \le \O\sbr{ \frac{\bar{L}_\ell \bar{d}_0^2}{T^2}\sbr{\log_+\frac{\bar{d}_0}{d_\epsilon}}^2 },
    \end{equation*}
    where $\bar{L}_\ell \define \max\{L_\ell, L_\epsilon\}$, $\bar{d}_0 \define \max\{d_0, d_\epsilon\}$, $d_0\define\norm{\x^0 - \xs}$, and we omit the double logarithmic factor $\log\log (1/L_\epsilon)$ in notation $\O(\cdot)$. 
\end{myThm}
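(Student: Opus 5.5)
The plan is a case split on whether $\bar{d}_0$ lies inside the search range $[d_\epsilon, d_{\max}]$. Throughout, the guarantee of \UniXGrad in the deterministic smooth case (\pref{lemma:con-base-bounded}) is used as follows. When it is run with domain diameter $D \ge d_0$ (so that $\xs$ lies in the ball of radius $D$ around $\x^0$) and budget $T'$, it returns $\x$ with $\ell(\x) - \ell(\xs) \le \O(L_\ell D^2/T'^2) \le \O(\bar{L}_\ell D^2/T'^2)$. The output rule $\is = \argmin_{0 \le i \le N} \ell(\x^i)$ is exact in the deterministic setting. Hence $\ell(\x^\out) - \ell(\xs) \le \min_{0\le i\le N} \{\ell(\x^i) - \ell(\xs)\}$, and it suffices to exhibit a single good candidate.

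\textbf{Case 1: $\bar{d}_0 \le d_{\max}$.} Since $D_i = d_\epsilon 2^i$ and $N = \lceil \log_2(d_{\max}/d_\epsilon)\rceil$, there is an index $\is \in [N]$ with $D_{\is}/2 \le \bar{d}_0 \le D_{\is}$. Concretely, take $\is = \max\{1, \lceil \log_2(\bar{d}_0/d_\epsilon)\rceil\}$. This gives $\is \le 1 + \log_2(\bar{d}_0/d_\epsilon) = \O(\log_+(\bar{d}_0/d_\epsilon))$ and $D_{\is} \le 2\bar{d}_0$. Because $D_{\is} \ge \bar{d}_0 \ge d_0$, the lemma applies and gives $\O(\bar{L}_\ell \bar{d}_0^2/T_{\is}^2)$.

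For the budget, $T_{\is} = \lfloor T/(\is(1+\ln N))\rfloor \gtrsim T/(\is(1+\ln N))$. The floor is harmless: if $T_{\is} < 1$ we instead use the fallback argument from Case 2 below, or absorb constants. This yields
\[
\ell(\x^{\is}) - \ell(\xs) \le \O\Big(\tfrac{\bar{L}_\ell \bar{d}_0^2}{T^2}\, \is^2 (1+\ln N)^2\Big).
\]
The factor $\is^2$ becomes $(\log_+(\bar{d}_0/d_\epsilon))^2$. The factor $(1+\ln N)^2$ is $\O((\log\log(\|\nabla\ell(\x^0)\|T^2/(L_\epsilon d_\epsilon)))^2)$, which is the doubly logarithmic factor the statement allows us to omit.

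\textbf{Case 2: $\bar{d}_0 > d_{\max}$.} This is where self-bounding enters. Here $d_0 = \bar{d}_0 > \|\nabla\ell(\x^0)\| T^2/L_\epsilon$. The benchmark candidate $\x^0$ (index $0$ in the $\argmin$) satisfies, by convexity and Cauchy--Schwarz,
\[
\ell(\x^0) - \ell(\xs) \le \|\nabla \ell(\x^0)\|\, d_0 < \frac{L_\epsilon d_0^2}{T^2} \le \frac{\bar{L}_\ell \bar{d}_0^2}{T^2}.
\]
This is already within the target rate, since $\log_+ \ge 1$. The same benchmark bound also covers any index with $T_{\is} = 0$: if $T < \is(1+\ln N)$, then $\bar{L}_\ell\bar{d}_0^2 \is^2(1+\ln N)^2/T^2 \ge \bar{L}_\ell \bar{d}_0^2$, while the smoothness bound $\ell(\x^0) - \ell(\xs) \le L_\ell d_0^2/2$ shows that $\x^0$ again suffices.

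The main obstacle is only bookkeeping. One must make sure that the chosen $\is$ lies in $[N]$ in Case 1, which holds because $\bar{d}_0 \le d_{\max} \le d_\epsilon 2^N$. One must also handle the floors and the degenerate budgets cleanly, and confirm that the $\ln N$ factor is genuinely of double-logarithmic order.
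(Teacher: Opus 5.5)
Your proposal is correct and follows essentially the same route as the paper's proof. When $\bar{d}_0>d_{\max}$, you compare against the benchmark $\mathbf{x}^0$ via $\ell(\mathbf{x}^0)-\ell(\mathbf{x}^\star)\le\|\nabla\ell(\mathbf{x}^0)\|\,d_0<L_\epsilon d_0^2/T^2$; otherwise you invoke \textsc{UniXGrad} on the grid index with $D_{i_\star}\in[\bar{d}_0,2\bar{d}_0]$ and budget about $T/(i_\star(1+\ln N))$. Your choice $i_\star=\max\{1,\lceil\log_2(\bar{d}_0/d_\epsilon)\rceil\}$ and your fallback to $\mathbf{x}^0$ for zero budgets are small tightenings of the paper's sketch.

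One degenerate case still escapes your index choice: $d_{\max}=d_\epsilon$, where $N=0$ and $[N]$ is empty. There $\|\nabla\ell(\mathbf{x}^0)\|\le L_\epsilon d_\epsilon/T^2$, so the same benchmark bound gives $\ell(\mathbf{x}^0)-\ell(\mathbf{x}^\star)\le L_\epsilon d_\epsilon^2/T^2\le\bar{L}_\ell\bar{d}_0^2/T^2$.
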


\begin{myRemark}
    \pref{thm:con-search-gradient-deterministic} is \emph{fully} parameter-free to both $L_\ell$ and $d_0$. 
    We are aware of two related works for the same setting. 
    The first one is~\citet[Theorem 1]{COLT'24:Kreisler}, which, besides requiring prior knowledge of $r_\epsilon \le d_0$, has an additional logarithmic factor of $\log _{+}^4 \big(1+T \min \{L_\ell d_0^2, G d_0\}/F_\ell \big)$. The other one is~\citet[Corollary 3]{li2025simple}, an optimal and fully parameter-free result, but its extension to the stochastic setting remains unclear.
\end{myRemark}
\begin{proof}[Proof Sketch of~\pref{thm:con-search-gradient-deterministic}]
Since $\x^{\textnormal{out}}$ is the best candidate,
\begin{equation*}
\ell(\x^\textnormal{out}) - \ell(\xs) \le \ell(\x^i) - \ell(\xs),
\end{equation*}
for all $i\in[N]\cup\{0\}$. We define the search upper bound $d_{\max} \define \max\{ d_\epsilon, {\norm{\nabla \ell(\x^0)} T^2}/{L_\epsilon}\}$, and perform the following case-by-case study for $\bar{d}_0\define \max\{d_0, d_\epsilon\}$.

\textbf{Case of $\bar{d_0} > d_{\max}$.}~~Then $\frac{\bar{L}_\ell \bar{d}_0^2}{T^2} > \norm{\nabla \ell(\x^0)}d_0$ implies:
\begin{align*}
    \ell(\x^\out) - \ell(\xs) &\le \ell(\x^0) - \ell(\xs) \\
    &\le \norm{ \nabla \ell(\x^0) } d_0 \le \O\sbr{\frac{\bar{L}_\ell \bar{d}_0^2}{T^2} }.
\end{align*}
\textbf{Case of $\bar{d}_0\in [d_\epsilon, d_{\max}]$.}~~Let $\is = \lceil \log_2 ({\bar{d}_0}/{d_\epsilon})\rceil \in [N]$, its diameter $D_{\is} = d_\epsilon 2^{\is}$ that $D_{\is}/2 \le \bar{d}_0 \le D_{\is}$, and allocated oracle budget is $T_{\is} = \lfloor {T}/{(\is(1+\ln N))} \rfloor$. Applying~\pref{lemma:con-base-bounded} with $L_\nu=L_\ell,\nu=1$, $\delta\to 0_+$, $\Delta_\ell=0$:
\begin{align*}
    \ell(\x^\out) - \ell(\xs) &\le \ell(\x^{\is}) - \ell(\xs) \le \O\sbr{ \frac{\bar{L}_\ell D_{\is}^2}{T_{\is}^2} } \\
    &\le \O\sbr{ \frac{ \bar{L}_\ell \bar{d}_0^2}{T^2}\sbr{\log_+\frac{\bar{d}_0}{d_\epsilon}}^2 },
\end{align*}
where $\O(\cdot)$ omits the $\log\log (1/L_\epsilon)$ factor. Combining the above two cases completes the proof.
\end{proof}

\section{Parameter-Free Non-Convex Optimization}
\label{sec:non-convex}
In this section, we focus on non-convex optimization, where the objective $\ell(\x)$ is $L_\ell$-smooth.
In this setup, the best-known rate is achieved by Stochastic Gradient Descent (SGD)~\citep{ghadimi2013stochastic}, for which the fixed step size must be carefully tuned with respect to the smoothness parameter $L_\ell$, the initial sub-optimality gap $F_\ell$, and the gradient variance bound $\Delta_\ell$, as defined in~\pref{subsec:setups}.

Building on the general grid search methodology and our self-bounding analysis proposed in \pref{sec:method}, along with an efficient ensemble method from~\citet{ICML'24:Amit}, we propose a novel \emph{fully parameter-free} method, named  \textsc{Grasp-NC} for \underline{N}on-\underline{C}onvex optimization, as shown in~\pref{alg:non-convex}.
Our result matches the optimal rate of \emph{tuned} SGD~\citep{ghadimi2013stochastic}, up to logarithmic factors, while does \emph{not} require any information about $L_\ell$, $F_\ell$ and $\Delta_\ell$.
Specifically, in \pref{line:nc-sampling}, we sample $\g(\x^0)$ for multiple times to obtain an accurate gradient estimation $\gh^0$.
In Lines~\ref{line:nc-self-bounding-1}-\ref{line:nc-self-bounding-2}, we use \smash{$\|\gh^0\|$} to construct effective boundaries for the problem parameters of $L_\ell, F_\ell$ and $\Delta_\ell$.
In \pref{line:nc-base}, we run SGD with a constant step size $\eta_i=\eta_{\min} 2^i$ in the $i$-th grid.
Then in \pref{line:nc-ensemble}, we sample each candidate for multiple times to obtain accurate estimations.
Finally, we select the point with minimum gradient norm as the output.

Below, we provide the guarantee, with proof in~\pref{appendix:proof-non-convex}.

\begin{algorithm}[!t]
    \caption{\textsc{Grasp-NC}}
    \label{alg:non-convex}
    \begin{algorithmic}[1]
    \REQUIRE Oracle budget $T$, initial point $\x^0$, $L_\epsilon>0,F_\epsilon>0$, and confidence level $\delta\in(0,1/3)$.
    \STATE Independently sample $\g(\cdot)$ at $\x^0$ for $\frac{T}{4}$ times, calculate average ${\gh^0 = \frac{4}{T}\sum_{t=1}^{T/4}\g_t(\x^0)}$ \label{line:nc-sampling}
    \STATE $L_{\max}\define\frac{\norm{\gh^0}^2 T}{F_\epsilon}$, $F_{\max}\define\frac{\norm{\gh^0}^2 T}{L_\epsilon}$, $\Delta_{\max}^2\define\frac{\norm{\gh^0}^2 T}{\log \frac{1}{\delta}}$ \label{line:nc-self-bounding-1}
    \STATE Set the search range for step size: $\eta_{\max} \define \frac{1}{2L_\epsilon}$, and
    \begin{equation*}
        \eta_{\min} \define \min\bbr{ \tfrac{1}{2\max\{L_\epsilon, L_{\max}\}}, \sqrt{ \tfrac{2 F_\epsilon}{\max\{L_\epsilon, L_{\max}\} \Delta_{\max}^2 T} } }
    \end{equation*} \vspace{-1em} \label{line:nc-self-bounding-2}
    \STATE $N = \lceil \log_2 \frac{\eta_{\max}}{\eta_{\min}} \rceil$, $K=\lceil\log_2\frac{1}{\delta}\rceil$, candidate set $\S = \emptyset$
    \FOR{$i=1,2,\dots,N$}
        \STATE Run SGD~(see~\pref{lemma:SGD}) with initial point $\x_1=\x^0$, constant step size $\eta_i=\eta_{\min} 2^i$, oracle budget $\lfloor \frac{T}{2N} \rfloor$, then uniformly sample $K$ points from the algorithm's trajectory and add to the candidate set $\S$ \label{line:nc-base}
    \ENDFOR
    \STATE Order points in $\S$ as $\x^1,\dots,\x^{KN}$
    \FOR{$i=1,2,\dots,KN$}
        \STATE Independently sample $\g(\cdot)$ at $\x^i$ for $\frac{T}{4KN}$ times, and calculate average $\gh^i = \frac{4KN}{T}\sum_{t=1}^{T/(4KN)}\g (\x^i)$ \label{line:nc-ensemble}
    \ENDFOR
    \ENSURE $\x^\out = \x^{\is}$ with $\is = \argmin_{0\le i\le KN} \norm{\gh^i}$. \label{line:nc-output}
    \end{algorithmic}
\end{algorithm}

\begin{myThm}
    \label{thm:non-convex}
    Under Assumption~\ref{assum:bounded-gradient-oracle}, and assume the objective $\ell(\x)$ is $L_\ell$-smooth. Algorithm~\ref{alg:non-convex} ensures that, for any $\delta\in(0,1/3)$, with probability at least $1-3\delta$:
    \begin{align*}
        \norm{\nabla\ell(\x^\textnormal{out})}^2 &\le \O\Bigg(\sqrt{\frac{\bar{L}_\ell\bar{F}_\ell {\Delta}_\ell^2}{T} \log_+ \frac{T \norm{\gh^0}}{L_\epsilon F_\epsilon} }  \\
        &\qquad\quad {} + \frac{\bar{L}_\ell\bar{F}_\ell + \Delta_{\ell}^2(\log\frac{1}{\delta})^2}{T}\log_+ \frac{T \norm{\gh^0}}{L_\epsilon F_\epsilon} \Bigg),
    \end{align*}
    where $\bar{L}_\ell \define \max\{L_\ell, L_\epsilon\}$, $\bar{F}_\ell\define \max\{\ell(\x^0) - \ell(\x^\star), F_\epsilon\}$.
\end{myThm}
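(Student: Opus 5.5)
We will combine the self-bounding case analysis of \pref{subsec:self-bounding} with the SGD guarantee (\pref{lemma:SGD}) and the gradient-norm ensemble of \citet{ICML'24:Amit}.

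\textbf{Step 1: estimation events.} Define a good event $E_0$ (probability at least $1-\delta$) on which vector concentration for bounded noise gives
\[
\norm{\gh^0-\nabla\ell(\x^0)} \le \O\sbr{\Delta_\ell\sqrt{\log(1/\delta)/T}}.
\]
Similarly define $E_2$ on which, after a union bound over the $KN$ candidates,
\[
\norm{\gh^i-\nabla\ell(\x^i)} \le \O\sbr{\Delta_\ell\sqrt{KN\log(KN/\delta)/T}}.
\]
On $E_2$ the output satisfies $\norm{\nabla\ell(\x^\out)}^2 \le \O\sbr{\min_{0\le i\le KN}\norm{\nabla\ell(\x^i)}^2 + \Delta_\ell^2 KN\log(KN/\delta)/T}$. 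Since $K\approx\log\frac1\delta$ and $N=\O(\log_+\frac{T\norm{\gh^0}}{L_\epsilon F_\epsilon})$, this error is absorbed into the term $\frac{\Delta_\ell^2(\log\frac1\delta)^2}{T}\log_+(\cdot)$. To see that $N$ has this size, note $\eta_{\max}/\eta_{\min}$ is $\max\{L_{\max}/L_\epsilon,\ \sqrt{L_{\max}\Delta^2_{\max}T/(L_\epsilon F_\epsilon)}\}$ up to constants, and each entry is polynomial in $T\norm{\gh^0}^2/(L_\epsilon F_\epsilon)$ (using $\log\frac1\delta\ge1$).

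\textbf{Step 2: benchmark case (self-bounding).} Candidate $\x^0$ has $\norm{\nabla\ell(\x^0)}^2\le 2\norm{\gh^0}^2+\O(\Delta_\ell^2\log(1/\delta)/T)$. Define the tuned step size $\eta^\star=\min\{\frac{1}{2\bar L_\ell},\sqrt{2\bar F_\ell/(\bar L_\ell\Delta_\ell^2T)}\}$. If $\eta^\star<\eta_{\min}$, then one of the following holds:
\begin{itemize}
\item $\bar L_\ell>L_{\max}$, i.e.\ $\norm{\gh^0}^2<\bar L_\ell F_\epsilon/T\le\bar L_\ell\bar F_\ell/T$;
\item $\Delta_\ell^2>\frac{F_\epsilon}{\bar F_\ell}\cdot\frac{\bar L_\ell}{\max\{L_\epsilon,L_{\max}\}}\Delta^2_{\max}$. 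After comparing the $L$ factors, this roughly says $\norm{\gh^0}^2\lesssim \Delta_\ell^2\log(1/\delta)/T$ or $\norm{\gh^0}^2\lesssim\bar L_\ell\bar F_\ell/T$, up to case splitting on whether $L_{\max}\ge\bar L_\ell$.
\end{itemize}
In each subcase, $\norm{\nabla\ell(\x^0)}^2$ is already below the target rate. I expect this bookkeeping to be the main obstacle: matching the exact forms of $L_{\max},F_{\max},\Delta_{\max}$ so every violated inequality becomes ``$\norm{\gh^0}^2\le$ target''. The $\log\frac1\delta$ in $\Delta_{\max}$ is there to make this work. The upper end $\eta^\star\le\eta_{\max}=\frac{1}{2L_\epsilon}$ holds automatically, since $\bar L_\ell\ge L_\epsilon$.

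\textbf{Step 3: in-range case.} If $\eta^\star\in[\eta_{\min},\eta_{\max}]$, pick $\is$ with $\eta_{\is}/2\le\eta^\star\le\eta_{\is}$; then $\eta_{\is}\le 1/L_\ell$ if we slightly shift the grid, or we accept a factor-2 loss. \pref{lemma:SGD} with budget $T/(2N)$ bounds the average squared gradient norm along that trajectory by $\O(\frac{\bar F_\ell}{\eta T/N}+\eta L_\ell\Delta_\ell^2)$. This requires $F_\ell\le\bar F_\ell$ and $\eta\approx\eta^\star$, and gives
\[
\O\sbr{\sqrt{\tfrac{\bar L_\ell\bar F_\ell\Delta_\ell^2 N}{T}}+\tfrac{\bar L_\ell\bar F_\ell N}{T}},
\]
plus any high-probability deviation term from the lemma. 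Among $K$ uniform samples from the trajectory, by Markov's inequality each is within twice the average with probability at least $1/2$. So with probability $1-2^{-K}\ge1-\delta$ some candidate achieves the bound (event $E_1$).

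\textbf{Step 4: conclusion.} Union-bound $E_0,E_1,E_2$ to get probability $1-3\delta$, then combine Steps 1–3.
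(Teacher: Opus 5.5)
Your overall route is the paper's: certify $\x^0$ through $\norm{\gh^0}$, use a geometric step-size grid, keep $K$ Markov-sampled iterates per run, and select by estimated gradient norm. Casing on whether $\eta^\star<\eta_{\min}$, instead of the paper's split into $L_\ell>L_{\max}$, $F_\ell>F_{\max}$, $\Delta_\ell^2>\Delta_{\max}^2$, is a fine and slightly cleaner organization; it also makes visible that $F_{\max}$ plays no role. But two steps fail as written.

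\textbf{Step 2, second bullet: the inequality is inverted.} From $\sqrt{2\bar F_\ell/(\bar L_\ell\Delta_\ell^2T)}<\sqrt{2F_\epsilon/(\max\{L_\epsilon,L_{\max}\}\Delta_{\max}^2T)}$ one gets $\Delta_\ell^2>\frac{\bar F_\ell}{F_\epsilon}\cdot\frac{\max\{L_\epsilon,L_{\max}\}}{\bar L_\ell}\Delta_{\max}^2$. In your version the factor $F_\epsilon/\bar F_\ell$ can be arbitrarily small, so nothing about $\norm{\gh^0}$ follows. With the correct direction no case split is needed. Insert $\max\{L_\epsilon,L_{\max}\}\ge L_{\max}=\norm{\gh^0}^2T/F_\epsilon$ and $\Delta_{\max}^2=\norm{\gh^0}^2T/\log\frac1\delta$. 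This gives $\norm{\gh^0}^4<\frac{F_\epsilon^2}{\bar F_\ell}\cdot\frac{\bar L_\ell\Delta_\ell^2\log(1/\delta)}{T^2}\le\frac{\bar L_\ell\bar F_\ell\Delta_\ell^2\log(1/\delta)}{T^2}$, so $\norm{\gh^0}^2\le(\bar L_\ell\bar F_\ell+\Delta_\ell^2\log\frac1\delta)/(2T)$.

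\textbf{Step 3: $\eta^\star$ is tuned for the wrong horizon.} You tune $\eta^\star$ for $T$, but each run has only $T'=\lfloor T/(2N)\rfloor$ steps. When the square-root branch is active, the bias term is $\frac{4\bar F_\ell}{\eta^\star T'}\approx 8N\sqrt{\bar L_\ell\bar F_\ell\Delta_\ell^2/(2T)}$. That is a full factor $N$, i.e.\ $\log_+$ rather than the $\sqrt{\log_+}$ in the theorem. Tune for the per-run budget instead, as the paper does: $\eta^\star=\min\{\frac{1}{2\bar L_\ell},\sqrt{2\bar F_\ell/(\bar L_\ell\Delta_\ell^2T')}\}$. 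This only enlarges $\eta^\star$, so it stays $\le\eta_{\max}$. If it falls below $\eta_{\min}$, so does the horizon-$T$ version, and Step 2 still covers that case.

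Two minor points:
\begin{itemize}
\item The second entry of $\eta_{\max}/\eta_{\min}$ is $\sqrt{\max\{L_\epsilon,L_{\max}\}\Delta_{\max}^2T/(8L_\epsilon^2F_\epsilon)}$. You dropped a factor $L_\epsilon$; the logarithmic conclusion is unaffected.
\item The SGD lemma's own failure event for run $\is$ is not among $E_0,E_1,E_2$. A complete union bound therefore gives $1-4\delta$; the paper's $3\delta$ accounting has the same omission.
\end{itemize}
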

We provide a detailed comparison of our result with those of tuned SGD and \citet{ICML'24:Amit} in~\pref{table:non-convex}.
\begin{myRemark}
In~\pref{alg:non-convex}, the algorithmic inputs $L_\epsilon, F_\epsilon$ do not need to satisfy any conditions related to $L_\ell$ and $F_\ell$, validating that our method is \emph{fully} parameter-free.
However, compared with tuned SGD, in~\pref{thm:non-convex}, the dependency on $L_\ell$ becomes $\max\{L_\ell, L_\epsilon\}$. 
This is acceptable because the user can choose a small $L_\epsilon$, while only affecting the convergence rate by a $\log(1/L_\epsilon)$ factor.
\end{myRemark}

\begin{myRemark}
In \pref{line:nc-sampling} of \pref{alg:non-convex}, the initial sampling process at $\x^0$ consumes \smash{$\frac{T}{4}$} oracle calls. 
We emphasize that this is mainly for theoretical analysis to ensure a tight convergence rate. 
In practice, sampling less may not significantly affect performance, as shown in~\pref{sec:exp}.
Intuitively, in the proof, only if $\x^0$ becomes the best candidate, the rate introduces an estimation error for $\norm{\gh^0}$, necessitating a sufficiently large sample size. However, in practice, the probability of ``$\x^0$ is the best candidate'' is small, so reducing the initial sampling may not cause a substantial effect.
\end{myRemark}

\section{Parameter-Free Convex Optimization}
\label{sec:convex}

In this section, we investigate convex optimization, where the objective $\ell(\x)$ is convex with an unknown smoothness level, as introduced in~\pref{subsec:setups}.
Moreover, we aim to achieve ``universality''~\citep{nesterov2015universal}, i.e., adaptation to an unknown level of (\Holder) smoothness while maintaining the optimal convergence.

\textbf{Base Algorithm.}~~
In the stochastic setting, the \UniXGrad algorithm~\citep{NeurIPS'19:UniXGrad}, works in bounded domain with diameter $D$, can ensure a universally optimal convergence rate between the separate cases of smoothness and non-smoothness. 
In~\pref{lemma:con-base-bounded}, we prove that \UniXGrad can achieve the stronger universality to \Holder smoothness:
\begin{equation*}
    \O\sbr{ \frac{L_\nu D^{1+\nu}}{T^{\frac{1+3\nu}{2}}} + \frac{\Delta_D D\log(1/\delta)}{\sqrt{T}}},
\end{equation*}
where $\Delta_D$ is the maximum noise bound defined in~\pref{eq:variance-bound}. It matches the optimal rate in the deterministic setting~\citep{nesterov2015universal}, by recovering the optimal rates in both $L_\ell$-smooth case and $G$-Lipschitz case, respectively.

\textbf{Algorithms and Convergence Rates.}~~
\label{subsec:convex}
In this part, we propose \textsc{Grasp-C}, named \textsc{Grasp} for \underline{C}onvex optimization, in~\pref{alg:convex}. 
Specifically, \pref{line:cvx-d-bar} offers two options for the searching upper bound of the initial distance $d_0$:
\begin{itemize}[left=1pt, itemsep=0mm, topsep=0pt]
    \item \textsc{Option-I}~\eqref{eq:grid-search-ub-gradient} for \emph{acceleration-only} in the smooth case;
    \item \textsc{Option-II}~\eqref{eq:grid-search-ub-value} for \emph{universality} to \Holder smoothness, but requires a lower bound of the objective value.
\end{itemize}

\begin{algorithm}[!t]
    \caption{\textsc{Grasp-C}}
    \label{alg:convex}
    \begin{algorithmic}[1]
    \REQUIRE Oracle budget $T$, initial point $\x^0$, $d_\epsilon > 0$, $L_\epsilon>0$, and $\ell_\epsilon^\star$ if choose $\textsc{Option-II}$ in~\pref{eq:grid-search-ub-value}.
    \STATE Independently sample $\g(\cdot)$ at $\x^0$ for $\frac{T}{8}$ times, calculate average $\gh^0 = \frac{8}{T}\sum_{t=1}^{T/8}\g_t(\x^0)$
    \STATE Independently sample $\ellt(\cdot)$ at $\x^0$ for $\frac{T}{8}$ times, calculate average $\ellh^0 = \frac{8}{T}\sum_{t=1}^{T/8}\ellt_t(\x^0)$
    \STATE Set $N = \lceil \log_2 \frac{d_{\max}}{d_\epsilon} \rceil$ with grid search upper bound $d_{\max}$ using one of the following two options:
    \begin{align}
        \textsc{Option-I}: \ d_{\max} &\define \max\{ d_\epsilon, \tfrac{\norm{\gh^0} T^2}{L_\epsilon}\} \label{eq:grid-search-ub-gradient} \\
        \textsc{Option-II}: \ d_{\max} &\define \max\{1, d_\epsilon, \tfrac{(\ellh^0-\ell_\epsilon^\star) T^2}{L_\epsilon}\} \label{eq:grid-search-ub-value}
    \end{align} \vspace{-1em} \label{line:cvx-d-bar}
    \FOR{$i=1,2,\dots,N$}
        \STATE Run \UniXGrad~(see~\pref{lemma:con-base-bounded}) with oracle budget $\lfloor \frac{T}{2i(1+\ln N)} \rfloor$, domain diameter $d_\epsilon 2^i$, initial point $\x^0$, and receive the output $\x^i$ from the algorithm
        \STATE Independently sample $\ellt(\x^i)$ for $\frac{T}{4N}$ times, calculate average $\ellh^i = \frac{4N}{T}\sum_{t=1}^{T/(4N)}\ellt_t (\x^i)$
    \ENDFOR
    \ENSURE $\x^\out = \x^{\is}$ with $\is = \argmin_{0\le i\le N} \ellh^i$.
    \end{algorithmic}
\end{algorithm}

In~\pref{thm:con-search-gradient} below, we prove that our \emph{fully parameter-free} algorithm with \mbox{\textsc{Option-I}} can achieve near-optimal convergence rate up to logarithmic factors, with an additive ensemble error term.
The proof is in~\pref{appendix:proof-con-search-gradient}.
\begin{myThm}
    \label{thm:con-search-gradient}
    Under Assumptions~\ref{assum:bounded-gradient-oracle} and~\ref{assum:bounded-value-oracle}, and assume the objective $\ell(\x)$ is $L_\ell$-smooth.
    Algorithm~\ref{alg:convex} using \textsc{Option-I} in Eq.~\eqref{eq:grid-search-ub-gradient} ensures that, for any $\delta\in(0,1/2)$, with probability at least $1-2\delta$, $\ell(\x^\textnormal{out}) - \ell(\xs)$ is bounded by:
    \begin{align*}
        & \O \Bigg( \frac{\bar{L}_\ell \bar{d}_0^2}{T^2} \sbr{{\log\frac{\bar{d}_0}{d_\epsilon}}}^2 + \frac{\Delta_{2\bar{d}_0}\bar{d}_0}{\sqrt{T}}\sbr{\log\frac{\bar{d}_0}{d_\epsilon}}^{\frac{1}{2}}\Big({\log\frac{1}{\delta}}\Big) \\
        &\qquad + \textsc{Err}_N\sbr{{\frac{T}{N}}} \Bigg),
    \end{align*}
    where \smash{$\bar{L}_\ell \define \max\{L_\ell, L_\epsilon\}$ and $\bar{d}_0 \define \max\{d_0, d_\epsilon\}$} with any $L_\epsilon,d_\epsilon>0$. $\Delta_{2\bar{d}_0}$ is the maximum noise defined in Eq.~\eqref{eq:variance-bound}. $N=\O (\log (T\norm{\gh^0}/(L_\epsilon d_\epsilon)))$, and $\textsc{Err}_N(\cdot)$ is an ensemble error formally defined in Section~\ref{subsec:selection}. 
\end{myThm}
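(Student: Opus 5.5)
We follow the deterministic proof sketch of \pref{thm:con-search-gradient-deterministic}, adding three stochastic ingredients: concentration of $\gh^0$, the high-probability guarantee of \UniXGrad in \pref{lemma:con-base-bounded}, and the ensemble guarantee of \pref{subsec:selection}. The first step is to invoke the ensemble result. With probability at least $1-\delta$, the selected point satisfies, for every candidate $i\in\{0\}\cup[N]$,
\begin{equation*}
\ell(\x^\out)-\ell(\xs)\le \O\sbr{\ell(\x^i)-\ell(\xs)}+\textsc{Err}_N(T/N).
\end{equation*}
Here each $\ellh^i$ uses $T/(4N)$ samples. The estimate $\ellh^0$ uses $T/8\ge T/(4N)$ samples, so its error is also covered by $\textsc{Err}_N(T/N)$. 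It then suffices to exhibit a single good candidate, and we do a case analysis on $\bar d_0$ against $d_{\max}$ from Option-I.

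\textbf{Case $\bar d_0\in[d_\epsilon,d_{\max}]$.} Take $\is=\lceil\log_2(\bar d_0/d_\epsilon)\rceil\in[N]$, so that $D_{\is}=d_\epsilon2^{\is}\in[\bar d_0,2\bar d_0]$. This domain contains $\xs$. Its points lie within $2\bar d_0$ of $\x^0$, so the noise term there is $\Delta_{2\bar d_0}$. The $\is$-th run has budget $T_{\is}=\lfloor T/(2\is(1+\ln N))\rfloor$. Applying \pref{lemma:con-base-bounded} with $\nu=1$ and $L_\nu=L_\ell\le\bar L_\ell$, with probability $1-\delta$,
\begin{equation*}
\ell(\x^{\is})-\ell(\xs)\le\O\sbr{\frac{\bar L_\ell \bar d_0^2}{T^2}\is^2(\log N)^2+\frac{\Delta_{2\bar d_0}\bar d_0}{\sqrt T}\sqrt{\is\log N}\,\log\frac1\delta}.
\end{equation*}
Since $\is=\O(\log_+(\bar d_0/d_\epsilon))$ and $\log N$ is a $\log\log$ factor that the $\O(\cdot)$ notation omits, this gives the first two terms of the claimed bound. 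We should not union bound over all $N$ runs. The case analysis needs only the one run $\is$, which is fixed given $d_0$, so one $\delta$ suffices. Together with the ensemble event this gives probability $1-2\delta$, and the concentration of $\gh^0$ is handled next.

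\textbf{Case $\bar d_0>d_{\max}$.} We use the candidate $\x^0$. By convexity, $\ell(\x^0)-\ell(\xs)\le\norm{\nabla\ell(\x^0)}d_0\le\norm{\gh^0}d_0+\norm{\gh^0-\nabla\ell(\x^0)}d_0$. The case condition gives $\norm{\gh^0}<L_\epsilon\bar d_0/T^2$, so the first part is at most $\bar L_\ell\bar d_0^2/T^2$. For the second part, \pref{assum:bounded-gradient-oracle} lets us apply a vector Bernstein/Hoeffding (Pinelis) inequality. With probability $1-\delta$ it gives $\norm{\gh^0-\nabla\ell(\x^0)}\le\O(\Delta(\x^0)\sqrt{\log(1/\delta)/T})$. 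Since $\Delta(\x^0)\le\Delta_{2\bar d_0}$, this part is at most $\O(\Delta_{2\bar d_0}\bar d_0\log(1/\delta)/\sqrt T)$, which is absorbed into the variance term.

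This concentration event adds one more $\delta$, making three in total. To stay within $1-2\delta$, we would either split $\delta$ as $\delta/2$ in the ensemble lemma at the cost of constants, or observe that the \UniXGrad event and the $\gh^0$ event are used in disjoint cases. The latter works because which case applies is determined by $\gh^0$: we condition on $\gh^0$, and the $\gh^0$ concentration is only needed when $\x^0$ is the chosen comparator.

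The main obstacle is making the case split rigorous when $d_{\max}$, and hence $N$, is random through $\gh^0$. Our plan is to condition on $\gh^0$. Given $\gh^0$, the runs use fresh samples, so \pref{lemma:con-base-bounded} and the ensemble lemma apply conditionally. The concentration of $\gh^0$ is a statement that does not depend on $N$. Finally, $N=\O(\log(T\norm{\gh^0}/(L_\epsilon d_\epsilon)))$ follows from the definition of $d_{\max}$.
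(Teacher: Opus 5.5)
Your proposal is correct and follows essentially the same route as the paper's proof: reduce to one good candidate via the ensemble guarantee, then split on $\bar d_0$ versus $d_{\max}$. When $\bar d_0>d_{\max}$ you use $\x^0$ with the convexity bound and concentration of $\gh^0$; otherwise you use the \UniXGrad guarantee for run $\is=\lceil\log_2(\bar d_0/d_\epsilon)\rceil$ (your explicit conditioning on $\gh^0$ is a welcome addition). One bookkeeping caveat: the argument needs three events (ensemble, $\gh^0$ concentration, run $\is$), and your ``disjoint cases'' argument does not save a $\delta$, because the $\gh^0$ event is a function of $\gh^0$ and can be correlated with which case occurs. Running each event at confidence $2\delta/3$ gives $1-2\delta$ at only constant cost, and the paper's own proof is equally loose on this point.
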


Compared to guarantee of \UniXGrad, our~\pref{thm:con-search-gradient} includes an additional ensemble error $\textsc{Err}$ term. 
We will provide its analysis and discussion in the following.

Meanwhile, by setting \textsc{Option-II} in~\pref{eq:grid-search-ub-value}, our \pref{alg:convex} can adapt to \Holder smoothness while maintaining the \emph{fully} parameter-freeness.
However, it requires a lower bound of the objective value $\ell_\epsilon^\star \le \ell(\xs)$ to construct the benchmark convergence rate.
We present the convergence rate in the following theorem, with proof in~\pref{appendix:proof-con-search-value}.
\begin{myThm}
    \label{thm:con-search-value}
    Under Assumptions~\ref{assum:bounded-gradient-oracle} and~\ref{assum:bounded-value-oracle}, and assume the objective $\ell(\x)$ is $(L_\nu,\nu)$-\Holder smooth and $\ell(\xs) \ge \ell_\epsilon^\star$.
    Algorithm~\ref{alg:convex} using \textsc{Option-II} in Eq.~\eqref{eq:grid-search-ub-value} ensures that, for any $\delta\in(0,1/2)$, with probability at least $1-2\delta$, $\ell(\x^\textnormal{out}) - \ell(\xs)$ is bounded by:
    \begin{align*}
        & \O \Bigg( \frac{\bar{L}_\nu \bar{d}_0^{1+\nu}}{T^{\frac{1+3\nu}{2}}}\sbr{\log\frac{\bar{d}_0}{d_\epsilon}}^{\frac{1+3\nu}{2}} + \frac{\Delta_{2\bar{d}_0}\bar{d}_0 }{\sqrt{T}}\sbr{\log \frac{\bar{d}_0}{d_\epsilon}}^{\frac{1}{2}}\Big({\log\frac{1}{\delta}}\Big) \\
        &\qquad + \frac{\sigma_\star \sqrt{\log(1/\delta)}}{\sqrt{T}} + \textsc{Err}_N\sbr{\frac{T}{N}} \Bigg),
    \end{align*}
    \mbox{where $\smash{\bar{L}_\nu \define \max\{L_\nu, L_\epsilon\}}$ and $\smash{\bar{d}_0 \define \max\{d_0, d_\epsilon\}}$ with any} $L_\epsilon,d_\epsilon>0$, $\Delta_{2\bar{d}_0}$ is the maximum noise defined in Eq.~\eqref{eq:variance-bound}. $\sigma_\star$ is value noise at $\xs$, i.e. $\smash{\Pr[|\ellt(\xs) - \ell(\xs)| \le \sigma_\star] = 1}$. $N=\O (\log (T(\ellh^0 - \ell_\epsilon^\star)/(L_\epsilon d_\epsilon)))$, and $\textsc{Err}_N(\cdot)$ is an ensemble error formally defined in Section~\ref{subsec:selection}.
\end{myThm}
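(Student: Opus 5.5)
We follow the proof sketch of \pref{thm:con-search-gradient-deterministic}, replacing the gradient-based benchmark with a value-based one. First we set up a good event of probability at least $1-2\delta$. On this event, two things hold.

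\begin{itemize}
\item The ensemble guarantee of \pref{subsec:selection} applies to the selection step, giving $\ell(\x^\out)-\ell(\xs)\le \O(\ell(\x^i)-\ell(\xs)) + \textsc{Err}_N(T/N)$ for every $i\in[N]\cup\{0\}$.
\item The $\is$-th \UniXGrad run satisfies \pref{lemma:con-base-bounded}. We use a union bound over the two failure events, with each base run called at confidence $\delta/N$ or only at the single relevant index. Since the bound is only needed for $\is$, the $\log(1/\delta)$ factor is not inflated.
\end{itemize}

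We also need a concentration bound for $\ellh^0$. By Hoeffding with $T/8$ samples bounded by $\sigma_\ell$, we get $|\ellh^0-\ell(\x^0)|\lesssim \sigma_\ell\sqrt{\log(1/\delta)/T}$. This is not sharp enough, because the theorem only pays $\sigma_\star$. So we bound the deviation through the interpolated variance characterization instead. Write $\ellt(\x^0)-\ell(\x^0) = [\ellt(\xs)-\ell(\xs)] + [(\ell_\xi(\x^0)-\ell_\xi(\xs))-(\ell(\x^0)-\ell(\xs))]$. The first bracket is bounded by $\sigma_\star$. The second is bounded by $\Delta_{d_0}d_0$ by the mean value theorem along the segment, since $\norm{\nabla\ell_\xi-\nabla\ell}\le\Delta(\cdot)$ there. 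Hence $|\ellh^0-\ell(\x^0)|\lesssim(\sigma_\star+\Delta_{d_0}d_0)\sqrt{\log(1/\delta)/T}$. Both pieces are absorbed into the stated $\sigma_\star$ term and the variance term.

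Next comes the case analysis on $\bar d_0$ versus $d_{\max}$ from \textsc{Option-II}.

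\textbf{Case 1: $\bar d_0>d_{\max}$.} Then $\bar d_0 > (\ellh^0-\ell^\star_\epsilon)T^2/L_\epsilon$ and $\bar d_0>1$. We use $\x^0$ as the benchmark:
\[
\ell(\x^0)-\ell(\xs)\le \ellh^0-\ell_\epsilon^\star + |\ellh^0-\ell(\x^0)| < \frac{L_\epsilon\bar d_0}{T^2}+(\text{noise terms}).
\]
The first inequality uses $\ell(\xs)\ge\ell_\epsilon^\star$. To compare with the target we need $\frac{L_\epsilon \bar d_0}{T^2}\le \frac{\bar L_\nu\bar d_0^{1+\nu}}{T^{(1+3\nu)/2}}$. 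This holds because $\bar d_0\ge1$ gives $\bar d_0^{\nu}\ge1$, $T^{(1+3\nu)/2}\le T^2$ since $\nu\le1$, and $L_\epsilon\le\bar L_\nu$. This is exactly why the value-based option includes the $1$ in $d_{\max}$ and uses a linear-in-$d_0$ benchmark: a gradient benchmark would not work without smoothness.

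\textbf{Case 2: $\bar d_0\in[d_\epsilon,d_{\max}]$.} Take $\is=\lceil\log_2(\bar d_0/d_\epsilon)\rceil$, so that $D_{\is}/2\le\bar d_0\le D_{\is}$ and $\xs$ lies in the ball of diameter $D_{\is}$ around $\x^0$. The noise bound on that ball is at most $\Delta_{2\bar d_0}$. The budget is $T_{\is}=\lfloor T/(2\is(1+\ln N))\rfloor$, and since $\is\lesssim\log_+(\bar d_0/d_\epsilon)$, we have $T/T_{\is}=\O(\log(\bar d_0/d_\epsilon))$ up to the omitted $\log\log$ factor. Plugging into \pref{lemma:con-base-bounded} with $L_\nu$ replaced by $\bar L_\nu$ gives the main term $\bar L_\nu\bar d_0^{1+\nu}T^{-(1+3\nu)/2}\log^{(1+3\nu)/2}$ and the variance term $\Delta_{2\bar d_0}\bar d_0\,T^{-1/2}\log^{1/2}\cdot\log(1/\delta)$.

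Finally, we combine both cases with the ensemble inequality. The main obstacle is Case 1: controlling $\ellh^0$ with only the $\sigma_\star$ dependence, and the exponent bookkeeping showing that the linear benchmark is dominated by the Hölder target uniformly in $\nu\in[0,1]$. If the decomposition through $\xs$ does not give the right noise term, the fallback is to absorb the deviation into $\textsc{Err}_N$, which is defined via the same interpolated variance.
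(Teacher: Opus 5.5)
Your proposal is correct and follows the paper's proof essentially step for step. It uses the same case split on $\bar{d}_0$ versus $d_{\max}$, the same benchmark $\x^0$ in the first case (with $\ell_\epsilon^\star \le \ell(\xs)$ and $\bar{d}_0 \ge 1$ to dominate $L_\epsilon \bar{d}_0/T^2$ by the H\"older target), the same decomposition of the value noise at $\x^0$ through $\xs$ into $\sigma_\star + d_0\Delta_{d_0}$, and the same application of \pref{lemma:con-base-bounded} at $\is = \lceil \log_2(\bar{d}_0/d_\epsilon) \rceil$ in the second case.

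Two small remarks. First, that decomposition rests on \pref{assum:bounded-gradient-oracle} with a shared sample $\xi$, not on the interpolated variance condition, so your fallback is unnecessary. Second, which case occurs depends on the random $\ellh^0$, so the concentration event for $\ellh^0$ is a third event and the total probability is $1-3\delta$ rather than $1-2\delta$; this is a harmless rescaling of $\delta$ that the paper's proof also glosses over.
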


Notably, the assumption of $\ell_\epsilon^\star \le \ell(\xs)$ is mild and naturally satisfiable in many cases. 
For example, for non-negative losses, such as the widely used squared losses, cross-entropy losses, hinge losses, etc., we can simply set $\ell_\epsilon^\star = 0$.
We also provide~\pref{thm:con-search-T/i2} in appendix to eliminate the need for $\ell_\epsilon^\star$ while still maintaining universality and full freeness, at the cost of a worse ensemble error of $\mathcal{O}(\textsc{Err}_{\sqrt{T}}(\sqrt{T}))$.

Additionally, \pref{thm:con-search-value} introduces an extra term of the zeroth-order noise at $\xs$, i.e., $\sigma_\star /\sqrt{T}$. This can be considered small, especially in cases with over-parameterization where $\sigma_\star \approx 0$~\citep{ICML'18:Ma,NeurIPS'23:Liu}.


\subsection{Sharper Ensemble under Interpolation}
\label{subsec:selection}
We denote by $\textsc{Err}_{N}(M)$ the ensemble error, satisfying that:
\begin{equation*}
    \ell(\x^\out) - \ell(\xs) \le \O( \min_{i\in[N]} \ell(\x_i) - \ell(\xs) + \textsc{Err}_{N}(M)),
\end{equation*}
where $N$ candidates are sampled $M$ times, respectively.
Comparing function values for ensemble introduces the zeroth-order variance $\sigma_\ell$ to the ensemble error.
Specifically, as formally stated in \pref{lemma:ensemble}, comparison of function values will lead to a zeroth-order variance $\sigma_\ell/\sqrt{M}$ to the ensemble error, where $M$ is the number of samples. 

\begin{table}[!t]
    \centering
    \caption{\small{Comparison of ensemble guarantees, i.e.~\citet[Lemma 9]{ICML'24:Amit} and our~\pref{lemma:selection-variance}, given stochastic function value oracle (\pref{assum:bounded-value-oracle}).
    $\textsc{Err}_{N}(M)$ is the ensemble error.
    `\textemdash' denotes no additional condition. \smash{$\Ot(\cdot)$ omits $\log(N/\delta)$ factors.}}}
    \renewcommand{\arraystretch}{1.4} 
     \resizebox{0.47\textwidth}{!}{
    \begin{tabular}{c|c|c} 
        \hline 

        \hline
        \textbf{Reference} & \textbf{Condition} & $\textsc{Err}_{N}(M)$ \\[0mm] \hline

        \rule{0pt}{5mm}
        \citet{ICML'24:Amit} & \textemdash & $\Ot\sbr{\frac{\sigma_\ell}{\sqrt{M}}}$ \\[1mm] \hline

        \rule{0pt}{5mm}
        \textbf{Ours [\pref{lemma:selection-variance}]} & \pref{eq:interpolation} & $\Ot\sbr{\frac{\sqrt{V_0}}{\sqrt{M}} + \frac{\sigma_\ell + V_1}{M}}$ \\[1mm] \hline

        \hline
    \end{tabular}}
    \label{table:ensemble}
\end{table}

In the following, we state our new model-ensemble lemma, which leverages a gap-dependent variance bound to achieve a sharper ensemble guarantee. The proof is in \pref{app:selection-variance}.
\begin{myLemma}
    \label{lemma:selection-variance}
    Given a zeroth-order oracle $\ellt(\cdot)$, as formalized in Assumption~\ref{assum:bounded-value-oracle}, we further assume that the objective $\ell(\cdot)$ admits a minimizer $\xs \in \argmin_{\x \in \X} \ell(\x)$, and $\forall \x \in \X$,
    \begin{equation}
        \label{eq:interpolation}
        \var[\ellt(\x) \given \x] \leq V_0 + V_1 (\ell(\x)-\ell(\xs)), 
    \end{equation}
    with some constants $ V_0, V_1 \ge 0$.
    Given $\x_1,\ldots,\x_N$, denote $\xb = \argmin_{i\in[N]}\sum_{j=1}^{M}\ellt_j(\x_i)$. 
    Then with probability at least $1-\delta$, it holds that
    \begin{align*}
    \ell(\xb) &- \ell(\xs) \leq
    3 \Big(\min_{i \in [N]} \ell(\x_i) - \ell(\xs)\Big)
    \\&+ \frac{\sqrt{32 V_0 \log(2N/\delta)}}{\sqrt{M}}
    + \frac{(8 \sigma_\ell + 24 V_1)\log(2 N/\delta)}{3M}
    .
    \end{align*}
\end{myLemma}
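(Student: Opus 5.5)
Write $\Delta_i \define \ell(\x_i) - \ell(\xs) \ge 0$, $\hat{\ell}_i \define \frac{1}{M}\sum_{j=1}^M \ellt_j(\x_i)$, and $\is \in \argmin_{i} \Delta_i$. The plan is a Bernstein bound per candidate, a union bound, and then a self-bounding step that absorbs the $\sqrt{V_1\Delta}$ cross term into a constant multiple of the gap.

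\textbf{Step 1 (concentration).} The summands $\ellt_j(\x_i)$ are i.i.d. given $\x_i$. Each is bounded in absolute deviation by $\sigma_\ell$ (Assumption~\ref{assum:bounded-value-oracle}), and by \pref{eq:interpolation} each has variance at most $V_0 + V_1\Delta_i$. Bernstein's inequality applied two-sided to each $i\in[N]$ with confidence $\delta/N$, followed by a union bound, gives with probability at least $1-\delta$, for all $i$ simultaneously,
\begin{equation*}
  \abs{\hat{\ell}_i - \ell(\x_i)} \le \sqrt{\frac{2(V_0+V_1\Delta_i)\iota}{M}} + \frac{2\sigma_\ell \iota}{3M}, \qquad \iota \define \log(2N/\delta).
\end{equation*}
I will treat $\x_1,\dots,\x_N$ as fixed (or condition on them), so the samples used for selection are fresh.

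\textbf{Step 2 (comparison).} Let $k \define \xb$ be the selected index, so $\hat{\ell}_k \le \hat{\ell}_{\is}$. Applying Step 1 to both $k$ and $\is$ and using $\sqrt{a+b}\le\sqrt a+\sqrt b$ yields
\begin{equation*}
  \Delta_k \le \Delta_{\is} + 2\sqrt{\frac{2V_0\iota}{M}} + \sqrt{\frac{2V_1\iota\Delta_k}{M}} + \sqrt{\frac{2V_1\iota\Delta_{\is}}{M}} + \frac{4\sigma_\ell\iota}{3M}.
\end{equation*}

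\textbf{Step 3 (self-bounding).} This is the delicate step: the $\sqrt{\Delta_k}$ term must be absorbed into the left side, and $\sqrt{\Delta_{\is}}$ into the leading term. By AM-GM, $\sqrt{2V_1\iota\Delta_k/M}\le \tfrac12\Delta_k + V_1\iota/M$ and $\sqrt{2V_1\iota\Delta_{\is}/M}\le \tfrac12\Delta_{\is}+V_1\iota/M$. Moving $\tfrac12\Delta_k$ to the left and multiplying by $2$ gives
\begin{equation*}
  \Delta_k \le 3\Delta_{\is} + 4\sqrt{\frac{2V_0\iota}{M}} + \frac{(8\sigma_\ell/3 + 4V_1)\iota}{M}.
\end{equation*}
Here $4\sqrt{2V_0\iota/M}=\sqrt{32V_0\iota/M}$, which matches the stated $V_0$ term. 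The remaining coefficients differ from the statement's $(8\sigma_\ell+24V_1)/3$. Since $8\sigma_\ell/3 \le 8\sigma_\ell/3$ and $4V_1 \le 8V_1$, the bound obtained here is at least as strong, so the stated constants follow a fortiori. The factor $3$ on $\Delta_{\is}$ comes directly from the $\tfrac12$ splits, and the essential difficulty is exactly this absorption of the gap-dependent variance.
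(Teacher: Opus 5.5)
Your proof is correct and is essentially the paper's argument. It uses the same ingredients: a two-sided Bernstein bound with the gap-dependent variance, a union bound giving $\iota=\log(2N/\delta)$, and an AM-GM step that trades the $\sqrt{V_1\Delta}$ cross term for half the gap, which produces the factor $3$. The only difference is bookkeeping. The paper first turns each deviation into $\abs{\hat{\ell}_i-\ell(\x_i)}\le \frac12\Delta_i + c$ by bounding $2V_1\iota\Delta_i/M \le 4V_1^2\iota^2/M^2+\Delta_i^2/4$ inside the square root, then chains the two one-sided consequences, and this costs a factor $2$ on the $V_1$ term. Your AM-GM applied directly to $\sqrt{2V_1\iota\Delta/M}$ after the comparison gives $4V_1$ instead of $8V_1$, so the stated constants follow a fortiori.
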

We provide a comparison between our new ensemble error and that from \citet{ICML'24:Amit} in~\pref{table:ensemble}.

For the assumption $\var[\ellt(\x) \given \x] \leq V_0 + V_1 (\ell(\x)-\ell(\xs))$ in~\pref{eq:interpolation}, canonical examples are over-parameterized neural networks and over-parameterized linear models~\citep{ICML'18:Ma,NeurIPS'23:Liu,evron2025from}.

\subsection{Discussions of Our Results}
\label{subsec:discussion}
In this part, we specify the ensemble error in Theorems~\ref{thm:con-search-gradient} and~\ref{thm:con-search-value}, and compare our universal rate,~\pref{thm:con-search-value}, with the best known~\citet[Theorem 2]{COLT'24:Kreisler}.

\textbf{Additional Ensemble Error.}~~
In Theorems~\ref{thm:con-search-gradient} and~\ref{thm:con-search-value}, the ensemble error $\textsc{Err}_N(T/N)$, in the worst case, is given by:
\begin{equation}
    \label{eq:ensemble-error-old}
    \textsc{Err}_N\sbr{\frac{T}{N}} = \sigma_\ell \sqrt{\frac{\log(N/\delta)}{T/N}}
\end{equation}
from~\citet[Lemma 9]{ICML'24:Amit}, restated in~\pref{lemma:ensemble}. 
Here, the number of candidates $N$ is a logarithmic factor.
Notably, although the ensemble error incorporates a zeroth-order variance $\sigma_\ell$, it does not depend on $d_0$, showing that it might be smaller than the variance term like $d_0 \Delta_\ell / \sqrt{T}$.

Moreover, using our new problem-dependent ensemble analysis in~\pref{lemma:selection-variance}, it holds that
\begin{equation}
    \label{eq:ensemble-error-new}
    \textsc{Err}_N\sbr{\frac{T}{N}} = \sqrt{\frac{V_0 \log(N/\delta)}{T/N}} + \frac{(\sigma_\ell + V_1)\log(N/\delta)}{T/N}.
\end{equation}
As $V_0 = \sigma_\ell^2$ and $V_1 = 0$ in the \emph{worst case}, \pref{eq:ensemble-error-new} strictly recovers the original result in~\pref{eq:ensemble-error-old}.
Moreover,~\pref{eq:ensemble-error-new} provides a gap-dependent perspective, based on the idea that the variance of the zeroth-order oracle may diminish when approaching the optimum. 
Intuitively, $V_0$ represents the variance at the optimum, and $V_1$ characterizes how the variance scales with the function value gap of $\ell(\cdot)-\ell(\xs)$.

For theoretical considerations, we compare the function values of the base algorithms' last iterates. 
While in practice, additional information, such as the decreasing behavior of the loss curve along the optimization trajectory, may help select the best candidate more effectively.

\textbf{Comparison with~\citet{COLT'24:Kreisler}.}~~
The \udog algorithm~\citep[Theorem 2]{COLT'24:Kreisler} is the best-known result for universal stochastic convex optimization.
Given the \emph{true} lower bound $r_\epsilon$ of $d_0$ and the gradient noise bound function $\hat\Delta(\cdot): \mathbb{R}^d \to \mathbb{R}_+$ (an upper bound of the true noise function $\Delta(\cdot)$ in~\pref{assum:bounded-gradient-oracle}), \udog ensures:
\begin{equation*}
    \label{eq:U-DOG}
    \O\sbr{C^{\textnormal{\udog}} \Big(\min\bbr{\frac{L_\ell d_0^2}{T^2}, \frac{Gd_0}{\sqrt{T}}} + \frac{d_0 \Delta_{2d_0}}{\sqrt{T}} + \frac{d_0\hat{\Delta}_{2d_0}}{T} \Big)}
\end{equation*}
with probability at least $1-\delta$, where the coefficient $\smash{C^{\textnormal{\udog}}}$:
\begin{equation*}
    \sbr{\log\frac{d_0}{r_\epsilon}}^2\sbr{\log\frac{1}{\delta}}^2\sbr{ \log T \frac{\hat{\Delta}_{2d_0} + \min\{L_\ell d_0^2, G d_0\}}{\ell(\x_0) - \ell(\xs)}}^4.
\end{equation*}
\udog and our method differ in various aspects, including the methodology, algorithmic conditions, and the final theoretical guarantees. 
In the following, we discuss these aspects in detail, trying to provide a clear comparison of the advantages and disadvantages between the two methods.

\textbf{Methodology:}~~We apply the grid search method, which is flexible and extendable since we use the base algorithm in a \emph{black-box} manner. 
Thus, we do not need to dive into the analytical details of \UniXGrad, and can replace it with any other base algorithm with the same convergence rate.
In contrast, \udog also applies \UniXGrad, but in a \emph{white-box} manner to fit the ``DoG'' analysis~\citep{ICML'23:DoG} to be free from $d_0$.

\textbf{Conditions:}~~\udog requires a lower bound for $d_0$ and the gradient noise bound function $\hat\Delta(\cdot)$. 
In contrast, our method needs a lower bound for the function value, which is more accessible in some cases.
It is worth mentioning that our algorithm does not require any knowledge related to $L_\ell$ and $d_0$, whereas the analysis of \udog holds only when given the true lower bound of $d_0$, and a true upper bound of $\Delta(\cdot)$.

\textbf{Convergence:}~~As summarized in~\pref{table:convex}, our result can adapt to the general \Holder smoothness, whereas \mbox{\udog} provides guarantees only for the separate cases of smoothness and non-smoothness. 
Besides, our coefficients of the common terms, such as the main convergence and the variance terms, are smaller, where a formal comparison is provided in~\pref{table:convex}.
However, our bound includes an ensemble error caused by the nature of the grid search framework, which does not appear in the rate of \udog.
\section{Experiments}
\label{sec:exp}

\def \xout {\x^{\text{out}}}
\def \xtuned {\x^{\text{tuned}}}

In this section, we conduct experiments, aiming to investigate two questions:
\emph{(i)} How does our parameter-free method perform compared to the tuned methods? \emph{(ii)} The algorithm design requires sufficient initial sampling to ensure theoretical tight convergence; will reducing the initial sampling appropriately have a significant impact on the results?
To this end, our experiments conduct two key messages:
\begin{itemize}[left=10pt, itemsep=0pt, topsep=0pt]
    \item[(1)] Our parameter-free methods obtain competitive convergence compared with optimally tuned methods.
    \item[(2)] Reducing the initial sample budget has little impact, and may improve the results thanks to the saved budget.
\end{itemize}

\paragraph{Setup.}
We use ResNet-18 model~\citep{He2016resnet} (\texttt{torchvision.models.resnet18}) with an added linear layer (for convex case, we use pre-trained ResNet-18 and only update the linear layer), CIFAR-10 dataset~\citep{krizhevsky2009cifar} (batch size $64$), and cross-entropy loss. We split the oracle budget $T=10^4$ into two parts: the initial sampling budget $M$ for determining the search range, and the training budget $T-M$. We then choose the candidate with the best performance (i.e., the minimum averaged gradient norm or loss within a sliding window of the last $100$ batches, following a common practical ``checkpoint'' approach) as the output of our method, denoted as $\xout$.
We vary algorithmic inputs (e.g., $L_\epsilon, F_\epsilon$ for non-convex case, and $d_\epsilon, L_\epsilon$ for convex case), and the initial sampling budget $M$ from $T/4$ to $T/4096$, to evaluate the sensitivity of our method.

\paragraph{Measure.}
To compare our method with the optimally tuned one, we define \emph{relative difference} $\rho$.
Our method outputs $\xout$. We perform a more fine-grained search to find the best-tuned hyper-parameters for base algorithm (i.e., SGD for~\pref{alg:non-convex}, and \UniXGrad for~\pref{alg:convex}), allocate the \emph{entire budget} $T$ to it, and get $\xtuned$. Then we define ${\rho \define \frac{\|\nabla \ell(\xout)\| - \| \nabla \ell(\xtuned)\|}{\|\nabla \ell(\xtuned)\|}}$ for non-convex case, and ${\rho \define \frac{\ell(\xout) - \ell(\xtuned)}{\ell(\xtuned)}}$ for convex case.
Then $\rho\ge 0$, and smaller $\rho$ indicates better performance.

\paragraph{Results of Non-convex Setting.}
For~\pref{alg:non-convex}, \textsc{Grasp-NC}, our results are shown in~\pref{table:nc}.
We can see that the values of $\rho$ lie between $0.11$ and $0.23$, meaning the actual impact of the log factor in our convergence rate guarantee only results in a multiplier of $1.11$ to $1.23$, justifying the claim that our method achieves competitive performance with the optimally tuned one. Moreover, as the initial sampling budget $M$ decreases, the performance does not noticeably degrade, which indicates that in practice, we can use a smaller $M$ to save oracle calls for training.

\begin{table*}[!t]
\centering
\caption{\textsc{Grasp-NC}. We set $\delta=0.05$, and vary inputs $(L_\epsilon,F_\epsilon)$ in $\{0.001, 0.01, 0.1\}\times \{0.001, 0.01, 0.1\}$.}
\label{table:nc}
\begin{tabular}{c|c|c|c|c|c|c|c}
    \hline
    \multicolumn{2}{c|}{Inputs} & \multicolumn{6}{c}{$\rho$} \\ \hline
    $L_\epsilon$ & $F_\epsilon$ & $M=T/4$ & $M=T/16$ & $M=T/64$ & $M=T/256$ & $M=T/1024$ & $M=T/4096$ \\ \hline
    0.001 & 0.001 & 0.1836 & 0.1699 & 0.1699 & 0.1699 & 0.1699 & 0.1699 \\ \hline
    0.001 & 0.01 & 0.1468 & 0.1468 & 0.1468 & 0.1468 & 0.1468 & 0.1468 \\ \hline
    0.001 & 0.1 & 0.1468 & 0.1468 & 0.1468 & 0.1468 & 0.1468 & 0.1468 \\ \hline
    0.01 & 0.001 & 0.2336 & 0.2336 & 0.2336 & 0.2336 & 0.2336 & 0.2336 \\ \hline
    0.01 & 0.01 & 0.2094 & 0.1772 & 0.1681 & 0.1772 & 0.2094 & 0.2094 \\ \hline
    0.01 & 0.1 & 0.2094 & 0.1572 & 0.1557 & 0.1557 & 0.1583 & 0.1772 \\ \hline
    0.1 & 0.001 & 0.2264 & 0.1365 & 0.1365 & 0.1365 & 0.1365 & 0.1657 \\ \hline
    0.1 & 0.01 & 0.2023 & 0.1141 & 0.1139 & 0.1139 & 0.1141 & 0.1141 \\ \hline
    0.1 & 0.1 & 0.1141 & 0.1139 & 0.1139 & 0.1139 & 0.1139 & 0.1141 \\ \hline
\end{tabular}
\end{table*}

\paragraph{Results of Convex Setting.}
For~\pref{alg:convex}, \textsc{Grasp-C}, our results are shown in~\pref{table:convex-i} and~\pref{table:convex-ii}.
Both \textsc{Option-I} and \textsc{Option-II} show similar performance, as the different options only influence the upper bound of the search range, and their best candidates consistently use the same hyper-parameters.
In both tables, small values of $\rho$ (between $0.08$ and $0.16$) also indicate that our method achieves competitive performance with the optimally tuned one. Moreover, the performance is not sensitive to the initial sampling budget $M$, \mbox{again suggesting that we can use a smaller $M$ in practice.}

\begin{table*}[!t]
\centering
\caption{\textsc{Grasp-C Option-I}. We vary inputs $(d_\epsilon, L_\epsilon)$ in $\{0.001, 0.01, 0.1\}\times \{0.001, 0.01, 0.1\}$.}
\label{table:convex-i}
\begin{tabular}{c|c|c|c|c|c|c|c}
    \hline
    \multicolumn{2}{c|}{Inputs} & \multicolumn{6}{c}{$\rho$} \\ \hline
    $d_\epsilon$ & $L_\epsilon$ & $M=T/4$ & $M=T/16$ & $M=T/64$ & $M=T/256$ & $M=T/1024$ & $M=T/4096$ \\ \hline
    0.001 & 0.001 & 0.1640 & 0.1419 & 0.1419 & 0.1419 & 0.1419 & 0.1419 \\ \hline
    0.001 & 0.01 & 0.1631 & 0.1419 & 0.1419 & 0.1419 & 0.1419 & 0.1419 \\ \hline
    0.001 & 0.1 & 0.1610 & 0.1419 & 0.1419 & 0.1408 & 0.1408 & 0.1411 \\ \hline
    0.01 & 0.001 & 0.1303 & 0.1210 & 0.1190 & 0.1190 & 0.1190 & 0.1190 \\ \hline
    0.01 & 0.01 & 0.1303 & 0.1197 & 0.1190 & 0.1190 & 0.1190 & 0.1190 \\ \hline
    0.01 & 0.1 & 0.1303 & 0.1197 & 0.1190 & 0.1190 & 0.1187 & 0.1187 \\ \hline
    0.1 & 0.001 & 0.0920 & 0.0873 & 0.0873 & 0.0873 & 0.0873 & 0.0873 \\ \hline
    0.1 & 0.01 & 0.0920 & 0.0873 & 0.0873 & 0.0866 & 0.0866 & 0.0866 \\ \hline
    0.1 & 0.1 & 0.0898 & 0.0873 & 0.0866 & 0.0866 & 0.0866 & 0.0866 \\ \hline
\end{tabular}
\end{table*}

\begin{table*}[!t]
\centering
\caption{\textsc{Grasp-C Option-II}. We set $\ell_\epsilon^\star=0$, and vary inputs $(d_\epsilon, L_\epsilon)$ in $\{0.001, 0.01, 0.1\}\times \{0.001, 0.01, 0.1\}$.}
\label{table:convex-ii}
\begin{tabular}{c|c|c|c|c|c|c|c}
    \hline
    \multicolumn{2}{c|}{Inputs} & \multicolumn{6}{c}{$\rho$} \\ \hline
    $d_\epsilon$ & $L_\epsilon$ & $M=T/4$ & $M=T/16$ & $M=T/64$ & $M=T/256$ & $M=T/1024$ & $M=T/4096$ \\ \hline
    0.001 & 0.001 & 0.1640 & 0.1419 & 0.1419 & 0.1419 & 0.1419 & 0.1419 \\ \hline
    0.001 & 0.01 & 0.1631 & 0.1419 & 0.1419 & 0.1419 & 0.1419 & 0.1419 \\ \hline
    0.001 & 0.1 & 0.1616 & 0.1419 & 0.1419 & 0.1411 & 0.1411 & 0.1411 \\ \hline
    0.01 & 0.001 & 0.1303 & 0.1210 & 0.1190 & 0.1190 & 0.1190 & 0.1190 \\ \hline
    0.01 & 0.01 & 0.1303 & 0.1200 & 0.1190 & 0.1190 & 0.1190 & 0.1190 \\ \hline
    0.01 & 0.1 & 0.1303 & 0.1197 & 0.1190 & 0.1190 & 0.1187 & 0.1187 \\ \hline
    0.1 & 0.001 & 0.0920 & 0.0873 & 0.0873 & 0.0873 & 0.0873 & 0.0873 \\ \hline
    0.1 & 0.01 & 0.0920 & 0.0873 & 0.0873 & 0.0866 & 0.0866 & 0.0866 \\ \hline
    0.1 & 0.1 & 0.0898 & 0.0873 & 0.0866 & 0.0866 & 0.0866 & 0.0866 \\ \hline
\end{tabular}
\end{table*}

\section{Conclusion}
\label{sec:conclusion}
In this paper, we focus on achieving full parameter-freeness in stochastic optimization.
To this end, we propose a novel self-bounding analysis technique that effectively derives searching ranges for the unknown problem parameters.
We validate the effectiveness of our framework in both non-convex and convex settings.
In the non-convex setting, we design a fully parameter-free method with the optimal convergence rate up to logarithmic factors.
In the convex setting, our method is also competitive with the state-of-the-art results regarding parameter-freeness and convergence.
We also provide a new guarantee for model ensembles based on a gap-dependent variance characterization, which may be of independent interest.

We highlight two interesting directions for future work.
The first is to apply our framework to other settings, as our self-bounding analysis is quite general and could potentially be extended to parameter-free methods tailored for different scenarios.
The second is to develop more effective and efficient ensemble methods to reduce ensemble error and minimize oracle calls, thereby enhancing performance.


\section*{Acknowledgements} 
PZ is grateful to Ashok Cutkosky for initial discussions on grid search.

\bibliography{reference}

\begin{thebibliography}{29}
\providecommand{\natexlab}[1]{#1}
\providecommand{\url}[1]{\texttt{#1}}
\expandafter\ifx\csname urlstyle\endcsname\relax
  \providecommand{\doi}[1]{doi: #1}\else
  \providecommand{\doi}{doi: \begingroup \urlstyle{rm}\Url}\fi

\bibitem[Attia \& Koren(2023)Attia and Koren]{attia2023sgd}
Attia, A. and Koren, T.
\newblock Sgd with adagrad stepsizes: Full adaptivity with high probability to unknown parameters, unbounded gradients and affine variance.
\newblock In \emph{Proceedings of the 40th International Conference on Machine Learning (ICML)}, pp.\  1147--1171. PMLR, 2023.

\bibitem[Attia \& Koren(2024)Attia and Koren]{ICML'24:Amit}
Attia, A. and Koren, T.
\newblock How free is parameter-free stochastic optimization?
\newblock In \emph{Proceedings of the 41st International Conference on Machine Learning (ICML)}, pp.\  2009--2034, 2024.

\bibitem[Bottou et~al.(2018)Bottou, Curtis, and Nocedal]{bottou2018optimization}
Bottou, L., Curtis, F.~E., and Nocedal, J.
\newblock Optimization methods for large-scale machine learning.
\newblock \emph{SIAM Review}, 60\penalty0 (2):\penalty0 223--311, 2018.

\bibitem[Cutkosky(2019)]{ICML'19:Cutkosky}
Cutkosky, A.
\newblock Anytime online-to-batch, optimism and acceleration.
\newblock In \emph{Proceedings of the 36th International Conference on Machine Learning (ICML)}, pp.\  1446--1454, 2019.

\bibitem[Defazio \& Mishchenko(2023)Defazio and Mishchenko]{ICML'23:Defazio}
Defazio, A. and Mishchenko, K.
\newblock Learning-rate-free learning by {D}-adaptation.
\newblock In \emph{Proceedings of the 40th International Conference on Machine Learning (ICML)}, pp.\  7449--7479, 2023.

\bibitem[Devolder et~al.(2014)Devolder, Glineur, and Nesterov]{devolder2014first}
Devolder, O., Glineur, F., and Nesterov, Y.
\newblock First-order methods of smooth convex optimization with inexact oracle.
\newblock \emph{Mathematical Programming}, 146:\penalty0 37--75, 2014.

\bibitem[Evron et~al.(2025)Evron, Levinstein, Schliserman, Sherman, Koren, Soudry, and Srebro]{evron2025from}
Evron, I., Levinstein, R., Schliserman, M., Sherman, U., Koren, T., Soudry, D., and Srebro, N.
\newblock From continual learning to {SGD} and back: Better rates for continual linear models.
\newblock \emph{arXiv preprint}, arXiv:2504.04579, 2025.

\bibitem[Faw et~al.(2022)Faw, Tziotis, Caramanis, Mokhtari, Shakkottai, and Ward]{faw2022power}
Faw, M., Tziotis, I., Caramanis, C., Mokhtari, A., Shakkottai, S., and Ward, R.
\newblock The power of adaptivity in {SGD}: Self-tuning step sizes with unbounded gradients and affine variance.
\newblock In \emph{Proceedings of the 35th Annual Conference on Learning Theory (COLT)}, pp.\  313--355, 2022.

\bibitem[Ghadimi \& Lan(2013)Ghadimi and Lan]{ghadimi2013stochastic}
Ghadimi, S. and Lan, G.
\newblock Stochastic first-and zeroth-order methods for nonconvex stochastic programming.
\newblock \emph{SIAM Journal on Optimization}, 23\penalty0 (4):\penalty0 2341--2368, 2013.

\bibitem[He et~al.(2016)He, Zhang, Ren, and Sun]{He2016resnet}
He, K., Zhang, X., Ren, S., and Sun, J.
\newblock Deep residual learning for image recognition.
\newblock In \emph{Proceedings of the IEEE conference on computer vision and pattern recognition}, pp.\  770--778, 2016.

\bibitem[Ivgi et~al.(2023)Ivgi, Hinder, and Carmon]{ICML'23:DoG}
Ivgi, M., Hinder, O., and Carmon, Y.
\newblock {DoG} is {SGD}'s best friend: {A} parameter-free dynamic step size schedule.
\newblock In \emph{Proceedings of the 40th International Conference on Machine Learning (ICML)}, pp.\  14465--14499, 2023.

\bibitem[Kavis et~al.(2019)Kavis, Levy, Bach, and Cevher]{NeurIPS'19:UniXGrad}
Kavis, A., Levy, K.~Y., Bach, F.~R., and Cevher, V.
\newblock {UniXGrad}: {A} universal, adaptive algorithm with optimal guarantees for constrained optimization.
\newblock In \emph{Advances in Neural Information Processing Systems 32 (NeurIPS)}, pp.\  6257--6266, 2019.

\bibitem[Kavis et~al.(2022)Kavis, Levy, and Cevher]{kavis2022high}
Kavis, A., Levy, K.~Y., and Cevher, V.
\newblock High probability bounds for a class of nonconvex algorithms with adagrad stepsize.
\newblock In \emph{Proceedings of the 10th International Conference on Learning Representations (ICLR)}, 2022.

\bibitem[Khaled \& Jin(2024)Khaled and Jin]{khaled2024tuningfree}
Khaled, A. and Jin, C.
\newblock Tuning-free stochastic optimization.
\newblock In \emph{Proceedings of the 41st International Conference on Machine Learning (ICML)}, pp.\  23622--23661, 2024.

\bibitem[Kingma \& Ba(2015)Kingma and Ba]{ICLR'14:Adam}
Kingma, D.~P. and Ba, J.
\newblock {Adam}: {A} method for stochastic optimization.
\newblock In \emph{Proceedings of the 3rd International Conference on Learning Representations (ICLR)}, 2015.

\bibitem[Kreisler et~al.(2024)Kreisler, Ivgi, Hinder, and Carmon]{COLT'24:Kreisler}
Kreisler, I., Ivgi, M., Hinder, O., and Carmon, Y.
\newblock Accelerated parameter-free stochastic optimization.
\newblock In \emph{Proceedings of the 37th Annual Conference on Learning Theory (COLT)}, pp.\  3257--3324, 2024.

\bibitem[Krizhevsky et~al.(2009)Krizhevsky, Hinton, et~al.]{krizhevsky2009cifar}
Krizhevsky, A., Hinton, G., et~al.
\newblock Learning multiple layers of features from tiny images.
\newblock 2009.

\bibitem[Lan(2020)]{Book:Lan-SCO}
Lan, G.
\newblock \emph{First-order and Stochastic Optimization Methods for Machine Learning}, volume~1.
\newblock Springer, 2020.

\bibitem[Li \& Lan(2025)Li and Lan]{li2025simple}
Li, T. and Lan, G.
\newblock A simple uniformly optimal method without line search for convex optimization.
\newblock \emph{Mathematical Programming}, pp.\  1--38, 2025.

\bibitem[Liu et~al.(2023{\natexlab{a}})Liu, Drusvyatskiy, Belkin, Davis, and Ma]{NeurIPS'23:Liu}
Liu, C., Drusvyatskiy, D., Belkin, M., Davis, D., and Ma, Y.
\newblock Aiming towards the minimizers: Fast convergence of {SGD} for overparametrized problems.
\newblock In \emph{Advances in Neural Information Processing Systems 36 (NeurIPS)}, pp.\  60748--60767, 2023{\natexlab{a}}.

\bibitem[Liu et~al.(2023{\natexlab{b}})Liu, Nguyen, Nguyen, Ene, and Nguyen]{liu2023high}
Liu, Z., Nguyen, T.~D., Nguyen, T.~H., Ene, A., and Nguyen, H.
\newblock High probability convergence of stochastic gradient methods.
\newblock In \emph{Proceedings of the 40th International Conference on Machine Learning (ICML)}, pp.\  21884--21914. PMLR, 2023{\natexlab{b}}.

\bibitem[Loshchilov \& Hutter(2019)Loshchilov and Hutter]{ICLR'19:AdamW}
Loshchilov, I. and Hutter, F.
\newblock Decoupled weight decay regularization.
\newblock In \emph{Proceedings of the 7th International Conference on Learning Representations (ICLR)}, 2019.

\bibitem[Ma et~al.(2018)Ma, Bassily, and Belkin]{ICML'18:Ma}
Ma, S., Bassily, R., and Belkin, M.
\newblock The power of interpolation: Understanding the effectiveness of {SGD} in modern over-parametrized learning.
\newblock In \emph{Proceedings of the 35th International Conference on Machine Learning (ICML)}, pp.\  3325--3334, 2018.

\bibitem[Nesterov(2015)]{nesterov2015universal}
Nesterov, Y.
\newblock Universal gradient methods for convex optimization problems.
\newblock \emph{Mathematical Programming}, 152\penalty0 (1):\penalty0 381--404, 2015.

\bibitem[Robbins \& Monro(1951)Robbins and Monro]{SGD}
Robbins, H. and Monro, S.
\newblock A stochastic approximation method.
\newblock \emph{The Annals of Mathematical Statistics}, 22\penalty0 (3):\penalty0 400 -- 407, 1951.

\bibitem[Schulman et~al.(2017)Schulman, Wolski, Dhariwal, Radford, and Klimov]{schulman2017DPO}
Schulman, J., Wolski, F., Dhariwal, P., Radford, A., and Klimov, O.
\newblock Proximal policy optimization algorithms.
\newblock \emph{arXiv preprint arXiv:1707.06347}, 2017.

\bibitem[Shao et~al.(2024)Shao, Wang, Zhu, Xu, Song, Bi, Zhang, Zhang, Li, Wu, et~al.]{shao2024GRPO}
Shao, Z., Wang, P., Zhu, Q., Xu, R., Song, J., Bi, X., Zhang, H., Zhang, M., Li, Y., Wu, Y., et~al.
\newblock {DeepSeekMath}: Pushing the limits of mathematical reasoning in open language models.
\newblock \emph{arXiv preprint arXiv:2402.03300}, 2024.

\bibitem[Wainwright(2019)]{wainwright2019high}
Wainwright, M.~J.
\newblock \emph{High-dimensional Statistics: A Non-asymptotic Viewpoint}, volume~48.
\newblock Cambridge university press, 2019.

\bibitem[Zhao et~al.(2025)Zhao, Yan, Levy, and Zhao]{NeurIPS'25:GV4OPT}
Zhao, Y., Yan, Y.-H., Levy, K.~Y., and Zhao, P.
\newblock Gradient-variation online adaptivity for accelerated optimization with hölder smoothness.
\newblock In \emph{Advances in Neural Information Processing Systems 38 (NeurIPS)}, pp.\  to appear, 2025.

\end{thebibliography}
\bibliographystyle{icml2026}

\newpage
\onecolumn
\appendix

\section{Omitted Details for Section~\ref{sec:non-convex}}
\label{appendix:proof-non-convex}

\begin{proof}[Proof of~\pref{thm:non-convex}]
By~\pref{lemma:SGD}, given constant step size $\eta\le 1/(2L_\ell)$, SGD ensures that, with probability at least $1-\delta$:
\begin{equation*}
    \frac{1}{T}\sum_{t=1}^{T}\norm{\nabla\ell(\x_t)}^2 \le \frac{4F_\ell}{\eta T} + 4\eta L_\ell \Delta_{\ell}^2 + \frac{12\Delta_{\ell}^2\log\frac{1}{\delta}}{T},
\end{equation*}
where $F_\ell\triangleq \ell(\x_1) - \min_{\x\in\R^d}\ell(\x)$.
Now with any $L_\epsilon, \Delta_\epsilon, F_\epsilon >0$, define $\bar{L}_\ell \define \max\{L_\ell, L_\epsilon\}, \bar{\Delta}_{\ell} \define \max\{\Delta_\ell, \Delta_\epsilon\}$ and $\bar{F}_\ell \define \max\{F_\ell, F_\epsilon\}$. 
Then SGD with step size $\eta\le 1/(2\bar{L}_\ell) \le 1/(2L_\ell)$ ensures that, with probability at least $1-\delta$:
\begin{equation}
    \label{eq:non-convex-target}
    \frac{1}{T}\sum_{t=1}^{T}\norm{\nabla\ell(\x_t)}^2 \le \frac{4\bar{F}_\ell}{\eta T} + 4\eta \bar{L}_\ell \bar{\Delta}_{\ell}^2 + \frac{12\Delta_{\ell}^2\log\frac{1}{\delta}}{T} = \underbrace{\O\sbr{ \sqrt{\frac{\bar{L}_\ell\bar{F}_\ell \bar{\Delta}_\ell^2}{T}} + \frac{\bar{L}_\ell\bar{F}_\ell}{T} + \frac{\Delta_{\ell}^2\log\frac{1}{\delta}}{T} }}_{\textsc{Target}},
\end{equation}
where the optimal tuning for the step size $\eta$ is $\smash{\min\{1/(2\bar{L}_\ell),\sqrt{\bar{F}_\ell/(\bar{L}_\ell\bar{\Delta}_\ell^2 T)}\}}$.

Let $\x^0$ be one of the candidates. 
By sampling $\g(\x^0)$ for $\smash{M_0\define\lfloor\frac{T}{4}\rfloor}$ times and define the average $\smash{\gh^0\define\frac{1}{M_0}\sum_{t=1}^{M_0}\g_t(\x^0)}$, then applying~\pref{lemma:sample-gradient}, with probability at least $1-\delta$:
\begin{equation}
    \label{eq:non-convex-self-bounding}
    \norm{\nabla \ell(\x^0)}^2 \le 2\norm{\gh^0}^2 + 2\norm{\gh^0 - \nabla \ell(\x^0)}^2 \le 2\norm{\gh^0}^2 + \O\sbr{ \frac{(\Delta(\x^0))^2\log\frac{1}{\delta}}{T} }.
\end{equation}

\paragraph{Self-bounding Analysis.}
Using~\pref{eq:non-convex-self-bounding} as the comparator, next we consider the condition when target convergence $\textsc{Target}$ in~\pref{eq:non-convex-target} becomes vacuous. 
For three parameters, ${L}_\ell,{\Delta}_\ell$ and ${F}_\ell$:
\begin{itemize}
    \item Define $L_{\max}\define \frac{\norm{\gh^0}^2 T}{F_\epsilon}$, then ${L}_\ell>L_{\max}$ implies that $\norm{\gh^0}^2 < \frac{{L}_\ell F_\epsilon}{T} \le \textsc{Target}$, which makes~\pref{eq:non-convex-target} vacuous.
    \item Define $F_{\max}\define \frac{\norm{\gh^0}^2 T}{L_\epsilon}$, then ${F}_\ell>F_{\max}$ implies that $\norm{\gh^0}^2 < \frac{L_\epsilon {F}_\ell}{T} \le \textsc{Target}$, which makes~\pref{eq:non-convex-target} vacuous.
    \item Define $\Delta_{\max}^2\define\frac{\norm{\gh^0}^2 T}{\log\frac{1}{\delta}}$, then ${\Delta}_\ell^2>\Delta_{\max}^2$ implies that $\norm{\gh^0}^2 < \frac{{\Delta}_\ell^2\log\frac{1}{\delta}}{T} \le \textsc{Target}$, which makes~\pref{eq:non-convex-target} vacuous.
\end{itemize}
In conclude, in case of ${L}_\ell>L_{\max}$ or ${F}_\ell>F_{\max}$ or ${\Delta}_\ell^2>\Delta_{\max}^2$, we have, with probability at least $1-\delta$:
\begin{equation}
    \label{eq:non-convex-trivial}
    \norm{\nabla \ell(\x^0)}^2 \overset{\eqref{eq:non-convex-self-bounding}}{\le} 2\norm{\gh^0}^2 + \O\sbr{ \frac{(\Delta(\x^0))^2\log\frac{1}{\delta}}{T} } \le \O\sbr{ \sqrt{\frac{\bar{L}_\ell\bar{F}_\ell \bar{\Delta}_\ell^2}{T}} + \frac{\bar{L}_\ell\bar{F}_\ell}{T} + \frac{\Delta_{\ell}^2\log\frac{1}{\delta}}{T} +  \frac{(\Delta(\x^0))^2\log\frac{1}{\delta}}{T} }.
\end{equation}
In the following, we consider the case of ${L}_\ell \le L_{\max}$ and ${F}_\ell\le F_{\max}$ and ${\Delta}_\ell^2\le \Delta_{\max}^2$. That is, we have the conditions:
\begin{equation*}
    \bar{L}_\ell \in \mbr{L_\epsilon, \max\{L_\epsilon,L_{\max}\}}, \quad \bar{F}_\ell\in\mbr{F_\epsilon,\max\{F_\epsilon, F_{\max}\}}, \quad \bar{\Delta}_\ell^2\in\mbr{ \Delta_\epsilon^2, \max\{\Delta_\epsilon^2,\Delta_{\max}^2\} },
\end{equation*}
which gives us the grid search range for the optimal step size tuning in target convergence~\pref{eq:non-convex-target}. 
Specifically, since now for each SGD algorithm we apply a budget $T'\equiv\lfloor \frac{T}{2N} \rfloor$, the optimal tuning $\eta_\star$ is:
\begin{equation*}
    \eta_\star = \min\bbr{ \frac{1}{2\bar{L}_\ell}, \sqrt{ \frac{2\bar{F}_\ell}{\bar{L}_\ell \bar{\Delta}_\ell^2 T'} } } = \min\bbr{ \frac{1}{2\bar{L}_\ell}, \sqrt{ \frac{2\bar{F}_\ell N}{\bar{L}_\ell \bar{\Delta}_\ell^2 T} } },
\end{equation*}
which has the following lower bound $\eta_{\min}$ and upper bound $\eta_{\max}$:
\begin{equation}
    \label{eq:non-convex-eta-range}
    \eta_{\min} = \min\bbr{ \frac{1}{2\max\{L_\epsilon, L_{\max}\}}, \sqrt{ \frac{2 F_\epsilon}{\max\{L_\epsilon, L_{\max}\}\cdot \max\{\Delta_{\epsilon}^2, \Delta_{\max}^2\} \cdot T} } } \le \eta_\star \le \eta_{\max} = \frac{1}{2L_\epsilon}.
\end{equation}
We set $\eta_i = \eta_{\min}2^i$, hence $N=\lceil \log_2 \frac{\eta_{\max}}{\eta_{\min}} \rceil$, and there exists $\is = \lceil \log_2 \frac{\eta_\star}{\eta_{\min}} \rceil \in[N]$ such that $\eta_{\is}/2\le \eta_\star \le \eta_{\is}$, then by~\pref{eq:non-convex-target}, with probability at least $1-\delta$:
\begin{equation}
    \frac{1}{T'}\sum_{t=1}^{T'}\norm{\nabla\ell(\x_t^{\is})}^2 \le \O\sbr{ \sqrt{\frac{\bar{L}_\ell\bar{F}_\ell \bar{\Delta}_\ell^2}{T'} } + \frac{\bar{L}_\ell\bar{F}_\ell  + \Delta_{\ell}^2\log\frac{1}{\delta}}{T'} } = \O\sbr{ \sqrt{\frac{\bar{L}_\ell\bar{F}_\ell \bar{\Delta}_\ell^2}{T/N} } + \frac{\bar{L}_\ell\bar{F}_\ell + \Delta_{\ell}^2\log\frac{1}{\delta}}{T/N} } \label{eq:non-convex-non-trivial}.
\end{equation}

\paragraph{Ensemble Error.}
By~\pref{lemma:ensemble-effective}, our ensemble method ensures that, with probability at least $1-2\delta$:
\begin{equation*}
    \norm{\nabla\ell(\xb)}^2 \le 8\min \bbr{\norm{\nabla\ell(\x^0)}^2, \min_{i\in[N]}\frac{1}{T_i}\sum_{t=1}^{T_i}\norm{\nabla\ell(\x_t^i)}^2} + \O\sbr{\frac{N(\log\frac{N}{\delta})(\log\frac{1}{\delta})\Delta_\ell^2}{T}},
\end{equation*}
therefore, combining~\pref{eq:non-convex-trivial} and~\pref{eq:non-convex-non-trivial}, we conclude that, with probability at least $1-3\delta$:
\begin{align*}
    \norm{\nabla\ell(\xb)}^2 &\le \O\sbr{ \sqrt{\frac{\bar{L}_\ell\bar{F}_\ell \bar{\Delta}_\ell^2}{T/N} } + \frac{\bar{L}_\ell\bar{F}_\ell + \Delta_{\ell}^2\log\frac{1}{\delta}}{T/N} + \frac{(\Delta(\x^0))^2\log\frac{1}{\delta}}{T} + \frac{N(\log\frac{N}{\delta})(\log\frac{1}{\delta})\Delta_\ell^2}{T} } \\
    &= \O\sbr{ \sqrt{\frac{\bar{L}_\ell\bar{F}_\ell \bar{\Delta}_\ell^2}{T}\log\frac{\eta_{\max}}{\eta_{\min}} } + \frac{\bar{L}_\ell\bar{F}_\ell + \Delta_{\ell}^2(\log\frac{1}{\delta})^2}{T}\log\frac{\eta_{\max}}{\eta_{\min}}},
\end{align*}
where we treat double-logarithmic factor $\log(\log\frac{\eta_{\max}}{\eta_{\min}})$ as constant. 
Finally, by~\pref{eq:non-convex-eta-range}, we have:
\begin{align*}
    \frac{\eta_{\max}}{\eta_{\min}} &= \O\sbr{ \frac{1}{L_\epsilon} \cdot \max\bbr{ \max\{L_\epsilon, L_{\max}\}, \sqrt{\frac{1}{F_\epsilon}\max\{L_\epsilon, L_{\max}\}\cdot \max\{\Delta_\epsilon^2, \Delta_{\max}^2\} \cdot T} } } \\
    &= \O\sbr{ \max\bbr{ \max\bbr{1, \frac{\norm{\gh^0}^2 T}{L_\epsilon F_\epsilon}}, \sqrt{\frac{1}{F_\epsilon}\max\bbr{1, \frac{\norm{\gh^0}^2 T}{L_\epsilon F_\epsilon}}\cdot \max\bbr{\Delta_\epsilon^2, \frac{\norm{\gh^0}^2 T}{\log\frac{1}{\delta}}} \cdot T} } } \\
    &= \O\sbr{ \poly \sbr{ T, \norm{\gh^0}, \frac{1}{L_\epsilon}, \frac{1}{F_\epsilon}, \Delta_\epsilon, \frac{1}{\log\frac{1}{\delta}} } },
\end{align*}
by which we can set $\Delta_\epsilon = 0_+$ (appears only in analysis), treat double-logarithmic factor as constant, then finish proof with:
\begin{equation*}
    \log \frac{\eta_{\max}}{\eta_{\min}} = \O\sbr{\log_+ \frac{T \norm{\gh^0}}{L_\epsilon F_\epsilon} }.
\end{equation*}
\end{proof}

\begin{myLemma}[SGD for Non-Convex Smooth Optimization~(Lemma 1 of \citet{ICML'24:Amit})]
    \label{lemma:SGD}
    Under~\pref{assum:bounded-gradient-oracle}, assume that $\ell(\x)$ is $L_\ell$-smooth and lower bounded by $\ell^\star$. 
    The SGD algorithm that updates by $\x_{t+1} = \x_t - \eta \g(\x_t)$ for all $t\in[T]$, when $\eta\le 1/(2L_\ell)$, ensures that, for any $\delta\in(0,1)$, with probability at least $1-\delta$:
    \begin{equation*}
        \frac{1}{T}\sum_{t=1}^{T}\norm{\nabla\ell(\x_t)}^2 \le \frac{4(\ell(\x_1)-\ell^\star)}{\eta T} + 4\eta L_\ell \Delta_{\ell}^2 + \frac{12\Delta_{\ell}^2\log\frac{1}{\delta}}{T}.
    \end{equation*}
\end{myLemma}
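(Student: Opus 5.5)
The statement is Lemma~\ref{lemma:SGD}: the high-probability guarantee for SGD with constant step size. I would follow the standard descent-lemma plus martingale-concentration argument.

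\textbf{Step 1: descent lemma.} Write $\g_t = \nabla\ell(\x_t) + \xi_t$, where $\E[\xi_t\given \x_t]=0$ and $\norm{\xi_t}\le\Delta_\ell$ almost surely. Since $\ell$ is $L_\ell$-smooth,
\begin{equation*}
\ell(\x_{t+1}) \le \ell(\x_t) - \eta\inner{\nabla\ell(\x_t)}{\g_t} + \tfrac{L_\ell\eta^2}{2}\norm{\g_t}^2 .
\end{equation*}
I would bound $\norm{\g_t}^2\le 2\norm{\nabla\ell(\x_t)}^2+2\norm{\xi_t}^2$. Using $\eta\le 1/(2L_\ell)$, we have $L_\ell\eta^2\le \eta/2$, which gives
\begin{equation*}
\ell(\x_{t+1}) \le \ell(\x_t) - \tfrac{\eta}{2}\norm{\nabla\ell(\x_t)}^2 - \eta\inner{\nabla\ell(\x_t)}{\xi_t} + L_\ell\eta^2\Delta_\ell^2 .
\end{equation*}
Summing over $t\in[T]$ and using $\ell(\x_{T+1})\ge\ell^\star$ then yields
\begin{equation*}
\tfrac{\eta}{2}\sum_{t}\norm{\nabla\ell(\x_t)}^2 \le F + T L_\ell\eta^2\Delta_\ell^2 - \eta\sum_t\inner{\nabla\ell(\x_t)}{\xi_t},
\end{equation*}
with $F=\ell(\x_1)-\ell^\star$.

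\textbf{Step 2: concentration.} Let $Z_t=-\inner{\nabla\ell(\x_t)}{\xi_t}$. This is a martingale difference sequence, and conditionally $|Z_t|\le \norm{\nabla\ell(\x_t)}\Delta_\ell$, so it is conditionally sub-Gaussian with variance proxy $\norm{\nabla\ell(\x_t)}^2\Delta_\ell^2$.

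The difficulty is that this proxy is random and depends on the very quantity we want to bound. A plain Azuma inequality would require an a priori bound on the gradients, which we do not have.

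I would instead use the self-normalized exponential supermartingale: for fixed $\lambda>0$, the process $\exp\bigl(\lambda\sum Z_t - \tfrac{\lambda^2}{2}\Delta_\ell^2\sum\norm{\nabla\ell(\x_t)}^2\bigr)$ is a supermartingale. Applying Markov's inequality, with probability $1-\delta$,
\begin{equation*}
\sum_t Z_t \le \tfrac{\lambda\Delta_\ell^2}{2}\sum_t\norm{\nabla\ell(\x_t)}^2 + \tfrac{\log(1/\delta)}{\lambda}.
\end{equation*}
Choosing $\lambda = 1/(2\Delta_\ell^2)$ makes the first term $\tfrac14\sum\norm{\nabla\ell(\x_t)}^2$ and the second $2\Delta_\ell^2\log(1/\delta)$. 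Since $\lambda$ is deterministic, no union bound is needed. This is the main obstacle, and the self-normalized choice of $\lambda$ resolves it.

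\textbf{Step 3: absorb and rearrange.} Plugging the bound into Step 1 gives
\begin{equation*}
\tfrac{\eta}{2}\sum\norm{\nabla\ell(\x_t)}^2 \le F + TL_\ell\eta^2\Delta_\ell^2 + \tfrac{\eta}{4}\sum\norm{\nabla\ell(\x_t)}^2 + 2\eta\Delta_\ell^2\log\tfrac1\delta .
\end{equation*}
Absorbing the gradient sum on the right into the left leaves $\tfrac{\eta}{4}\sum\norm{\nabla\ell(\x_t)}^2$ on the left. Dividing by $\eta T/4$ gives
\begin{equation*}
\tfrac1T\sum\norm{\nabla\ell(\x_t)}^2 \le \tfrac{4F}{\eta T}+4\eta L_\ell\Delta_\ell^2+\tfrac{8\Delta_\ell^2\log(1/\delta)}{T},
\end{equation*}
which is within the stated constant $12$. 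Some slack is available to cover a slightly looser sub-Gaussian constant. For example, with the Hoeffding-lemma constant, $|Z_t|\le c$ gives a variance proxy of $c^2$, and this still yields at most $12$ after adjusting $\lambda$.
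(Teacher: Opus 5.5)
Your proof is correct and follows the standard route for this lemma. The paper does not prove it and imports it from Attia \& Koren: you use the descent inequality with $\eta\le 1/(2L_\ell)$, then a self-normalized exponential-supermartingale bound on $\sum_t -\inner{\nabla\ell(\x_t)}{\g(\x_t)-\nabla\ell(\x_t)}$ with a fixed $\lambda$, absorbing the resulting quadratic term into the left-hand side. The only difference is the constant: your conditional Hoeffding-lemma bound already gives proxy exactly $\norm{\nabla\ell(\x_t)}^2\Delta_\ell^2$, so no further adjustment is needed and you get $8$, whereas the stated $12$ is what a sub-Gaussian concentration lemma with factor $\tfrac34$ and $\lambda = 1/(3\Delta_\ell^2)$ produces.
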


\begin{myLemma}[Lemma 2 of \citet{ICML'24:Amit}]
    \label{lemma:ensemble-gradient}
    Under~\pref{assum:bounded-gradient-oracle},
    given $N$ candidates $\x_1,\cdots, \x_N$, let $\xb = \argmin_{i\in[N]}\norm{\sum_{j=1}^{M}\g_j(\x_i)}$, where $\g_j(\x_i)$ means the $j$-th independent gradient query at $\x_i$. 
    Then for any $\delta\in(0,1)$, with probability at least $1-\delta$:
    \begin{equation*}
        \norm{\nabla \ell(\xb)}^2 \le 4\min_{i\in[N]}\norm{\nabla \ell(\x_i)}^2 + \frac{24(1+3\log\frac{N}{\delta})\Delta_\ell^2}{M}.
    \end{equation*}
\end{myLemma}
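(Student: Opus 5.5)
We prove the bound of \pref{lemma:ensemble-gradient} by a concentration argument for each candidate's averaged gradient, followed by a union bound and a comparison between the selected point and the best candidate. Write $\gh_i \define \frac{1}{M}\sum_{j=1}^M \g_j(\x_i)$, so that $\xb = \argmin_i \norm{\gh_i}$, and write $e_i \define \gh_i - \nabla\ell(\x_i)$. Each $e_i$ is an average of $M$ i.i.d. zero-mean vectors, each bounded in norm by $\Delta_\ell$ almost surely (\pref{assum:bounded-gradient-oracle}).

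\textbf{Step 1 (per-candidate concentration).} We apply a vector Bernstein/Hoeffding inequality for bounded martingale differences in Hilbert space, e.g.\ Pinelis's inequality. For fixed $i$ it gives, with probability at least $1-\delta/N$,
\[
\norm{e_i}^2 \le \frac{c(1+\log\frac{N}{\delta})\Delta_\ell^2}{M}
\]
for an absolute constant $c$. A clean route is the bounded-differences (McDiarmid) inequality applied to $\norm{\sum_j (\g_j(\x_i)-\nabla\ell(\x_i))}$. Each coordinate change moves this quantity by at most $2\Delta_\ell$, and its expectation is at most $\sqrt{M}\Delta_\ell$ by orthogonality of the increments. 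This yields $\norm{e_i}\le \Delta_\ell(1+\sqrt{2\log(N/\delta)})/\sqrt{M}$, whose square is at most $2(1+2\log\frac{N}{\delta})\Delta_\ell^2/M$. A union bound over $i\in[N]$ makes all $N$ bounds hold simultaneously with probability at least $1-\delta$. Call the common bound $\epsilon^2$.

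\textbf{Step 2 (comparison).} Let $k=\argmin_i\norm{\nabla\ell(\x_i)}$. On the good event,
\[
\norm{\nabla\ell(\xb)} \le \norm{\gh_{\xb}} + \epsilon \le \norm{\gh_k} + \epsilon \le \norm{\nabla\ell(\x_k)} + 2\epsilon .
\]
Squaring with $(a+b)^2\le 2a^2+2b^2$ gives $\norm{\nabla\ell(\xb)}^2 \le 2\min_i\norm{\nabla\ell(\x_i)}^2 + 8\epsilon^2$. This is at least as strong as the stated factor $4$. Substituting the value of $\epsilon^2$ from Step 1 gives the constant $16(1+2\log\frac{N}{\delta})\Delta_\ell^2/M$, which lies within $24(1+3\log\frac{N}{\delta})\Delta_\ell^2/M$.

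\textbf{Main obstacle.} The only delicate point is Step 1: obtaining a dimension-free, high-probability bound on the norm of a sum of bounded random vectors, with the correct $\log(N/\delta)$ dependence. I would use McDiarmid combined with the second-moment bound on the expectation, as described above. This avoids any dependence on $d$, and the remaining constants are routine bookkeeping to fit under $24(1+3\log\frac{N}{\delta})$.
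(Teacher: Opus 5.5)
Your argument is correct, and it actually gives the sharper bound $2\min_i\norm{\nabla\ell(\x_i)}^2+16(1+2\log\frac{N}{\delta})\Delta_\ell^2/M$.

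The paper gives no proof of this lemma; it imports it from Attia and Koren. The stated constants are, however, exactly reproduced by a different route built from the paper's own toolkit. First, apply the Ghadimi--Lan inequality (\pref{lemma:vanilla-concentration}, i.e.\ \pref{lemma:sample-gradient} at confidence $\delta/N$) with $\sigma=\Delta_\ell$ and deviation parameter $\sqrt{3\log(N/\delta)}$. After a union bound this gives $\norm{e_i}^2\le\epsilon^2\define 4(1+3\log\frac{N}{\delta})\Delta_\ell^2/M$ for all $i$. Then square at every step: $\norm{\nabla\ell(\xb)}^2\le2\norm{\gh_{\xb}}^2+2\epsilon^2\le2\norm{\gh_k}^2+2\epsilon^2\le4\norm{\nabla\ell(\x_k)}^2+6\epsilon^2$. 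This is precisely $4\min_i\norm{\nabla\ell(\x_i)}^2+24(1+3\log\frac{N}{\delta})\Delta_\ell^2/M$.

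You differ in both steps. You replace the sub-Gaussian martingale bound by McDiarmid plus the orthogonality/Jensen estimate $\E\norm{\sum_j(\g_j(\x_i)-\nabla\ell(\x_i))}\le\sqrt{M}\Delta_\ell$. You also chain the triangle inequality on norms and square only once, which is what improves the factor $4$ to $2$.

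Your route is more elementary and gives better constants. The Ghadimi--Lan route is more general: it only needs a martingale-difference sequence with $\E[\exp(\norm{X_i}^2/\sigma^2)\mid\F_{i-1}]\le e$. It would therefore survive relaxing \pref{assum:bounded-gradient-oracle} to light-tailed noise, whereas McDiarmid needs independent samples with almost-surely bounded differences.

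Independence does hold here, but only conditionally on the candidates. In \pref{lemma:ensemble-effective} the $\x_i$ are random SGD iterates, and only the $M$ queries at each are fresh. You should therefore say explicitly that Step~1 is carried out conditionally on $\x_1,\dots,\x_N$.
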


The following lemma is an abstraction of the ensemble method in~\citet[Theorem 1]{ICML'24:Amit}.
\begin{myLemma}
    \label{lemma:ensemble-effective}
    Under~\pref{assum:bounded-gradient-oracle}, assume that there are total $T$ gradient Oracle budget and $N$ instances of algorithms, and the $i$-th algorithm has a trajectory $\x_1^i,\dots,\x_{T_i}^i$. 
    Given $\delta\in(0,1/2)$, for each $i\in[N]$, uniformly at random select $K=\lceil\log_2\frac{1}{\delta}\rceil$ indices from $[T_i]$, denoted by $\K_i$, and let candidates set $\S = \{\x_k^i:i\in[N],k\in \K_i\}$. 
    Applying ensemble method in~\pref{lemma:ensemble-gradient} to obtain a selection $\xb$ from $\S$ with $\lfloor M=T/(KN)\rfloor$, ensures that, with probability at least $1-2\delta$,
    \begin{equation*}
        \norm{\nabla \ell(\xb)}^2 \le 8\min_{i\in[N]}\frac{1}{T_i}\sum_{t=1}^{T_i}\norm{\nabla\ell(\x_t^i)}^2 + \O\sbr{\frac{N(\log\frac{N}{\delta})(\log\frac{1}{\delta})\Delta_\ell^2}{T}}.
    \end{equation*}
\end{myLemma}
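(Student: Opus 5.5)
The plan is to reduce the statement to two facts: one random-iterate fact, applied per run, and \pref{lemma:ensemble-gradient}, applied once to the pooled candidate set $\S$, with a union bound giving total failure probability $2\delta$.

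\textbf{Step 1 (one good candidate per run).} Fix $i\in[N]$ and write $A_i\define\frac{1}{T_i}\sum_{t=1}^{T_i}\norm{\nabla\ell(\x_t^i)}^2$. Each index in $\K_i$ is uniform on $[T_i]$, so a single sampled point satisfies $\E\norm{\nabla\ell(\x_k^i)}^2=A_i$, conditionally on the trajectory. By Markov's inequality, $\Pr[\norm{\nabla\ell(\x_k^i)}^2>2A_i]\le 1/2$. The $K$ indices are drawn independently, so the probability that all of them exceed $2A_i$ is at most $2^{-K}\le\delta$.

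We only need this for the single run $i^\dagger\in\argmin_i A_i$. That run is determined by the trajectories and does not depend on the index sampling, so no union bound over $i$ is needed. Hence with probability at least $1-\delta$ there is some $\x\in\S$ with $\norm{\nabla\ell(\x)}^2\le 2\min_i A_i$.

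\textbf{Step 2 (selection).} Apply \pref{lemma:ensemble-gradient} to the $|\S|=KN$ candidates with $M=\lfloor T/(KN)\rfloor$ fresh samples each, using failure probability $\delta$. The fresh queries are independent of how $\S$ was formed, so we may condition on $\S$. This gives, with probability at least $1-\delta$,
\begin{equation*}
\norm{\nabla\ell(\xb)}^2\le 4\min_{\x\in\S}\norm{\nabla\ell(\x)}^2+\frac{24(1+3\log\frac{KN}{\delta})\Delta_\ell^2}{M}.
\end{equation*}

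\textbf{Step 3 (combine).} On the intersection of the two events, which has probability at least $1-2\delta$, the first term is at most $8\min_i A_i$.

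It remains to simplify the error term. Since $M\gtrsim T/(KN)$ and $K=O(\log\frac1\delta)$, we have $1/M\lesssim N\log\frac1\delta/T$. Also $\log\frac{KN}{\delta}=O(\log\frac N\delta)$ up to a $\log\log\frac1\delta$ factor, which the paper's $\O(\cdot)$ convention absorbs. The error term is therefore $\O\big(N\log\frac N\delta\log\frac1\delta\,\Delta_\ell^2/T\big)$.

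\textbf{Main obstacle.} The only delicate point is keeping the random index selection separate from the randomness of the trajectories and of the ensemble queries. I would handle it by conditioning on all trajectories first; Markov's inequality then applies to the index draw alone. Conditioning further on $\S$ lets \pref{lemma:ensemble-gradient} act on the independent ensemble queries.

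A secondary technicality is the floor in $M$, when $T<KN$. In that case the claimed bound already exceeds $\Delta_\ell^2\log\frac1\delta$ times a constant, and it can be treated as trivial or excluded by assumption.
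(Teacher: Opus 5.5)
Your proposal is correct and follows the paper's proof. The paper uses the same two steps: Markov's inequality on a uniformly sampled iterate, with $K=\lceil\log_2\frac{1}{\delta}\rceil$ draws, gives a candidate with $\norm{\nabla\ell}^2\le 2\cdot$average with probability $1-\delta$; then \pref{lemma:ensemble-gradient} on the $KN$ pooled candidates, plus a union bound, gives the $1-2\delta$ guarantee with the same error simplification. Your extra care makes explicit what the paper leaves implicit: you restrict to the argmin run so no union bound over $i$ is needed, and you condition on the trajectories and on $\S$.
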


\begin{proof}[Proof of~\pref{lemma:ensemble-effective}]
By Markov's inequality, with probability at least $1/2$, a uniformly at random index $k\in[T_i]$ satisfies:
\begin{equation}
    \label{eq:ensemble-effective-sample}
    \norm{\nabla \ell(\x_k^i)}^2 \le 2\E_t\mbr{ \norm{\nabla \ell(\x_t^i)}^2 } = \frac{2}{T_i}\sum_{t=1}^{T_i}\norm{\nabla\ell(\x_t^i)}^2.
\end{equation}
Then as we sample $K\triangleq\lceil\log_2\frac{1}{\delta}\rceil$ random indices, with probability at least $1-\delta$, we add at least one point with the guarantee~\pref{eq:ensemble-effective-sample} to the candidate set $\S$. 
Therefore, there are total $KN$ candidates in $\S$. 
Applying~\pref{lemma:ensemble-gradient} with $M=\lfloor T/(KN)\rfloor$, then, with probability at least $1-2\delta$, the selected candidate $\xb$ satisfies:
\begin{equation*}
    \norm{\nabla \ell(\xb)}^2 \le 4\min_{\x\in\S}\norm{\nabla \ell(\x)}^2 + \frac{24KN(1+3\log\frac{KN}{\delta})\Delta_\ell^2}{T} 
    \overset{\eqref{eq:ensemble-effective-sample}}{\le} 8\min_{i\in[N]}\frac{1}{T_i}\sum_{t=1}^{T_i}\norm{\nabla\ell(\x_t^i)}^2 + \O\sbr{\frac{N(\log\frac{N}{\delta})(\log\frac{1}{\delta})\Delta_\ell^2}{T}},
\end{equation*}
which completes the proof.
\end{proof}

\section{Omitted Details for Section~\ref{sec:convex}}
In this section, we provide the omitted details for \pref{sec:convex}, including the proofs of Theorems~\ref{thm:con-search-gradient}, \ref{thm:con-search-value}, \pref{lemma:selection-variance}, a universal convergence rate with less prior knowledge but a worse $\O(\sigma_\ell / T^{1/4})$ ensemble error in \pref{app:less-prior}.
~Finally, in \pref{app:cvx-base}, we provide a self-contained analysis of \UniXGrad.

\subsection{Proof of~\pref{thm:con-search-gradient}}
\label{appendix:proof-con-search-gradient}

\begin{proof}
We independently sample $\g(\x^0)$ for $\frac{T}{8}$ times, define $\gh^0 = \frac{8}{T}\sum_{t=1}^{T/8}\g_t(\x^0)$ where $\g_t(\x^0)$ means the $t$-th gradient query on $\x^0$. 
By~\pref{lemma:sample-gradient}, with probability at least $1-\delta$:
\begin{equation*}
    \norm{\gh^0 - \nabla \ell(\x^0)} \le \O\sbr{ \frac{\Delta(\x^0)\sqrt{\log_+\frac{1}{\delta}}}{\sqrt{T}} }.
\end{equation*}
Therefore, with probability at least $1-\delta$:
\begin{equation}
    \label{eq:proof-con-search-gradient-x0}
    \ell(\x^0) - \ell(\xs) \le \norm{\nabla \ell(\x^0)} d_0 \le \sbr{\norm{\gh^0 } + \norm{\gh^0 - \nabla \ell(\x^0)}} d_0 
    \le \norm{\gh^0 } d_0 + \O\sbr{ \frac{\Delta(\x^0)d_0\sqrt{\log_+\frac{1}{\delta}}}{\sqrt{T}}}. 
\end{equation}
Let $\textsc{Err}_{N}(M)$ be ensemble error given $N$ candidates with $M$ oracle budget for each, with probability at least $1-\delta$, as stated in~\pref{subsec:selection}. That means with probability at least $1-\delta$, for any $0\le \is \le N$,
\begin{equation}
    \label{eq:proof-con-search-gradient-ensemble}
    \ell(\x^\out) - \ell(\xs) \le \ell(\x^{\is}) - \ell(\xs) + \O\sbr{\textsc{Err}_N\sbr{\frac{T}{N}}}. 
\end{equation}
Next we perform the following case-by-case study for $\bar{d}_0 \define \max\{d_0, d_\epsilon\}$ where $d_0 \define \norm{\x^0 - \xs}$. 
The grid search upper bound is $d_{\max} \define \max\{d_\epsilon,\norm{\gh^0}T^2/L_\epsilon\}$, hence $N=\O\sbr{ \log \frac{T\norm{\gh^0}}{L_\epsilon d_\epsilon} }$. And we define $\bar{L}_\ell \define \max\{L_\ell, L_\epsilon\}$.

\paragraph{Case of $\bar{d}_0> d_{\max}$.} Then $\frac{\bar{L}_\ell \bar{d}_0^2}{T^2} \ge \norm{\gh^0}d_0$. Let $\is=0$, with probability at least $1-2\delta$:
\begin{align*}
    \ell(\x^\out) - \ell(\xs) &\overset{\eqref{eq:proof-con-search-gradient-ensemble}}{\le} \ell(\x^0) - \ell(\xs) + \O\sbr{\textsc{Err}_N\sbr{\frac{T}{N}}} \\
    &\overset{\eqref{eq:proof-con-search-gradient-x0}}{\le} \norm{\gh^0 } d_0 + \O\sbr{ \frac{\Delta(\x^0)d_0\sqrt{\log_+\frac{1}{\delta}}}{\sqrt{T}} + \textsc{Err}_N\sbr{\frac{T}{N}} } \\
    &\le \O\sbr{ \frac{\bar{L}_\ell  \bar{d}_0^2}{T^2} + \frac{\Delta(\x^0)d_0\sqrt{\log_+\frac{1}{\delta}}}{\sqrt{T}} + \textsc{Err}_N\sbr{\frac{T}{N}} }.
\end{align*}

\paragraph{Case of $\bar{d}_0\in [d_\epsilon, d_{\max}]$.} Let $\is = \lceil\log_2 \frac{\bar{d}_0}{d_\epsilon}\rceil$, then $D_{\is}/2 \le \bar{d}_0 \le D_{\is} = d_\epsilon 2^{\is}$, and allocated oracle budget $T_{\is} = \lfloor \frac{T}{2\is(1+\ln N)} \rfloor$. 
Applying~\pref{lemma:con-base-bounded} ($L_\nu=L_\ell,\nu=1$), with probability at least $1-2\delta$:
\begin{align*}
    \ell(\x^\out) - \ell(\xs) \overset{\eqref{eq:proof-con-search-gradient-ensemble}}{\le} {} & \ell(\x^{\is}) - \ell(\xs) + \O\sbr{ \textsc{Err}_N\sbr{\frac{T}{N}} } \\
    \le {} & \O\sbr{ \frac{L_\ell D_{\is}^2}{T_{\is}^2} + \frac{\theta_{T_{\is},\delta}\Delta_{D_{\is}} D_{\is}}{\sqrt{T_{\is}}} + \textsc{Err}_N\sbr{\frac{T}{N}}} \\
    = {} & \O\sbr{ \frac{L_\ell \bar{d}_0^2}{T^2}\sbr{\log\frac{\bar{d}_0}{d_\epsilon}}^2 + \frac{\Delta_{2\bar{d}_0}\bar{d}_0}{\sqrt{T}}\sbr{\log\frac{\bar{d}_0}{d_\epsilon}}^{\frac{1}{2}}\sbr{\log\frac{1}{\delta}} + \textsc{Err}_N\sbr{\frac{T}{N}} },
\end{align*}
where $\theta_{T,\delta}\define \log\frac{\log T}{\delta}$, and $\O(\cdot)$ omits double logarithmic factors.

Combining the above two cases, with probability at least $1-2\delta$:
\begin{equation*}
    \ell(\x^\out) - \ell(\xs)
    \le \O\sbr{ \frac{\bar{L}_\ell \bar{d}_0^2}{T^2}\sbr{\log\frac{\bar{d}_0}{d_\epsilon}}^2 + \frac{\Delta_{2\bar{d}_0}\bar{d}_0}{\sqrt{T}}\sbr{\log\frac{\bar{d}_0}{d_\epsilon}}^{\frac{1}{2}}\sbr{\log\frac{1}{\delta}} + \textsc{Err}_N\sbr{\frac{T}{N}} },
\end{equation*}
where $N=\O\sbr{ \log \frac{T\norm{\gh^0}}{L_\epsilon d_\epsilon} }$.
\end{proof}
\subsection{Proof of~\pref{thm:con-search-value}}
\label{appendix:proof-con-search-value}

\begin{proof}
Let $\textsc{Err}_{N}(M)$ be ensemble error given $N$ candidates with $M$ oracle budget for each, with probability at least $1-\delta$, as stated in~\pref{subsec:selection}. That means with probability at least $1-\delta$, for any $0\le \is \le N$,
\begin{equation}
    \label{eq:proof-con-search-value-ensemble}
    \ell(\x^\out) - \ell(\xs) \le \ell(\x^{\is}) - \ell(\xs) + \O\sbr{\textsc{Err}_N\sbr{\frac{T}{N}}}. 
\end{equation}
Next we perform the following case-by-case study for $\bar{d}_0 \define \max\{d_0, d_\epsilon\}$ where $d_0 \define \norm{\x^0 - \xs}$. 
The grid search upper bound is $d_{\max} \define \max\{1, d_\epsilon,(\ellh^0 - \ell_\epsilon^\star)T^2/L_\epsilon\}$, hence $N=\O\sbr{ \log \frac{T(\ellh^0 - \ell_\epsilon^\star)}{L_\epsilon d_\epsilon} }$. And we define $\bar{L}_\nu \define \max\{L_\nu, L_\epsilon\}$.

\paragraph{Case of $\bar{d}_0 > d_{\max}$.} Then $\frac{\bar{L}_\nu \bar{d}_0^{1+\nu}}{T^{\frac{1+3\nu}{2}}} \ge \frac{L_\epsilon \bar{d}_0}{T^2} > \ellh^0 - \ell_\epsilon^\star$. Let $\is = 0$, with probability at least $1-2\delta$,
\begin{align*}
    \ell(\x^\out) - \ell(\xs) &\overset{\eqref{eq:proof-con-search-value-ensemble}}{\le} \ell(\x^0) - \ell(\xs) + \O\sbr{ \textsc{Err}_N\sbr{\frac{T}{N}}} \\
    &\le \ellh^0 - \ell_\epsilon^\star + \O\sbr{ \frac{\sigma_0\sqrt{\log(1/\delta)}}{\sqrt{T}} + \textsc{Err}_N\sbr{\frac{T}{N}} } \\
    &\le \O\sbr{ \frac{\bar{L}_\nu \bar{d}_0^{1+\nu}}{T^\frac{1+3\nu}{2}} + \frac{\sigma_0\sqrt{\log(1/\delta)}}{\sqrt{T}} + \textsc{Err}_N\sbr{\frac{T}{N}} },
\end{align*}
where the second inequality uses~\pref{lemma:ensemble} with $N=1,M=\frac{T}{8}$, and define $\sigma_0$ as the maximum function value noise at $\x^0$, i.e. $\Pr[|{\ell(\x^0) - \ellt(\x^0)}| \le \sigma_0] = 1$. Moreover, we can decompose $\sigma_0$ by:
\begin{equation*}
    \abs{ \ell(\x^0) -  \ellt(\x^0) }
    \le \abs{( \ell(\x^0) - \ellt(\x^0)) - (\ell(\xs) - \ellt(\xs))} + \abs{\ell(\xs) - \ellt(\xs)}
    \le d_0 \Delta_{d_0} + \abs{\ell(\xs) - \ellt(\xs)},
\end{equation*}
where we use the fact that $\norm{\nabla [\ell(\x) - \ellt(\x)]} = \norm{\nabla \ell(\x) - \g(\x)} \le \Delta(\x)$, as defined in~\pref{subsec:setups}. Therefore,
\begin{equation*}
    \frac{\sigma_0\sqrt{\log(1/\delta)}}{\sqrt{T}} \le \frac{d_0\Delta_{d_0}\sqrt{\log(1/\delta)}}{\sqrt{T}} + \frac{\sigma_\star\sqrt{\log(1/\delta)}}{\sqrt{T}}.
\end{equation*}

\paragraph{Case of $\bar{d}_0 \in [d_\epsilon, d_{\max}]$.} Let $\is = \lceil\log_2 \frac{\bar{d}_0}{d_\epsilon}\rceil$, then $D_{\is}/2 \le \bar{d}_0 \le D_{\is} = d_\epsilon 2^{\is}$, and allocated oracle budget $T_{\is} = \lfloor \frac{T}{2\is(1+\ln N)} \rfloor$. 
Applying~\pref{lemma:con-base-bounded}, with probability at least $1-2\delta$,
\begin{align*}
    \ell(\x^\out) - \ell(\xs) \overset{\eqref{eq:proof-con-search-value-ensemble}}{\le} {} & \ell(\x^{\is}) - \ell(\xs) + \O\sbr{ \textsc{Err}_N\sbr{\frac{T}{N}} } \\
    \le {} & \O\sbr{ \frac{L_\nu D_{\is}^{1+\nu}}{T_{\is}^{\frac{1+3\nu}{2}}} + \frac{\theta_{T_{\is},\delta}\Delta_{D_{\is}}D_{\is} }{\sqrt{T_{\is}}} + \textsc{Err}_N\sbr{\frac{T}{N}} } \\
    = {} & \O\sbr{ \frac{L_\nu \bar{d}_0^{1+\nu}}{T^{\frac{1+3\nu}{2}}}\sbr{\log\frac{\bar{d}_0}{d_\epsilon}}^{\frac{1+3\nu}{2}} + \frac{\theta_{T,\delta}\Delta_{2\bar{d}_0}\bar{d}_0 }{\sqrt{T}}\sbr{\log \frac{\bar{d}_0}{d_\epsilon}}^{\frac{1}{2}} + \textsc{Err}_N\sbr{\frac{T}{N}} },
\end{align*}
where $\theta_{T,\delta}\define \log\frac{\log T}{\delta}$, and $\O(\cdot)$ omits double logarithmic factors.

Combining the above two cases, with probability at least $1-2\delta$:
\begin{equation*}
    \ell(\x^\out) - \ell(\xs)
    \le \O\sbr{ \frac{\bar{L}_\nu \bar{d}_0^{1+\nu}}{T^{\frac{1+3\nu}{2}}}\sbr{\log\frac{\bar{d}_0}{d_\epsilon}}^{\frac{1+3\nu}{2}} + \frac{\Delta_{2\bar{d}_0}\bar{d}_0 }{\sqrt{T}}\sbr{\log \frac{\bar{d}_0}{d_\epsilon}}^{\frac{1}{2}}\sbr{\log\frac{1}{\delta}} + \frac{\sigma_\star \sqrt{\log(1/\delta)}}{\sqrt{T}} + \textsc{Err}_N\sbr{\frac{T}{N}} },
\end{equation*}
where $N=\O \sbr{\log \frac{T(\ellh^0 - \ell_\epsilon^\star)}{L_\epsilon d_\epsilon} }$.
\end{proof}

\subsection{Proof of \pref{lemma:selection-variance}}
\label{app:selection-variance}
\begin{proof}
    Let $i \in [N]$. Using a two-sided Bernstein's inequality (e.g., using Proposition 2.14 of \citet{wainwright2019high}), for any $t > 0$, denoting by $\ell^i=\ell(\x_i)$ and $\ellh^i=\frac1M \sum_{j=1}^M \ellt_j(\x_i)$, it holds that
    \begin{equation*}
        \Pr\mbr{\abs{\ellh^i-\ell^i} \geq t} \leq 2 \exp\sbr{-\frac{M t^2}{2 \var[\ellt(\x_i) \mid \x_i] + \frac23 \sigma_\ell t}},
    \end{equation*}
    Hence, for any $\delta \in (0,1)$, setting $t=\sqrt{2 \var[\ellt(\x_i) \mid \x_i] \log(2/\delta)/M}+\frac{2}{3M} \sigma_\ell \log(2/\delta)$, with probability at least $1-\delta$,
    \begin{equation*}
        \abs{\ellh^i - \ell^i} < \frac{\sqrt{2 \var[\ellt(\x_i) \mid \x_i]\log(2/\delta)}}{\sqrt{M}} + \frac{2 \sigma_\ell \log(2/\delta)}{3M}.
    \end{equation*}
    Thus, performing a union bound over $i \in [N]$ and setting $\delta=\delta'/N$, with probability at least $1-\delta'$, for all $i \in [N]$,
    \begin{align*}
        \abs{\ellh^i - \ell^i}
        &<
        \sqrt{\frac{2 V_0 \log(2N/\delta')}{M}+\frac{4 V_1^2 \log^2(2N/\delta')}{M^2} + \frac{(\ell^i-\ell^\star)^2}{4}} + \frac{2 \sigma_\ell \log(2 N/\delta')}{3M}.
        \\&\leq
        \frac12 (\ell^i-\ell^\star) + \frac{\sqrt{2 V_0 \log(2 N/\delta')}}{\sqrt{M}}
        + \frac{(2 \sigma_\ell + 6 V_1) \log(2 N/\delta')}{3M}.
    \end{align*}
    The rest of the argument will be conditioned on the above high-probability event, under which it holds that
    \[
    \ellh^i - \ell^\star \leq \frac32 (\ell^i - \ell^\star) + \frac{\sqrt{2 V_0 \log(2 N/\delta')}}{\sqrt{M}} + \frac{(2 \sigma_\ell + 6 V_1)\log(2 N/\delta')}{3M}
    \]
    and
    \[
    \ell^i - \ell^\star \leq 2 (\ellh^i - \ell^\star) + \frac{\sqrt{8 V_0 \log(2 N/\delta')}}{\sqrt{M}} + \frac{(4 \sigma_\ell + 12 V_1)\log(2 N/\delta')}{3M}
    .
    \]
    Thus, for $\hat{i} \in \argmin_{i \in [N]} \ellh^i$ and $i_\star \in \argmin_{i \in [N]} \ell^i$,
    \begin{align*}
        \ell^{\hat{i}} - \ell^\star &\leq
        2 (\ellh^{\hat{i}} - \ell^\star) + \frac{\sqrt{8 V_0 \log(2N/\delta')}}{\sqrt{M}} + \frac{(4 \sigma_\ell + 12 V_1)\log(2N/\delta')}{3M}
        \\&
        \leq
        2 (\ellh^{i_\star} - \ell^\star) + \frac{\sqrt{8 V_0 \log(2N/\delta')}}{\sqrt{M}} + \frac{(4 \sigma_\ell + 12 V_1)\log(2N/\delta')}{3M}
        \\&\leq
        3 (\ell^{i_\star} - \ell^\star)
        + \frac{\sqrt{32 V_0 \log(2N/\delta')}}{\sqrt{M}} + \frac{(8 \sigma_\ell + 24 V_1)\log(2N/\delta')}{3M}
        ,
    \end{align*}
    which finishes the proof.
\end{proof}

\subsection{Universal Convergence with Less Prior Knowledge but Worse Ensemble Error}
\label{app:less-prior}
In this part, we prove that \emph{without} the prior knowledge of the lower bound $\ell(\xs) \ge \ell_\epsilon^\star$, we can still achieve universal convergence to \Holder smoothness, while with a larger ensemble error due to the increased number of base algorithms, which leads to an $\Ot(\sigma_\ell /T^{1/4})$ term in the worst case.    
\begin{algorithm}[!t]
    \caption{\textsc{Grasp-C} without Lower Bound of Function Value}
    \label{alg:grid-search-II}
    \begin{algorithmic}[1]
    \REQUIRE Oracle budget $T$, initial point $\x^0$, $d_\epsilon > 0$.
    \STATE Set $N = \lceil \sqrt{T} \rceil$
    \STATE Independently sample $\ellt(\x^0)$ for $\frac{T}{2(N+1)}$ times, calculate average $\ellh^0 = \frac{2(N+1)}{T}\sum_{t=1}^{T/(2(N+1))}\ellt_t(\x^0)$
    \FOR{$i=1,2,\dots,N$}
        \STATE Run \UniXGrad~(see~\pref{lemma:con-base-bounded}) with initial point $\x_1=\x^0$, domain diameter $D_i=d_\epsilon 2^i$, Oracle budget $T_i=\lfloor \frac{T}{i^2\pi^2/3}\rfloor$, and receive the output $\x^i$ from the algorithm
        \STATE Independently sample $\ellt(\x^i)$ for $\frac{T}{2(N+1)}$ times, calculate average $\ellh^i = \frac{2(N+1)}{T}\sum_{t=1}^{T/(2(N+1))}\ellt_t (\x^i)$
    \ENDFOR
    \ENSURE $\x^\out = \x^{\is}$ with $\is = \argmin_{0\le i\le N} \ellh^i$.
    \end{algorithmic}
\end{algorithm}
\begin{myThm}
    \label{thm:con-search-T/i2}
    Under Assumptions~\ref{assum:bounded-gradient-oracle} and~\ref{assum:bounded-value-oracle}, and assume the objective $\ell(\x)$ is $(L_\nu,\nu)$-\Holder smooth, Algorithm~\ref{alg:grid-search-II} ensures that, with probability at least $1-2\delta$:
    \begin{equation*}
        \ell(\x^\out) - \ell(\xs) \le \O\sbr{ \frac{L_\nu \bar{d}_0^{1+\nu}}{T^{\frac{1+3\nu}{2}}}\sbr{\log\frac{\bar{d}_0}{d_\epsilon}}^{1+3\nu} + \frac{\Delta_{2\bar{d}_0} \bar{d}_0}{\sqrt{T}}\sbr{\log\frac{\bar{d}_0}{d_\epsilon}}\sbr{\log\frac{1}{\delta}} + \textsc{Err}_{\sqrt{T}}\sbr{\sqrt{T}} },
    \end{equation*}
    where $\bar{d}_0 \define \max\{d_0, d_\epsilon\}$, $\Delta_{2\bar{d}_0}$ is the maximum noise defined in Eq.~\eqref{eq:variance-bound}, $\textsc{Err}$ is the ensemble error specified in Section~\ref{subsec:selection}.
\end{myThm}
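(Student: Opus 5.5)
The approach is to reuse the two-case structure from the proofs of Theorems~\ref{thm:con-search-gradient} and~\ref{thm:con-search-value}. The one change is that no computable search upper bound $d_{\max}$ is needed. Because $N=\lceil\sqrt{T}\rceil$ and the budgets $T_i=\lfloor T/(i^2\pi^2/3)\rfloor$ are summable (they total at most $T/2$), the grid $D_i=d_\epsilon 2^i$ extends to $d_\epsilon 2^{\sqrt{T}}$. This is large enough that the benchmark $\x^0$ handles every $\bar d_0$ beyond the point where the base instances stop being useful.

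\textbf{Step 1 (ensemble).} The $N+1$ candidates $\x^0,\dots,\x^N$ each receive $T/(2(N+1))=\Theta(\sqrt{T})$ value samples. I would apply \pref{lemma:ensemble} (or \pref{lemma:selection-variance}) to obtain, with probability at least $1-\delta$,
\begin{equation*}
\ell(\x^\out)-\ell(\xs)\le \O\bigl(\ell(\x^{i})-\ell(\xs)\bigr)+\textsc{Err}_{\sqrt{T}}(\sqrt{T})\quad\text{for every } 0\le i\le N.
\end{equation*}

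\textbf{Step 2 (main case).} Suppose $\is=\lceil\log_2(\bar d_0/d_\epsilon)\rceil$ satisfies $\is\le N$ and $T_{\is}\ge 1$. Then $D_{\is}/2\le\bar d_0\le D_{\is}$, and $\xs$ lies in the ball of diameter $D_{\is}$ around $\x^0$. \pref{lemma:con-base-bounded} then gives, with probability at least $1-\delta$,
\begin{equation*}
\O\Bigl(\tfrac{L_\nu D_{\is}^{1+\nu}}{T_{\is}^{(1+3\nu)/2}}+\tfrac{\theta\,\Delta_{D_{\is}}D_{\is}}{\sqrt{T_{\is}}}\Bigr).
\end{equation*}
Substituting $T_{\is}\asymp T/\is^2$ and $\is\lesssim \log_+(\bar d_0/d_\epsilon)$ yields the factors $(\log\frac{\bar d_0}{d_\epsilon})^{1+3\nu}$ and $\log\frac{\bar d_0}{d_\epsilon}$, together with $\Delta_{D_{\is}}\le\Delta_{2\bar d_0}$. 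These are exactly the first two terms of the theorem.

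\textbf{Step 3 (benchmark case).} The remaining case is $\is\gtrsim\sqrt{T}$, i.e.\ $T_{\is}$ is $\O(1)$ or $\is>N$. Here $\bar d_0\ge d_\epsilon 2^{c\sqrt{T}}$, and I would fall back on $\x^0$. This is the main obstacle, because without $\ell_\epsilon^\star$ or a smoothness lower bound there is no self-bounding comparator. The inequality to aim for is
\begin{equation*}
\ell(\x^0)-\ell(\xs)\lesssim L_\nu \bar d_0^{1+\nu}(\log\tfrac{\bar d_0}{d_\epsilon})^{1+3\nu}/T^{(1+3\nu)/2}.
\end{equation*}
Hölder smoothness gives $\ell(\x^0)-\ell(\xs)\le \frac{L_\nu}{1+\nu}d_0^{1+\nu}$, and in this regime $(\log\frac{\bar d_0}{d_\epsilon})^{1+3\nu}\ge (c\sqrt{T})^{1+3\nu}=c'T^{(1+3\nu)/2}$. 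The polylog factor therefore absorbs the $T$-denominator, and the main term dominates the trivial bound. This is precisely why the exponent is $1+3\nu$ rather than $(1+3\nu)/2$, and why $N$ must be of order $\sqrt{T}$. I expect checking that the constants line up, namely that $T_{\is}\ge1$ holds for all $\is\le c\sqrt{T}$, to be the delicate bookkeeping.

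\textbf{Step 4 (combine).} A union bound over the ensemble event and the base-algorithm event gives probability at least $1-2\delta$. The two cases then combine with the ensemble bound of Step 1 to give the stated rate with $\textsc{Err}_{\sqrt{T}}(\sqrt{T})$.
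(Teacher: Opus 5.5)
Your proposal is correct and follows essentially the same two-case argument as the paper. Both use the ensemble over $\x^0,\dots,\x^N$ with $\Theta(\sqrt{T})$ samples each, then apply \pref{lemma:con-base-bounded} to the instance with $D_{i_\star}/2\le \bar d_0\le D_{i_\star}$ and $T_{i_\star}\asymp T/i_\star^2$. When that instance is unavailable, both fall back to $\x^0$, using $\ell(\x^0)-\ell(\xs)\lesssim L_\nu d_0^{1+\nu}$ together with $(\log\frac{\bar d_0}{d_\epsilon})^{1+3\nu}\gtrsim T^{(1+3\nu)/2}$ to absorb the $T$-denominator.

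Your extra condition $T_{i_\star}\ge 1$ is a worthwhile refinement. The paper splits only at $\bar d_0> d_\epsilon 2^N$, yet $T_i=\lfloor 3T/(\pi^2 i^2)\rfloor=0$ once $i>\sqrt{3T}/\pi$. Your benchmark case needs only $i_\star\ge c\sqrt{T}$, so it covers exactly those indices.
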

\begin{proof}
Set the total number of base algorithms to be $N=\lceil \sqrt{T}\rceil$, and oracle budget $T_i=\lfloor \frac{T}{i^2\pi^2/3}\rfloor$ for the $i$-th base algorithm. 
We can verify that \smash{$\sum_{i=1}^N T_i \le \frac{T}{\pi^2/3}\sum_{i=1}^{\infty}\frac{1}{i^2} = \frac{T}{2}$}. 
Let $\textsc{Err}_{N}(M)$ be ensemble error given $N$ candidates with $M$ oracle budget for each, with probability at least $1-\delta$, as stated in~\pref{subsec:selection}. That means with probability at least $1-\delta$, for any $0\le \is \le N$,
\begin{equation}
    \label{eq:proof-con-search-T/i2-ensemble}
    \ell(\x^\out) - \ell(\xs) \le \ell(\x^{\is}) - \ell(\xs) + \O\sbr{\textsc{Err}_N\sbr{\frac{T}{N}}}. 
\end{equation}
Now we perform the following case-by-case study for $\bar{d}_0 \define \max\{d_0, d_\epsilon\}$, with $d_0 \define \norm{\x^0 - \xs}$ and any $d_\epsilon>0$.
\paragraph{Case of $\bar{d}_0>d_\epsilon 2^N$.} Then $\log_2 \frac{\bar{d}_0}{d_\epsilon}>N\ge \sqrt{T}$. Let $\is=0$, with probability at least $1-\delta$:
\begin{align*}
    \ell(\x^\out) - \ell(\xs) &\overset{\eqref{eq:proof-con-search-T/i2-ensemble}}{\le} \ell(\x^0) - \ell(\xs) + \O\sbr{ \textsc{Err}_N\sbr{\frac{T}{N}} }
    \le L_\nu d_0^{1+\nu} + \O\sbr{ \textsc{Err}_N\sbr{\frac{T}{N}} } \\
    &\le \O\sbr{ \frac{L_\nu d_0^{1+\nu}}{T^{\frac{1+3\nu}{2}}}\sbr{ \log \frac{\bar{d}_0}{d_\epsilon} }^{1+3\nu} + \O\sbr{ \textsc{Err}_N\sbr{\frac{T}{N}} }  },
\end{align*}
where the second inequality is because $\nabla \ell(\xs)=\mathbf{0}$ and $\ell(\x) - \ell(\xs)\le \norm{\nabla \ell(\x) - \nabla \ell(\xs)}\norm{\x - \xs}\le L_\nu\norm{\x - \xs}^{1+\nu}$ for all $\x\in\X$, and the third inequality is because $\log_2 \frac{\bar{d}_0}{d_\epsilon}\ge \sqrt{T}$ by assumption.

\paragraph{Case of $\bar{d}_0 \in [d_\epsilon, d_\epsilon 2^N]$.} Let $\is = \lceil\log_2 \frac{\bar{d}_0}{d_\epsilon}\rceil$, then $D_{\is}/2 \le \bar{d}_0 \le D_{\is} = d_\epsilon 2^{\is}$, and allocated oracle budget $T_{\is} = \lfloor \frac{T}{\is^2\pi^2/3}\rfloor$. 
Applying~\pref{lemma:con-base-bounded}, with probability at least $1-2\delta$,
\begin{align*}
    \ell(\x^\out) - & \ell(\xs) \overset{\eqref{eq:proof-con-search-T/i2-ensemble}}{\le} \ell(\x^{\is}) - \ell(\xs) + \O\sbr{ \textsc{Err}_N\sbr{\frac{T}{N}} } \\
    \le {} & \O\sbr{ \frac{L_\nu D_{\is}^{1+\nu}}{T_{\is}^{\frac{1+3\nu}{2}}} +  \frac{\theta_{T_{\is},\delta}\Delta_{D_{\is}}D_{\is} }{\sqrt{T_{\is}}} + \textsc{Err}_N\sbr{\frac{T}{N}} } \\
    = {} & \O\sbr{ \frac{L_\nu \bar{d}_0^{1+\nu}}{T^{\frac{1+3\nu}{2}}}\sbr{\log\frac{\bar{d}_0}{d_\epsilon}}^{1+3\nu} + \frac{\Delta_{2\bar{d}_0} \bar{d}_0}{\sqrt{T}}\sbr{\log\frac{\bar{d}_0}{d_\epsilon}}\sbr{\log\frac{1}{\delta}} + \textsc{Err}_N\sbr{\frac{T}{N}} }.
\end{align*}

Combining the above two cases, with probability at least $1-2\delta$:
\begin{equation*}
    \ell(\x^\out) - \ell(\xs) \le \O\sbr{ \frac{L_\nu \bar{d}_0^{1+\nu}}{T^{\frac{1+3\nu}{2}}}\sbr{\log\frac{\bar{d}_0}{d_\epsilon}}^{1+3\nu} + \frac{\Delta_{2\bar{d}_0} \bar{d}_0}{\sqrt{T}}\sbr{\log\frac{\bar{d}_0}{d_\epsilon}}\sbr{\log\frac{1}{\delta}} + \textsc{Err}_N\sbr{\frac{T}{N}} },
\end{equation*}
which finishes the proof.
\end{proof}

\subsection{Omitted Details of Base Algorithm}
\label{app:cvx-base}
In this part, we introduce the base algorithm \UniXGrad, with its convergence rate analysis.

\begin{algorithm}[!t]
    \caption{\textsc{UniXGrad}~\citep{NeurIPS'19:UniXGrad}}
    \label{alg:unixgrad}
    \begin{algorithmic}[1]
    \REQUIRE Oracle budget $2T$, domain diameter $D$, initial point $\xh_1$, weight $\alpha_t = t$.
    \FOR{$t=1,2,\dots,T$}
        \STATE Set step size $\eta_t = \frac{2D}{\sqrt{1 + \sum_{s=1}^{t-1}\alpha_s^2\norm{\g(\xb_s) - \g(\xt_s)}^2}}$
        \STATE Update $\x_t = \argmin_{\x\in\X} \inner{\alpha_t\g(\xt_t)}{\x} + \frac{1}{2\eta_t}\norm{\x - \xh_{t}}^2$ with $\xt_t = \frac{1}{\sum_{s=1}^{t}\alpha_s}(\alpha_t \xh_t + \sum_{s=1}^{t-1}\alpha_s \x_s)$
        \STATE Update $\xh_{t+1} = \argmin_{\x\in\X} \inner{\alpha_t\g(\xb_t)}{\x} + \frac{1}{2\eta_t}\norm{\x - \xh_{t}}^2$ with $\xb_t = \frac{1}{\sum_{s=1}^{t}\alpha_s}(\alpha_t \x_t + \sum_{s=1}^{t-1}\alpha_s \x_s)$
    \ENDFOR
    \ENSURE $\xb_T$.
    \end{algorithmic}
\end{algorithm}

\begin{myLemma}[Base Algorithm with Bounded Domain]
    \label{lemma:con-base-bounded}
    Assuming that domain $\X$ is bounded by $D$, $\ell(\x)$ is $(L_\nu,\nu)$-Hölder smooth, and under~\pref{assum:bounded-gradient-oracle}, \UniXGrad~\citep{NeurIPS'19:UniXGrad}, i.e. Algorithm~\ref{alg:unixgrad}, when given $D$, ensures that with probability at least $1-\delta$:
\begin{equation*}
    \ell(\xb_T) - \min_{\x\in\X}\ell(\x) \le \O\sbr{ \frac{L_\nu D^{1+\nu}}{T^{\frac{1+3\nu}{2}}} + \frac{\theta_{T,\delta} \Delta_D D}{\sqrt{T}}},
\end{equation*}
where $\Delta_D$ is the maximum noise bound defined in~\pref{eq:variance-bound}, and $\theta_{T,\delta}\define \log\frac{\log T}{\delta}$.
\end{myLemma}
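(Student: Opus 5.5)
We will prove the bound for \UniXGrad by combining an extra-gradient/optimistic online-learning regret analysis with anytime online-to-batch conversion (weights $\alpha_t=t$), then separating the deterministic Hölder-smooth part from the noise part.

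\textbf{Step 1: online-to-batch reduction.} Since $\xb_T$ is the $\alpha$-weighted average of $\x_1,\dots,\x_T$ and $\ell$ is convex, the anytime online-to-batch argument of \citet{ICML'19:Cutkosky} gives
\begin{equation*}
A_T\bigl(\ell(\xb_T)-\ell(\xs)\bigr)\le \sum_{t=1}^T \alpha_t\inner{\nabla\ell(\xb_t)}{\x_t-\xs},
\end{equation*}
where $A_T=\sum_t\alpha_t=\Theta(T^2)$. We then write $\nabla\ell(\xb_t)=\g(\xb_t)-\xi_t$ with $\xi_t\define\g(\xb_t)-\nabla\ell(\xb_t)$. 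This splits the right-hand side into a regret term $\sum_t\alpha_t\inner{\g(\xb_t)}{\x_t-\xs}$ and a martingale term $-\sum_t\alpha_t\inner{\xi_t}{\x_t-\xs}$.

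\textbf{Step 2: optimistic regret with adaptive step size.} The two-step projected update is optimistic mirror descent with hint $\alpha_t\g(\xt_t)$. The standard analysis on a domain of diameter $D$, with the AdaGrad-type $\eta_t$, gives
\begin{equation*}
\text{Regret}\le \O\Bigl(D\sqrt{1+\textstyle\sum_t\alpha_t^2\norm{\g(\xb_t)-\g(\xt_t)}^2}\Bigr)-\sum_t\frac{1}{\eta_t}\bigl(\norm{\x_t-\xh_t}^2+\norm{\x_t-\xh_{t+1}}^2\bigr)\cdot c.
\end{equation*}
We bound $\norm{\g(\xb_t)-\g(\xt_t)}^2\le 3\norm{\nabla\ell(\xb_t)-\nabla\ell(\xt_t)}^2+3\cdot 4\Delta_D^2$. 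The noise part contributes $D\Delta_D\sqrt{\sum_t\alpha_t^2}=\O(D\Delta_D T^{3/2})$; dividing by $A_T$ yields $\Delta_D D/\sqrt T$.

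For the deterministic part, $\xb_t-\xt_t=\frac{\alpha_t}{A_t}(\x_t-\xh_t)$, so Hölder smoothness gives
\begin{equation*}
\alpha_t\norm{\nabla\ell(\xb_t)-\nabla\ell(\xt_t)}\le L_\nu\,\alpha_t^{1+\nu}A_t^{-\nu}\norm{\x_t-\xh_t}^\nu\lesssim L_\nu t^{1-\nu}\norm{\x_t-\xh_t}^\nu .
\end{equation*}

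\textbf{Step 3: the universal (Hölder) balancing, the main obstacle.} We must show that $D\sqrt{\sum_t L_\nu^2 t^{2-2\nu}\norm{\x_t-\xh_t}^{2\nu}}$ minus the negative terms $\sum_t\eta_t^{-1}\norm{\x_t-\xh_t}^2$ is $\O(L_\nu D^{1+\nu}T^{(3-3\nu)/2})$, which after dividing by $T^2$ gives the claimed rate. The plan follows \citet{NeurIPS'19:UniXGrad} and \citet{NeurIPS'25:GV4OPT}:
\begin{itemize}
\item Apply Young's inequality $ab\le \frac{a^p}{p}+\frac{b^q}{q}$ with $p=2/\nu$, so that $L_\nu^2t^{2-2\nu}\norm{\x_t-\xh_t}^{2\nu}\le \epsilon\norm{\x_t-\xh_t}^2/\eta^2+C_\epsilon(L_\nu t^{1-\nu})^{2/(1-\nu)}\eta^{2\nu/(1-\nu)}$.
\item Use the fact that $1/\eta_t$ is nondecreasing to absorb the first part into the negative terms.
\item Handle the remainder via the case split ``either $\eta_T$ stays large (so the square-root sum is small) or it is small (so the negative terms dominate).''
\end{itemize}
Tracking exponents gives $D^{1+\nu}L_\nu T^{(3-3\nu)/2}$. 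The delicate point is that $\eta_t$ depends on noisy gradients; noise only shrinks $\eta_t$, which enlarges the negative terms, so the absorption still works. The endpoint $\nu=0$ is handled directly, since there $\norm{\nabla\ell(\xb_t)-\nabla\ell(\xt_t)}\le L_0$.

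\textbf{Step 4: martingale term.} We have $\abs{\alpha_t\inner{\xi_t}{\x_t-\xs}}\le \alpha_t\Delta_D D$, with $\x_t$ predictable with respect to $\xi_t$ (since $\x_t$ is computed before querying $\xb_t$). We apply a Freedman-type concentration (or a time-uniform version, which explains $\theta_{T,\delta}=\log\frac{\log T}{\delta}$) to get $\O(D\Delta_D\sqrt{\sum\alpha_t^2}\,\theta_{T,\delta})$, i.e.\ $\theta_{T,\delta}\Delta_DD/\sqrt T$ after normalization.

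Summing Steps 2--4 and dividing by $A_T$ proves the lemma.
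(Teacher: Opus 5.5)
Your outline has the same structure as the paper's proof: anytime online-to-batch with $\alpha_t=t$, a split into the \UniXGrad regret plus a bounded martingale term, and an optimistic regret bound. That regret bound is Eq.~(11) of \citet{NeurIPS'19:UniXGrad}, namely $\frac{7D}{2}\sqrt{1+A_T}-\frac12\sum_t\eta_{t+1}^{-1}\norm{\x_t-\xh_t}^2$ with $A_t\define\sum_{s\le t}\alpha_s^2\norm{\g(\xb_s)-\g(\xt_s)}^2$. Your Young's-inequality step is exactly \pref{lemma:inexact-smooth}, with the paper's $L=\beta^{\frac{\nu-1}{1+\nu}}L_\nu^{\frac{2}{1+\nu}}$ playing the role of your $1/\eta$ (your displayed inequality corresponds to $p=1/\nu$ on the squared term, not $p=2/\nu$). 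Step~4 is fine. At a fixed horizon plain Azuma--Hoeffding suffices, whereas the paper uses \pref{lemma:weighted-concentration}.

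\textbf{The gap is the absorption in Step~3.} You strip the noise from $A_T$ first, then try to absorb the noise-free sum $B_T\define\sum_t\alpha_t^2\norm{\nabla\ell(\xb_t)-\nabla\ell(\xt_t)}^2$ into the negative terms, justified by ``noise only shrinks $\eta_t$.'' That claim is false. The step size $\eta_{t+1}=2D/\sqrt{1+A_t}$ is built from the \emph{noisy} differences $\norm{\g(\xb_s)-\g(\xt_s)}$. These can be much smaller than $\norm{\nabla\ell(\xb_s)-\nabla\ell(\xt_s)}$, since the two noise vectors can cancel the true difference. So noise can keep $1/\eta_{t+1}$ below $L$ (weak negative terms) on rounds where the noise-free differences are large.

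For the same reason, your case split ``$\eta_T$ large $\Rightarrow$ the square-root sum is small'' controls only the noisy sum, not the surrogate you are bounding at that point. Monotonicity of $1/\eta_t$ alone also never gives absorption; you need $1/\eta_{t+1}\ge L$ on the range being absorbed.

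\textbf{How the paper fixes this.} It reverses the order:
\begin{itemize}
\item Let $t_0$ be the last index with $\sqrt{A_{t_0}}\le 2LD$. This threshold is on the noisy accumulator that actually defines the step size.
\item Then $\sqrt{1+A_T}\le 1+2LD+\sqrt{\sum_{t>t_0}\alpha_t^2\norm{\g(\xb_t)-\g(\xt_t)}^2}$, so the pre-threshold part is bounded with the noise included.
\item Only on the tail $t>t_0$ is the noise split off, at cost $\O(\Delta_D D T^{3/2})$.
\item Inexact smoothness and $\alpha_t^4/\alpha_{1:t}^2\le 4$ then give $7D\sqrt{L\beta T^3}$ plus $7DL\sqrt{\sum_{t>t_0}\norm{\x_t-\xh_t}^2}\le\frac{49LD^2}{2}+\frac{L}{2}\sum_{t>t_0}\norm{\x_t-\xh_t}^2$.
\item The last sum is absorbed because $1/\eta_{t+1}=\sqrt{1+A_t}/(2D)>L$ for every $t>t_0$, whatever the noise did.
\end{itemize}
If you prefer your order, threshold instead on $B_t$ at level $2LD+2\Delta_D\sqrt{\sum_{s\le T}\alpha_s^2}$, and use $\sqrt{A_t}\ge\sqrt{B_t}-2\Delta_D\sqrt{\sum_{s\le t}\alpha_s^2}$. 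Either way, $\beta=L_\nu D^{1+\nu}T^{-3(1+\nu)/2}$ gives $\Reg_T=\O(L_\nu D^{1+\nu}T^{3(1-\nu)/2}+\Delta_D D T^{3/2})$, as you anticipated.
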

\begin{proof}[Proof of~\pref{lemma:con-base-bounded}]
    To begin with, we introduce the anytime online-to-batch conversion lemma~\citep{ICML'19:Cutkosky}, which is a key ingredient for the analysis of \UniXGrad.
    \begin{myLemma}[\citet{ICML'19:Cutkosky}]
        \label{lemma:anytime-O2B}
        If the objective function $\ell(\cdot)$ is convex, then it holds that:
        \begin{equation}
            \label{eq:our-conversion}
            \alpha_{1:T} \mbr{\ell(\xb_T) - \ell(\xs)} \le \sumT \inner{\alpha_t \nabla \ell(\xb_t)}{\x_t - \xs},
        \end{equation}
        where $\xb_t \define \frac{1}{\alpha_{1:t}} \sumt \alpha_s \x_s$, $\alpha_t>0$ for all $t\in[T]$, and $\alpha_{1:t} \define \sum_{s=1}^{t}\alpha_s$.
    \end{myLemma}
    By~\pref{lemma:anytime-O2B} with $\alpha_t = t$, we have:
    \begin{equation}
        \label{eq:unixgrad-o2b}
        \alpha_{1:T}\mbr{\ell(\xb_T) - \ell(\xs)} \le \sum_{t=1}^T \alpha_t\inner{\nabla \ell(\xb_t)}{\x_t - \xs}= \underbrace{\sum_{t=1}^{T}\alpha_t \inner{\nabla \ell(\xb_t) - \g(\xb_t)}{\x_t - \xs}}_{\textsc{Gap}_T} + \underbrace{\sum_{t=1}^{T}\alpha_t \inner{\g(\xb_t)}{\x_t - \xs}}_{\Reg_T}. 
    \end{equation}
    To bound $\textsc{Gap}_T$, we apply~\pref{lemma:weighted-concentration} with $X_t=\inner{\nabla \ell(\xb_t) - \g(\xb_t)}{\x_t-\xs}$, $\hat{X}_t=0$, and $c=\Delta_D D$ where $\Delta_D\triangleq\max_{\norm{\x}\le D}\Delta(\x)$, then with probability at least $1-\delta$, it holds that:
    \begin{equation}
        \label{eq:con-base-bounded-gap}
        \textsc{Gap}_T \le 8 T \sqrt{ \theta_{T,\delta}D^2\sum_{t=1}^{T}\norm{\nabla \ell(\xb_t) - \g(\xb_t)}^2 + \theta_{T,\delta}^2 \Delta_D^2 D^2 } \le \O\sbr{ \sqrt{\theta_{T,\delta}}\Delta_D  D T^{\frac{3}{2}} + \theta_{T,\delta}\Delta_D D T }.
    \end{equation}
    And for $\textsc{Reg}_T$, following the analysis of~\citet[Theorem 4]{NeurIPS'19:UniXGrad}, i.e., starting from their Eq.~(11):
    \begin{equation*}
        \Reg_T
        \le \frac{7D}{2}\sqrt{1 + \sum_{t=1}^{T} \alpha_t^2 \norm{\g(\xb_t) - \g(\xt_t)}^2 } - \frac{1}{2}\sum_{t=1}^{T} \frac{1}{\eta_{t+1}}\norm{\x_t - \xh_t}^2.
    \end{equation*}
    Next, we follow the same steps in~\citet[Theorem 2]{NeurIPS'25:GV4OPT}.
    By~\pref{lemma:inexact-smooth}, for any $\beta>0$, denote by $L = \beta^{\frac{\nu-1}{1+\nu}} L_\nu^{\frac{2}{1+\nu}}$:
    \begin{equation}
        \label{eq:unixgrad-inexact-smoothness}
        \norm{\nabla \ell(\x) - \nabla \ell(\y)}^2 \le L^2 \norm{\x - \y}^2 + 4L \beta.
    \end{equation}   
    Let $A_t \define \sum_{s=1}^{t}\alpha_s^2\norm{ \g(\xb_s) - \g(\xt_s) }^2$. WLOG assume $\sqrt{A_T}\ge 2LD$, otherwise we will finish the proof trivially. Define $t_0\in[T-1]$ that, if $\sqrt{A_1} > 2LD$, let $t_0 = 1$, otherwise let $t_0 = \min\{t\in[T-1],\sqrt{A_{t+1}} > 2LD\}$. Then we have $\sqrt{A_{t_0}} \le 2LD$, while for all $t_0+1 \le t \le T$ it holds that $\sqrt{A_t} > 2LD$. Continuing with the inequality:
    \begin{align*}
        \Reg_T
        &\le \frac{7D}{2}\sqrt{1 + A_T} - \frac{1}{2}\sum_{t=1}^{T} \frac{1}{\eta_{t+1}}\norm{\x_t - \xh_t}^2 \\
        &\le \frac{7D}{2}\sqrt{1 + A_{t_0}} + \frac{7D}{2}\sqrt{ \sum_{t=t_0+1}^{T} \alpha_t^2 \norm{\g(\xb_t) - \g(\xt_t)}^2 } - \frac{1}{2}\sum_{t=1}^{T} \frac{1}{\eta_{t+1}}\norm{\x_t - \xh_t}^2\\
        &\le \frac{7D}{2}(1 + 2LD) + \frac{7D}{2}\sqrt{ \sum_{t=t_0+1}^{T} \alpha_t^2 \norm{\g(\xb_t) - \nabla \ell(\xb_t) + \nabla \ell(\xb_t) - \nabla \ell(\xt_t) + \nabla \ell(\xt_t) - \g(\xt_t)}^2 } - \frac{1}{2}\sum_{t=1}^{T} \frac{1}{\eta_{t+1}}\norm{\x_t - \xh_t}^2 \\
        &\le \frac{7D}{2}(1 + 2LD)  + \frac{7D}{2}\Delta_D T^{\frac{3}{2}} + \frac{7D}{2}\sqrt{\sum_{t=t_0+1}^{T} \alpha_t^2 \norm{\nabla \ell(\xb_t) - \nabla \ell(\xt_t)}^2}   - \frac{1}{2}\sum_{t=1}^{T} \frac{1}{\eta_{t+1}}\norm{\x_t - \xh_t}^2, 
    \end{align*}
    where we use the definition of $t_0$. Then we apply~\pref{eq:unixgrad-inexact-smoothness}, with the definition $\alpha_{1:t} = \sum_{s=1}^{t}\alpha_s$,
    \begin{align*}
        &\frac{7D}{2}\sqrt{\sum_{t=t_0+1}^{T} \alpha_t^2 \norm{\nabla \ell(\xb_t) - \nabla \ell(\xt_t)}^2}   - \frac{1}{2}\sum_{t=1}^{T} \frac{1}{\eta_{t+1}}\norm{\x_t - \xh_t}^2 \\
        \overset{\eqref{eq:unixgrad-inexact-smoothness}}{\le} {} & \frac{7D}{2}\sqrt{\sum_{t=t_0+1}^{T} \alpha_t^2L^2\norm{\xb_t - \xt_t}^2 + 4L\beta T^3  }   - \frac{1}{2}\sum_{t=1}^{T} \frac{1}{\eta_{t+1}}\norm{\x_t - \xh_t}^2 \\
        \le{} & \frac{7D}{2}\sqrt{\sum_{t=t_0+1}^{T} \frac{\alpha_t^4 L^2}{\alpha_{1:t}^2}\norm{\x_t - \xh_t}^2 } - \frac{1}{2}\sum_{t=1}^{T} \frac{1}{\eta_{t+1}}\norm{\x_t - \xh_t}^2  + 7D\sqrt{L\beta T^3 },
    \end{align*}
    where in the last line we use the definitions of $\xb_t$ and $\xt_t$. Since $\alpha_t = t$, we have $\frac{\alpha_t^4}{\alpha_{1:t}^2} = \frac{4t^4}{t^2(1+t)^2}\le 4$, then
    \begin{align*}
        & \frac{7D}{2}\sqrt{\sum_{t=t_0+1}^{T} \frac{\alpha_t^4 L^2}{\alpha_{1:t}^2}\norm{\x_t - \xh_t}^2 } - \frac{1}{2}\sum_{t=1}^{T} \frac{1}{\eta_{t+1}}\norm{\x_t - \xh_t}^2 \\
        \le{} & 7D\sqrt{\sum_{t=t_0+1}^{T} L^2\norm{\x_t - \xh_t}^2 } - \frac{1}{2}\sum_{t=1}^{T} \frac{1}{\eta_{t+1}}\norm{\x_t - \xh_t}^2 \\
        \le{} & \frac{49LD^2}{2} + \frac{1}{2} \sum_{t=t_0+1}^{T} \sbr{ L - \frac{1}{\eta_{t+1}}} \norm{\x_t - \xh_t}^2 \\
        ={} & \frac{49LD^2}{2} + \frac{1}{2} \sum_{t=t_0+1}^{T} \sbr{ L - \frac{\sqrt{1 + A_t}}{2D} } \norm{\x_t - \xh_t}^2 \le \frac{49LD^2}{2},
    \end{align*}
    where in the second inequality we use $\sqrt{ab}\le a/2+b/2$ for $a,b>0$, and in the last line we use the definition of $\eta_{t+1} = \frac{2D}{\sqrt{1 + A_t}}$, and the definition of $t_0$ such that for all $t> t_0$, $\sqrt{A_t} \ge 2LD$. Combining everything together, we have
    \begin{equation}
        \label{eq:con-base-bounded-reg}
        \Reg_T \le \O\sbr{ LD^2 + D\sqrt{L\beta T^3} + \Delta_D D T^{\frac{3}{2}} }.
    \end{equation}
Setting $\beta = L_\nu D^{1+\nu}T^{\frac{-(3+3\nu)}{2}}$, substituting~\pref{eq:con-base-bounded-gap} and~\pref{eq:con-base-bounded-reg} into~\pref{eq:unixgrad-o2b}, with probability at least $1-\delta$, it holds that:
    \begin{equation*}
        \ell(\xb_T) - \ell(\xs) \le \O\sbr{ \frac{L_\nu D^{1+\nu}}{T^{\frac{1+3\nu}{2}}} + \frac{\theta_{T,\delta} \Delta_D D}{\sqrt{T}}},
    \end{equation*}
    which finishes the proof.
\end{proof}


\section{Supporting Lemmas}
\label{app:supporting}
In this section, we provide supporting lemmas for the main results.
\begin{myLemma}[Lemma 9 of \citet{ICML'24:Amit}]
    \label{lemma:ensemble}
    Let $\ell:\X\to \R$ and $\ellt$ a zeroth-order oracle of $\ell$ such that for all $\x\in\X$, $\E[\ellt(\x) \given \x] = \ell(\x)$ and $|\ellt(\x) - \ell(x)|\le \sigma_\ell$ with some $\sigma_\ell>0$. Given candidates $\x_1,\cdots, \x_N$, let $\xb = \argmin_{i\in[N]}\sum_{j=1}^{M}\ellt_j(\x_i)$, where $\ellt_j(\x_i)$ means the $j$-th independent function evaluation at $\x_i$. Then for any $\delta\in(0,1)$, with probability at least $1-\delta$:
    \begin{equation*}
        \ell(\xb) \le \min_{i\in[N]} \frac{1}{M}\sum_{j=1}^{M}\ellt_j(\x_i) + \sqrt{\frac{2\sigma_\ell^2 \log \frac{2N}{\delta}}{M}} \le \min_{i\in[N]} \ell(\x_i) + \sqrt{\frac{8\sigma_\ell^2 \log \frac{2N}{\delta}}{M}}.
    \end{equation*}
\end{myLemma}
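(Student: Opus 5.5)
The plan is to combine a concentration bound for each candidate's empirical mean with a union bound, and then compare the selected candidate against the true best one. Write $\ellh^i \define \frac{1}{M}\sum_{j=1}^M \ellt_j(\x_i)$, so that $\xb=\x_{\hat i}$ with $\hat i\in\argmin_i \ellh^i$. Let $i_\star\in\argmin_i \ell(\x_i)$.

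The first step is Hoeffding's inequality applied to the $M$ i.i.d.\ evaluations at each fixed $\x_i$. These are bounded in $[\ell(\x_i)-\sigma_\ell,\ \ell(\x_i)+\sigma_\ell]$ by the oracle assumption and have mean $\ell(\x_i)$. This gives
\begin{equation*}
\Pr\bigl[\abs{\ellh^i-\ell(\x_i)}\ge t\bigr]\le 2\exp\bigl(-Mt^2/(2\sigma_\ell^2)\bigr).
\end{equation*}
Setting the right-hand side to $\delta/N$ yields $t=\sqrt{2\sigma_\ell^2\log(2N/\delta)/M}$. A union bound over $i\in[N]$ then ensures that, with probability at least $1-\delta$, every $i$ simultaneously satisfies $\abs{\ellh^i-\ell(\x_i)}\le t$. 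If the candidates $\x_i$ are themselves random, the argument is carried out conditionally on them, using the fresh independent samples.

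On this event the two inequalities follow directly. For the first, $\ell(\xb)\le \ellh^{\hat i}+t=\min_i \ellh^i + t$. For the second, $\min_i\ellh^i\le \ellh^{i_\star}\le \ell(\x_{i_\star})+t$, so $\ell(\xb)\le \min_i\ell(\x_i)+2t$. Since $2t=\sqrt{8\sigma_\ell^2\log(2N/\delta)/M}$, this matches the stated constant.

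The only delicate point is choosing the Hoeffding constant so that it gives exactly $\sqrt{2\sigma_\ell^2\log(2N/\delta)/M}$. The variables lie in an interval of length $2\sigma_\ell$, so Hoeffding's bound is $2\exp(-2M^2t^2/(M(2\sigma_\ell)^2))=2\exp(-Mt^2/(2\sigma_\ell^2))$, which is consistent with the choice of $t$ above. The rest is bookkeeping.
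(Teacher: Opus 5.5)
Your proof is correct and matches the intended argument. You use a two-sided Hoeffding bound at each candidate (range $2\sigma_\ell$, giving exactly $t=\sqrt{2\sigma_\ell^2\log(2N/\delta)/M}$), then a union bound over the $N$ candidates so that the data-dependent index $\hat i$ is covered, and finally the chain $\ell(\xb)\le \ellh^{\hat i}+t\le \ellh^{i_\star}+t\le \ell(\x_{i_\star})+2t$. The paper does not reprove this lemma but imports it from \citet{ICML'24:Amit}; the stated constants are precisely what this Hoeffding-plus-union-bound reasoning yields, and it is the Hoeffding counterpart of the Bernstein-based proof the paper gives for \pref{lemma:selection-variance}.
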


\begin{myLemma}[Sample Gradient]
    \label{lemma:sample-gradient}
    Under~\pref{assum:bounded-gradient-oracle}, for any $\x\in\X$, independently sample $\g(\x)$ for $M$ times, and define $\gh = \frac{1}{M}\sum_{t=1}^{M}\g_t(\x)$ where $\g_t(\x)$ is the $t$-th gradient query on $\x$. Then for any $\delta\in(0,1)$, with probability at least $1-\delta$:
    \begin{equation*}
    \norm{\gh - \nabla \ell(\x)} \le \O\sbr{ \frac{\Delta(\x)\sqrt{\log_+\frac{1}{\delta}}}{\sqrt{M}} }.
    \end{equation*}
\end{myLemma}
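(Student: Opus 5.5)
The statement to prove is \pref{lemma:sample-gradient}: a high-probability bound on the deviation of the averaged stochastic gradient $\gh$ at a fixed $\x$ from $\nabla\ell(\x)$. The summands $Z_t \define \g_t(\x) - \nabla\ell(\x)$ are i.i.d. random vectors in $\R^d$. By \pref{assum:bounded-gradient-oracle} they have mean zero and satisfy $\norm{Z_t}\le \Delta(\x)$ almost surely. So I would treat this as a dimension-free concentration of a bounded vector martingale in a Hilbert space, and aim for
\begin{equation*}
    \norm{\gh-\nabla\ell(\x)} \le \frac{\Delta(\x)\bigl(1+\sqrt{2\log(1/\delta)}\bigr)}{\sqrt{M}}.
\end{equation*}
This gives the claimed $\O\bigl(\Delta(\x)\sqrt{\log_+(1/\delta)}/\sqrt{M}\bigr)$ bound.

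\textbf{Step 1 (expectation).} Set $S \define \sum_{t=1}^M Z_t$. Independence and zero mean kill the cross terms, so $\E\norm{S}^2=\sum_t \E\norm{Z_t}^2 \le M\Delta(\x)^2$. Jensen then gives $\E\norm{S}\le \sqrt{M}\Delta(\x)$.

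\textbf{Step 2 (concentration of the norm).} Consider $f(Z_1,\dots,Z_M)\define\norm{\sum_t Z_t}$. Replacing a single $Z_t$ by an independent copy $Z_t'$ changes $f$ by at most $\norm{Z_t-Z_t'}\le 2\Delta(\x)$. McDiarmid's bounded-differences inequality therefore gives, with probability at least $1-\delta$,
\begin{equation*}
    \norm{S}\le \E\norm{S} + \sqrt{2M\Delta(\x)^2\log(1/\delta)}.
\end{equation*}
Dividing by $M$ and combining with Step 1 yields the target bound. Alternatively, one could invoke Pinelis' inequality for martingales in $2$-smooth Banach spaces directly; it gives the same order.

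\textbf{Main obstacle.} The only subtle point is avoiding a dimension factor. A coordinatewise Hoeffding bound with a union bound over coordinates would introduce $\log d$ or $\sqrt{d}$, so the argument must work with the norm as a whole. That is why the proof goes through the expectation bound plus bounded differences (or Pinelis) rather than through coordinates. The constants are absorbed into $\O(\cdot)$. Using $\log_+$ instead of $\log$ simply covers the regime $\delta$ close to $1$, where the expectation term $\Delta(\x)/\sqrt{M}$ dominates.
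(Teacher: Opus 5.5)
Your proof is correct, but it takes a different route from the paper. The paper proves the lemma in one step. It invokes a ready-made, dimension-free concentration inequality for vector-valued martingale differences (Lemma 2.3 of Ghadimi and Lan). That lemma needs only the light-tail condition $\E[\exp(\norm{X_i}^2/\sigma^2)\mid\F_{i-1}]\le e$, which holds with $\sigma=\Delta(\x)$ because the noise is a.s.\ bounded. Solving $\delta=\exp(-\epsilon^2/3)$ then gives $\sqrt{2}\bigl(1+\sqrt{3\log(1/\delta)}\bigr)\Delta(\x)/\sqrt{M}$.

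You instead build the bound from first principles. A second-moment computation plus Jensen controls the mean of $\norm{S}$, and McDiarmid with differences $2\Delta(\x)$ controls the fluctuation around it. This yields the slightly sharper explicit constant $1+\sqrt{2\log(1/\delta)}$. It also makes transparent that the $\log_+$ just absorbs the mean term $\Delta(\x)/\sqrt{M}$. Both arguments avoid any dimension dependence.

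The price of your route is generality. Step 1 only needs orthogonality of the increments, but the McDiarmid step needs genuinely independent samples and a.s.\ bounded noise. Both hold here, so nothing is lost for this lemma. The cited martingale inequality, by contrast, would also cover adaptively generated queries and sub-Gaussian rather than bounded noise. Your suggested alternative via Pinelis' inequality in $2$-smooth spaces is essentially the paper's approach, since the Ghadimi--Lan lemma is a vector-martingale bound of that kind.
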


\begin{proof}[Proof of~\pref{lemma:sample-gradient}]
    By~\pref{lemma:vanilla-concentration}, for any $\epsilon>0$:
    \begin{equation*}
        \Pr\mbr{ \norm{\gh - \nabla \ell(\x)} \ge \frac{\sqrt{2}(1+\epsilon)\Delta(\x)}{\sqrt{M}} } = \Pr\mbr{ \left\| \sum_{t=1}^{M}(\g_t(\x) - \nabla \ell(\x)) \right\| \ge \sqrt{2}(1+\epsilon)\Delta(\x)\sqrt{M} } \le \exp(-\epsilon^2 / 3).
    \end{equation*}
    Solving $\delta=\exp(-\epsilon^2/3)$, then with probability at least $1-\delta$:
    \begin{equation*}
        \norm{\gh - \nabla \ell(\x)} \le \O\sbr{ \frac{\Delta(\x)\sqrt{\log_+\frac{1}{\delta}}}{\sqrt{M}} },
    \end{equation*}
    which completes the proof.
\end{proof}

\begin{myLemma}[Lemma 7 of \citet{ICML'23:DoG}]
    \label{lemma:weighted-concentration}
    Let $\{\alpha_i\}_{i=1}^{\infty}$ be non-negative and non-decreasing sequence. let $X_t$ be a martingale difference sequence adapted to $\F_t$ such that $|X_t|\le c$ with constant $c>0$. Then for all $\delta\in(0,1)$, and $\hat{X}_t\in\F_{t-1},|\hat{X}_t|\le c$, it holds that
    \begin{equation}
        \Pr\left[ \exists t\in[T]: \left| \sum_{i=1}^{t} \alpha_i X_i \right| \ge 8 \alpha_t \sqrt{\theta_{t,\delta} \sum_{i=1}^{t}\sbr{X_i - \hat{X}_i}^2 + c^2 \theta_{t,\delta}^2 } \right] \le \delta,
    \end{equation}
where $\theta_{t,\delta}\triangleq \log\frac{60\log(6t)}{\delta}$.
\end{myLemma}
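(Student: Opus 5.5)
The plan is to reduce the weighted statement to a time-uniform, variance-adaptive bound for the unweighted partial sums $S_t \define \sum_{i=1}^t X_i$, and then transfer it to the weighted sums by summation by parts. The unweighted bound I aim for is: with probability at least $1-\delta$, simultaneously for all $t\in[T]$,
\begin{equation*}
    \abs{S_t} \le 4\sqrt{\theta_{t,\delta}\sum_{i=1}^{t}\sbr{X_i-\hat{X}_i}^2 + c^2\theta_{t,\delta}^2} \define B_t .
\end{equation*}
Granting this, Abel summation gives
\begin{equation*}
    \sum_{i=1}^t \alpha_i X_i = \alpha_t S_t - \sum_{i=1}^{t-1}(\alpha_{i+1}-\alpha_i) S_i .
\end{equation*}
Because $\alpha$ is non-negative and non-decreasing, the increments $\alpha_{i+1}-\alpha_i$ are non-negative and telescope to at most $\alpha_t$. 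Hence $\abs{\sum_{i\le t}\alpha_i X_i} \le 2\alpha_t\max_{i\le t}\abs{S_i}$.

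Next I use that $B_i$ is non-decreasing in $i$. Indeed, $\sum_{j\le i}(X_j-\hat X_j)^2$ is non-decreasing, and so is $\theta_{i,\delta}=\log\frac{60\log(6i)}{\delta}$. Therefore $\max_{i\le t}\abs{S_i}\le B_t$ on the good event. This gives the stated bound with constant $2\cdot 4=8$, uniformly in $t$, with probability at least $1-\delta$.

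For the unweighted bound I would build an exponential supermartingale that is centered by the predictable guess $\hat X_i$. Write $Y_i = X_i-\hat X_i$, so that $\abs{Y_i}\le 2c$. For $\lambda\in[0,1/(4c)]$, use the elementary inequality $e^{y-y^2}\le 1+y$ for $y\ge -1/2$, applied to $y=\lambda Y_i$, together with $\E[X_i\mid\F_{i-1}]=0$ and $\hat X_i\in\F_{i-1}$. The aim is to show that
\begin{equation*}
    M_t(\lambda)=\exp\Big(\lambda S_t-\lambda^2\sum_{i\le t}Y_i^2 \;-\; (\text{correction})\Big)
\end{equation*}
is a nonnegative supermartingale. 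The delicate point is handling the cross term $\lambda\hat X_i$. I would first try the form used by Ivgi et al.: compare $\exp(\lambda X_i - \lambda^2 Y_i^2)$ against $1+\lambda X_i + (\text{terms in }\hat X_i)$ and absorb the $\hat X_i$ terms using the boundedness $\abs{\hat X_i}\le c$. If that fails, I would fall back to a two-sided mixture argument in which $\hat X_i$ only enters through $Y_i^2$. Ville's inequality then gives, for each fixed $\lambda$, $\lambda S_t\le \lambda^2\sum_{i\le t} Y_i^2+\log(1/\delta')$ for all $t$.

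The main obstacle is making this simultaneously adaptive in the unknown scale $\sum Y_i^2$ and uniform in $t$, while keeping only the $\log\log t$ price inside $\theta_{t,\delta}$. I would handle it by stitching. Take a geometric grid of $\lambda_k = \frac{1}{4c}2^{-k}$ and geometric time epochs $t\in[2^j,2^{j+1})$, and allocate failure probabilities $\delta_{j,k}\propto \delta/((j+1)^2(k+1)^2)$. The relevant $k$ is at most $O(\log t)$, since $\sum Y_i^2\le 4c^2 t$; below that scale the $c^2\theta^2$ term dominates. On each cell, choose the $\lambda_k$ nearest to the optimizer $\sqrt{\log(1/\delta_{j,k})/\sum Y_i^2}$, capped at $1/(4c)$, which yields $\abs{S_t}\lesssim\sqrt{\theta\sum Y_i^2}+c\,\theta$. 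Applying the same argument to $-X_i$ and taking a union bound gives the two-sided version. Tracking constants to land exactly on $4$ and on the form $\log\frac{60\log(6t)}{\delta}$ is routine but fiddly bookkeeping; the constants could be loosened without affecting how the lemma is used in \pref{lemma:con-base-bounded}.
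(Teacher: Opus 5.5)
Your reduction from the weighted sums to $S_t$ is the argument behind the cited result. The paper gives no proof of its own. Ivgi et al.\ prove their Lemma 7 by exactly this Abel summation, $\abs{\sum_{i\le t}\alpha_iX_i}\le 2\alpha_t\max_{i\le t}\abs{S_i}$, plus monotonicity of the threshold in $t$. They apply it to an unweighted time-uniform bound with constant $4$ and the same $\theta_{t,\delta}$, which they import from earlier work (Carmon and Hinder, built on the stitched supermartingale bounds of Howard et al.). Had you cited that unweighted bound, your proof would be complete and identical to the source.

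The gap is in the part you set out to prove yourself. You leave the supermartingale with an unspecified ``(correction)''. The lever you propose for the $\hat{X}_i$ terms, their boundedness $\abs{\hat{X}_i}\le c$, is the wrong one. Bounding those terms by size yields a correction of order $\lambda^2\hat{X}_i^2$ per step. The variance proxy then becomes $\sum_i(Y_i^2+\hat{X}_i^2)$, which can be of order $c^2t$ even when $\sum_i Y_i^2$ is small, so the self-normalized statement is lost. The missing observation is that the $\hat{X}_i$ terms cancel exactly and no correction is needed. With $Y_i=X_i-\hat{X}_i$ and $\lambda\le 1/(4c)$ (so $\lambda Y_i\ge -1/2$),
\[
\mathbb{E}\big[e^{\lambda X_i-\lambda^2Y_i^2}\,\big|\,\mathcal{F}_{i-1}\big]
= e^{\lambda\hat{X}_i}\,\mathbb{E}\big[e^{\lambda Y_i-\lambda^2Y_i^2}\,\big|\,\mathcal{F}_{i-1}\big]
\le e^{\lambda\hat{X}_i}\big(1-\lambda\hat{X}_i\big)\le 1 .
\]
The middle step is your $e^{y-y^2}\le 1+y$ together with $\mathbb{E}[X_i\mid\mathcal{F}_{i-1}]=0$. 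The last step is $1-u\le e^{-u}$ for every real $u$. So $\exp(\lambda S_t-\lambda^2\sum_{i\le t}Y_i^2)$ is already a nonnegative supermartingale, and likewise for $(-X_i,-\hat{X}_i)$. The condition $\abs{\hat{X}_i}\le c$ enters only through $\abs{Y_i}\le 2c$.

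Your stitching also needs changing before the stated constants come out, so the claim that this is routine bookkeeping does not hold as set up. Ville's inequality is already uniform in $t$ for each fixed $\lambda_k$. Time enters only because $\sum_{i\le t}Y_i^2\le 4c^2t$ limits the useful grid indices to $k\lesssim\frac12\log_2 t$. The extra union over epochs $j$ is therefore unnecessary, and it breaks the exact form. In the capped regime ($\lambda_0=1/(4c)$, small $\sum_iY_i^2$) your bound is about $4c\log(1/\delta_{j,0})$. Matching $4c\,\theta_{t,\delta}$ then requires roughly $C(j+1)^2\le 60\log(6t)$ with $j\approx\log_2 t$, which fails for large $t$ because $\theta_{t,\delta}$ contains only one $\log\log t$. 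Drop the epochs and use weights $\propto(k+1)^{-2}$ on the $\lambda$-grid alone, with nearest-grid-point selection (loss factor $3/\sqrt{2}$). The resulting $\log(1/\delta)+2\log\log t+O(1)$ is then absorbed into $16\,\theta_{t,\delta}$, and you do obtain $4\sqrt{\theta_{t,\delta}\sum_{i\le t}Y_i^2+c^2\theta_{t,\delta}^2}$ for all $t$.
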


\begin{myLemma}[Lemma 2.3 of \citet{ghadimi2013stochastic}]
    \label{lemma:vanilla-concentration}
    Let $X_1,\dots,X_n\in\R^d$ be a martingale difference sequence with respect to $\F_1,\dots,\F_n$. Assuming $\E[\exp(\frac{\norm{X_i}^2}{\sigma^2})|\F_1,\dots,\F_{i-1}]\le e$, for any $\epsilon>0$:
    \begin{equation}
        \Pr\mbr{ \left\| \sum_{i=1}^{n} X_i \right\| \ge \sqrt{2}(1+\epsilon)\sigma\sqrt{n} } \le \exp(-\epsilon^2/3).
    \end{equation}
\end{myLemma}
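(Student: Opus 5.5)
The plan is a Chernoff bound on $\norm{S_n}$, where $S_k \define \sum_{i\le k} X_i$. The moment generating function will be controlled through a one-step recursion in the conditional expectation. Everything rests on one fact: $\Phi(x)=\norm{x}^2$ is $2$-smooth in the Euclidean norm, so $\norm{S_{k-1}+X_k}^2 = \norm{S_{k-1}}^2 + 2\inner{S_{k-1}}{X_k} + \norm{X_k}^2$ holds exactly. The cross term has zero conditional mean because $X_k$ is a martingale difference. The proof then has two parts: a bound on a ``typical size'' of $\norm{S_n}$, which gives the leading $\sqrt{2}\sigma\sqrt{n}$, and a sub-Gaussian deviation above that size, which gives the additional $\sqrt{2}\,\epsilon\sigma\sqrt{n}$ and the tail $\exp(-\epsilon^2/3)$.

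\textbf{Step 1 (second moment).} Taking conditional expectations in the identity above and using $\E[\norm{X_k}^2\given \F_{k-1}] \le \sigma^2\log\E[\exp(\norm{X_k}^2/\sigma^2)\given\F_{k-1}] \le \sigma^2$ (Jensen applied to $\exp$) gives $\E\norm{S_n}^2\le n\sigma^2$. This explains the $\sqrt{n}\sigma$ scale. It does not yet give a tail bound.

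\textbf{Step 2 (exponential moments).} I would follow Pinelis' argument for martingales in $2$-smooth spaces, in the form used by Juditsky--Nemirovski. Fix $\lambda>0$ and set $u_k \define \cosh(\lambda\norm{S_k})$. Expanding $\cosh(\lambda\norm{S_{k-1}+X_k})$ along the segment from $S_{k-1}$ to $S_{k-1}+X_k$ and using the zero conditional mean of the linear term, I aim to show
\begin{equation*}
\E[u_k\given\F_{k-1}]\le u_{k-1}\,\E\mbr{1+e^{\lambda\norm{X_k}}-1-\lambda\norm{X_k}\given\F_{k-1}}.
\end{equation*}
Next, the sub-Gaussian condition $\E[e^{\norm{X_k}^2/\sigma^2}\given \F_{k-1}]\le e$, combined with the elementary bound $e^{a}-1-a\le a^2e^{a}/2$ and Young's inequality, should yield
\begin{equation*}
\E[e^{\lambda\norm{X_k}}-1-\lambda\norm{X_k}\given\F_{k-1}]\le e^{c\lambda^2\sigma^2}-1
\end{equation*}
for an absolute constant $c$, at least when $\lambda\sigma\lesssim 1$; the large-$\lambda$ regime can be handled by the crude bound $e^{\lambda a}\le e^{\lambda^2\sigma^2/4+a^2/\sigma^2}$. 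Iterating over $k$ gives $\E\cosh(\lambda\norm{S_n})\le e^{c\lambda^2\sigma^2 n}$.

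\textbf{Step 3 (Chernoff and constants).} Markov's inequality gives $\Pr[\norm{S_n}\ge r]\le 2\exp(c\lambda^2 n\sigma^2-\lambda r)$, and we optimize over $\lambda$. The main obstacle is recovering exactly the shape $\sqrt2(1+\epsilon)\sigma\sqrt n$ with tail $\exp(-\epsilon^2/3)$, with no leading factor $2$. A plain Chernoff bound gives $2\exp(-r^2/(4cn\sigma^2))$, which is too lossy near $\epsilon\to0$. To fix this I would split $r=\sqrt2\sigma\sqrt n+\sqrt2\epsilon\sigma\sqrt n$. I would use Step 1 to absorb the mean-level part, for instance by working with $\psi(S)=\sqrt{\norm{S}^2+2n\sigma^2}$ or by running the cosh recursion on $\norm{S_k}^2-2k\sigma^2$. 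Then I would apply the exponential bound only to the excess, choosing $\lambda=\epsilon/(\sqrt{2}\sigma\sqrt n\cdot\tfrac32)$ so that the exponent evaluates to $-\epsilon^2/3$. If matching these sharp constants becomes unwieldy, the fallback is to prove the statement with unspecified absolute constants, $\O(\sigma\sqrt n(1+\epsilon))$ with tail $e^{-\Omega(\epsilon^2)}$. That form is all the paper uses, since the lemma only enters through \pref{lemma:sample-gradient}, whose conclusion is stated with $\O(\cdot)$.
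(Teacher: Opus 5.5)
The paper gives no proof of this lemma. It is quoted from Ghadimi--Lan and used only inside \pref{lemma:sample-gradient}, so your Pinelis-type argument is a self-contained route, not a reconstruction of an in-paper proof.

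Steps~1 and~2 are sound. In a Hilbert space the recursion $\E[\cosh(\lambda\norm{S_k})\mid\F_{k-1}]\le\cosh(\lambda\norm{S_{k-1}})\,(1+\E[e^{\lambda\norm{X_k}}-1-\lambda\norm{X_k}\mid\F_{k-1}])$ does hold. This is because the Hessian of $x\mapsto\cosh(\lambda\norm{x})$ is at most $\lambda^2\cosh(\lambda\norm{x})\,I$, and $\cosh(\lambda\norm{x+y})\le\cosh(\lambda\norm{x})e^{\lambda\norm{y}}$.

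\textbf{The gap is Step~3, which as written does not establish the stated inequality.}
\begin{itemize}
\item You never fix $c$.
\item Neither device for ``absorbing the mean level'' is carried out. The cosh recursion on $\norm{S_k}^2-2k\sigma^2$ has no evident way to close, because its increments $2\inner{S_{k-1}}{X_k}+\norm{X_k}^2-2\sigma^2$ scale with $\norm{S_{k-1}}$.
\item Your choice $\lambda=\epsilon/(\tfrac32\sqrt2\,\sigma\sqrt n)$ presupposes a prefactor-free MGF bound for the excess $\norm{S_n}-\sqrt2\sigma\sqrt n$. Nothing in Steps~1--2 supplies that.
\item The fallback with unspecified constants would suffice for \pref{lemma:sample-gradient}, whose conclusion is in $\O(\cdot)$ form. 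It is not the lemma as stated.
\end{itemize}

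\textbf{The missing idea is small and needs neither device.} The factor $2$ from $\cosh x\ge e^{x}/2$ only hurts where the target $e^{-\epsilon^2/3}$ is at least $1/2$, and there Step~1 already suffices. Concretely:
\begin{itemize}
\item[(i)] Make Step~2 explicit with some $c\le 3/2$ valid for \emph{every} $\lambda>0$. The optimizing $\lambda$ has $\lambda\sigma\asymp(1+\epsilon)/\sqrt n$, which need not be small. For instance, with $a=\norm{X_k}$, $c=3/4$ follows from $e^{x}\le x+e^{9x^2/16}$ plus Jensen when $\lambda\sigma\le 4/3$. When $\lambda\sigma>4/3$ it follows from $\lambda a\le\tfrac38\lambda^2\sigma^2+\tfrac{2a^2}{3\sigma^2}$.
\item[(ii)] Apply plain Chernoff to the full radius $r=\sqrt2(1+\epsilon)\sigma\sqrt n$. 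Use $\E\cosh(\lambda\norm{S_n})\le e^{\frac32 n\lambda^2\sigma^2}$ (implied by any $c\le 3/2$) and take $\lambda=r/(3n\sigma^2)$. This gives $\Pr[\norm{S_n}\ge r]\le 2e^{-(1+\epsilon)^2/3}$. That is at most $e^{-\epsilon^2/3}$ exactly when $1+2\epsilon\ge 3\ln 2$, i.e.\ $\epsilon\ge(3\ln 2-1)/2\approx 0.54$. With $c=3/4$ the threshold drops to about $0.02$.
\item[(iii)] For $\epsilon\le\sqrt{3\ln 2}\approx 1.44$, Step~1 and Chebyshev give $\Pr[\norm{S_n}\ge r]\le\frac{1}{2(1+\epsilon)^2}\le\frac12\le e^{-\epsilon^2/3}$.
\end{itemize}
The two ranges overlap, which proves the lemma with its exact constants. ``Absorbing the mean level'' is nothing more than this case split.
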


The following lemma is a direct combination of~\citet[Lemma 1]{nesterov2015universal} and~\citet[Theorem 1]{devolder2014first}.
\begin{myLemma}
    \label{lemma:inexact-smooth}
    Suppose the function $f$ is $(L_\nu, \nu)$-Hölder smooth. Then, for any $\delta>0$, denoting by $L = \delta^{\frac{\nu-1}{1+\nu}} L_\nu^{\frac{2}{1+\nu}}$, it holds that for all $\x,\y\in\R^d$:
    \begin{equation}
        \label{eq:inexact-smooth}
        \norm{\nabla f(\x) - \nabla f(\y)}^2 \le L^2 \norm{\x - \y}^2 + 4L \delta.
    \end{equation}
\end{myLemma}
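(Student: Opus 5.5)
The plan is a direct case analysis on the distance $r \define \norm{\x - \y}$, comparing the Hölder bound $\norm{\nabla f(\x) - \nabla f(\y)} \le L_\nu r^\nu$ with the linear bound $L r$. The case $\nu = 1$ is immediate: there $L = \delta^{0} L_1 = L_1$, so $\norm{\nabla f(\x) - \nabla f(\y)}^2 \le L^2 r^2$ and the additive term $4L\delta$ is slack. From now on assume $\nu \in [0,1)$.

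Define the crossover radius $r_0 > 0$ by $L r_0 = L_\nu r_0^\nu$, i.e.\ $r_0 = (L_\nu / L)^{1/(1-\nu)}$. The first step is to check that $L$ has been chosen exactly so that
\begin{equation*}
    L r_0^2 = \delta .
\end{equation*}
This is an exponent computation. Substituting $L = \delta^{\frac{\nu-1}{1+\nu}} L_\nu^{\frac{2}{1+\nu}}$ gives $L_\nu / L = L_\nu^{\frac{\nu-1}{1+\nu}} \delta^{\frac{1-\nu}{1+\nu}}$. Hence $r_0^2 = (L_\nu/L)^{\frac{2}{1-\nu}} = L_\nu^{-\frac{2}{1+\nu}} \delta^{\frac{2}{1+\nu}}$. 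Multiplying by $L$, the powers of $L_\nu$ cancel, and the power of $\delta$ is $\frac{\nu-1+2}{1+\nu} = 1$. This identity is the only nontrivial point of the argument, and I expect it to be the main (though mild) obstacle: one has to track the exponents carefully.

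Then split into two cases.
\begin{itemize}
    \item \emph{Case $r \ge r_0$.} Since $1-\nu > 0$, we have $r^{1-\nu} \ge r_0^{1-\nu} = L_\nu / L$. Therefore $L_\nu r^\nu \le L r$, and so $\norm{\nabla f(\x) - \nabla f(\y)}^2 \le L^2 r^2$.
    \item \emph{Case $r < r_0$.} Monotonicity of $r \mapsto r^\nu$ gives $\norm{\nabla f(\x) - \nabla f(\y)}^2 \le L_\nu^2 r_0^{2\nu}$. By the definition of $r_0$ this equals $L^2 r_0^2 = L \cdot (L r_0^2) = L\delta \le 4L\delta$.
\end{itemize}
In either case the sum $L^2 r^2 + 4L\delta$ bounds the squared gradient difference, which proves \pref{eq:inexact-smooth}.

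The constant $4$ is loose under this route. An alternative would be to go through Nesterov's inexact upper bound $f(\y) \le f(\x) + \inner{\nabla f(\x)}{\y - \x} + \frac{L}{2}\norm{\y - \x}^2 + \frac{\delta}{2}$ and the inexact co-coercivity of \citet{devolder2014first}, which is where the factor $4$ presumably originates. The direct case split above is self-contained and avoids needing convexity, so I would use it.
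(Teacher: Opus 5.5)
Your proof is correct, and it takes a genuinely different route from the paper's. The paper gives no written argument of its own. It presents the lemma as a direct combination of two results:
\begin{itemize}
    \item \citet[Lemma 1]{nesterov2015universal}, which turns H\"older smoothness into the inexact quadratic upper bound $f(\y) \le f(\x) + \inner{\nabla f(\x)}{\y-\x} + \frac{L}{2}\norm{\y-\x}^2 + \frac{\delta}{2}$, valid for this choice of $L$;
    \item the $(\delta,L)$-inexact-oracle results of \citet[Theorem 1]{devolder2014first}, from which a bound on $\norm{\nabla f(\x)-\nabla f(\y)}^2$ presumably follows by an inexact co-coercivity argument. That step is where the slack $4L\delta$, rather than $L\delta$, comes from, as you guessed.
\end{itemize}
That route is a one-line citation and ties the lemma to the inexact-oracle framework used by universal methods. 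However, it passes through function values and the lower half of the $(\delta,L)$-oracle inequality, and co-coercivity requires convexity of $f$. This is harmless in the paper, because the lemma is only used inside the \textsc{UniXGrad} analysis of a convex $\ell$. Still, convexity is not among the lemma's stated hypotheses.

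Your crossover-radius case split uses the H\"older condition on gradients alone, so it needs no convexity. It also yields the sharper bound $\norm{\nabla f(\x)-\nabla f(\y)}^2 \le \max\{L^2\norm{\x-\y}^2,\, L\delta\}$. The exponent identity $L r_0^2 = \delta$ that you verify is correct, and it is the whole content of the argument.
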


\end{document}